\documentclass{article}

\usepackage{microtype}
\usepackage{graphicx}
\usepackage{subcaption}
\usepackage{comment}
\usepackage{booktabs} 
\usepackage{wrapfig}
\usepackage{hyperref}
\usepackage{xurl}

\usepackage[accepted]{icml2026}

\usepackage{amsmath}
\usepackage{amssymb}
\usepackage{mathtools}
\usepackage{amsthm}
\usepackage{multirow}

\usepackage[capitalize,noabbrev]{cleveref}

\theoremstyle{plain}
\newtheorem{theorem}{Theorem}[section]

\newtheorem{lemma}[theorem]{Lemma}
\newtheorem{corollary}[theorem]{Corollary}
\theoremstyle{definition}

\theoremstyle{remark}

\usepackage[disable,textsize=tiny]{todonotes}

\icmltitlerunning{Adaptive and Asymmetric Surrogate Gradients for Training Deep Spiking Neural Networks}

\begin{document}

\twocolumn[
  \icmltitle{A$^2$SG: Adaptive and Asymmetric Surrogate Gradients \\ 
  for Training Deep Spiking Neural Networks}



  \icmlsetsymbol{equal}{*}
  \icmlsetsymbol{intern}{\textdagger}

  \begin{icmlauthorlist}
    \icmlauthor{Yechan Kang}{equal,kist,ku}
    \icmlauthor{Yongjin Kweon}{equal,intern,lig}
    \icmlauthor{Mingyeong Seo}{kist,ku}
    \icmlauthor{Sohee Park}{kist,ku}
    \icmlauthor{Yeonguk jeon}{intern,sk}
    \icmlauthor{Jongkil Park}{kist}
    \icmlauthor{Hyun Jae Jang}{kist}
    \icmlauthor{Jaewook Kim}{kist}
    \icmlauthor{YeonJoo Jeong}{kist}
    \icmlauthor{Suyoun Lee}{kist}
    \icmlauthor{Seongsik Park}{kist}
  \end{icmlauthorlist}

  \icmlaffiliation{kist}{Center for Semiconductor Technology, Korea Institute of Science and Technology (KIST), Seoul, Republic of Korea}
  \icmlaffiliation{lig}{LIG Defense\&Aerospace, Seongnam, Republic of Korea} 
  \icmlaffiliation{sk}{SK hynix Inc., Icheon, Republic of Korea}
  \icmlaffiliation{ku}{Department of Computer Science and Engineering, Korea University, Seoul, Republic of Korea}

  \icmlcorrespondingauthor{Seongsik Park}{seong.sik.park@kist.re.kr}

  \icmlkeywords{Machine Learning, ICML}

  \vskip 0.3in
]



\printAffiliationsAndNotice{\icmlEqualContribution. \textdagger Work done while interning at KIST.}

\begin{abstract}
Training deep spiking neural networks (SNNs) remains challenging due to sharp loss landscapes and temporal inconsistency caused by surrogate gradients.
To address these challenges, we propose a unified framework: adaptive and asymmetric surrogate gradients \textit{A$^2$SG}.
The adaptive gradients adjust an effective window for spatio-temporal adaptation, reducing spatial gradient variation and maintaining directional consistency of gradients over time.
The asymmetric gradients reflect neuronal dynamics by assigning larger gradients to neurons with higher membrane potentials, and we prove that they yield lower variation than symmetric surrogates.
Our analysis further establishes a direct connection between local gradient variation and the curvature of the loss landscape, providing a principled explanation for how \textit{A$^2$SG} promotes convergence to flatter minima and improves generalization.
We conduct extensive experiments on diverse models, including CNN-based and Transformer-based SNNs, across various tasks such as image classification using both static and neuromorphic datasets, as well as segmentation.
The results demonstrate that \textit{A$^2$SG} consistently improves accuracy and energy efficiency, establishing it as a general and reliable solution for training deep SNNs.
Our code is available at \url{https://github.com/KIST-NCL/A2SG.git}.

\end{abstract}

\section{Introduction}

Spiking neural networks (SNNs) have emerged as energy-efficient next-generation neural networks that operate based on spikes~\citep{maass1997networks}.
In particular, by leveraging the low-power characteristics of SNNs and the superior learning capabilities of deep neural networks (DNNs), deep SNNs have shown the potential for energy-efficient artificial intelligence in various fields~\citep{park2020t2fsnn,kim2020spiking,kim2022beyond,yao2025scaling}. 
Recently, deep SNNs have been applied to applications, including image segmentation~\citep{kim2022beyond,lei2025spike2former}, object detection~\citep{kim2020spiking,su2023deep}, and language modeling~\citep{bal2024spikingbert,10.5555/3692070.3694321}, as well as to diverse model architectures, such as Transformers~\citep{zhou2023spikformer,yao2025scaling}.
These advancements have been largely driven by the adoption of gradient-based training with surrogate gradients~\citep{wu2018spatio,neftci2019surrogate}.

Despite their essential role in training deep SNNs, research on effective surrogate gradient functions remains limited, contributing to the performance gap between DNNs and deep SNNs.
Several studies have attempted to mitigate the mismatch between surrogate and true gradients by adaptively adjusting surrogate functions.
However, existing approaches have predominantly focused on gradient sparsity for adaptation~\citep{lian2023learnable,lin2023efficient} or impose substantial computational overhead~\citep{li2021differentiable}, restricting the training performance or hindering practical deployment.
Furthermore, few studies have designed surrogate functions that take into account the impact of surrogate gradients on generalization performance.

In this work, we introduce adaptive and asymmetric surrogate gradients (\textit{A$^2$SG}) to enhance the training of deep SNNs.
The adaptive component leverages spatio-temporal adaptation, dynamically adjusting the surrogate gradient window to suppress spatial fluctuations of gradients and align their directions across timesteps.
The asymmetric component allocates larger gradients to neurons with greater membrane potential, effectively prioritizing those closer to firing and promoting convergence to flatter minima.
These designs are motivated by our theoretical analysis, which shows that a larger variation in local gradient leads to sharper loss landscapes.
Moreover, we demonstrate that the proposed asymmetric surrogate gradient exhibits lower gradient variation compared to its symmetric counterparts.
In addition, we highlight temporal gradient confusion, which is caused by misaligned gradients across timesteps, as one of the obstacles for stable learning, motivating the need for spatio-temporal adaptation.
By combining these, we establish a unified strategy that stabilizes optimization, promotes convergence to flatter minima, and improves generalization.
We validate our proposed approaches through extensive experiments on both static and neuromorphic datasets, spanning convolutional neural networks (CNNs) and Transformer-based models.
Across all benchmarks, \textit{A$^2$SG} achieves consistent gains in accuracy and energy efficiency, highlighting its effectiveness as a general solution for reliable deep SNN training.

\section{Related Work}

\subsection{Training Deep Spiking Neural Networks}

Recent studies have introduced deep SNNs achieving both high performance and energy efficiency~\citep{tavanaei2019deep}.
In these architectures, leaky integrate-and-fire (LIF) neurons are widely used for their computational simplicity and biological plausibility (Eqs.~\ref{eq:vmem_func}-\ref{eq:reset_func}).
Deep SNNs have been applied to various tasks such as image classification~\citep{hu2021spiking,fang2021deep}, object detection~\citep{kim2020towards,kim2020spiking}, semantic segmentation~\citep{kim2022beyond,lei2025spike2former}, and Transformer-based models~\citep{zhou2023spikformer,yao2025scaling}.
Recent studies have adopted direct training based on spatio-temporal backpropagation (STBP) with surrogate gradients~\citep{wu2018spatio}, as given in Eqs.~\ref{eq:STBP} and~\ref{eq:temporal_grad}. 
This method enables efficient training with fewer timesteps, yet a performance gap remains compared to DNNs.
To mitigate this gap, several studies have focused on addressing the gradient mismatch problem arising from the adoption of surrogate gradients~\citep{li2021differentiable,lian2023learnable}.
However, studies on gradient consistency during training have remained relatively limited.
Especially, in STBP, parameter updates are obtained by aggregating gradient contributions from all timesteps, and inconsistent temporal gradients can generate conflicting signals, a phenomenon we term \textit{temporal gradient confusion}.
This inconsistency hinders stable optimization and degrades learning performance, highlighting the need for strategies that explicitly mitigate it.

\subsection{Surrogate Gradients in Spatio-Temporal Backpropagation (STBP)}

Surrogate gradients have been employed to address the non-differentiability of the spiking function (Eq.~\ref{eq:spike_func}) during error backpropagation.
Although the adoption of surrogate gradients has dramatically improved the performance of deep SNNs, their use remains limited by inconsistencies with the true gradients.
To improve the learning performance, several studies have focused on adjusting the distribution of the membrane potential during training~\citep{guo2022loss,guo2023rmp,guo2023membrane,zhao2025improving}.
Various regularization strategies have been employed, such as maximizing the information within the membrane potential~\citep{guo2022loss} or minimizing quantization errors induced by the spike function~\citep{guo2023rmp}.
In addition, batch normalization~\citep{guo2023membrane} and KL loss~\citep{zhao2025improving} were applied to mitigate the inter-batch and temporal discrepancies in the membrane potential distribution, respectively.
While these studies improved performance by adjusting the membrane potential distribution, they did not fundamentally address the performance degradation inherent to surrogate gradients, highlighting the essential need for advancements in surrogate gradient design.

\begin{figure*}[!t]
    \centering
    \includegraphics[width=0.95\textwidth]{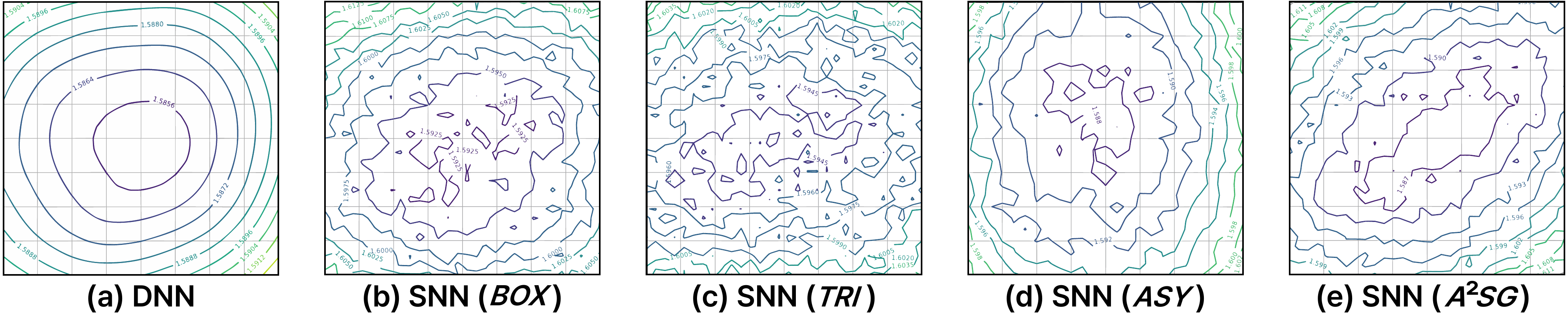}
    \caption{Visualization of the loss landscape in the Conv1 layer of VGG16 trained on CIFAR10. The results demonstrate that $A^2SG$ leads to a flatter region of the loss landscape compared to conventional surrogate gradients.
    }
    \label{fig:limitation_LS}
\end{figure*}

\subsection{Improving Surrogate Gradients Design for Deep SNNs}

Most surrogate gradients adopt static and symmetric function shapes to approximate the Dirac delta, which represents the derivative of the spiking function ($\frac{\partial s[t]}{\partial u[t]}$).
These functions preserve a constant area within the effective window $[V_{\textrm{th}}-\beta, V_{\textrm{th}}+\beta]$, with representative examples being the boxcar (\textit{BOX}) and triangle (\textit{TRI}) functions (Eqs.~\ref{ea:gradient_boxcar}-\ref{ea:gradient_trinangle}.)
To mitigate vanishing gradients, \citet{guo2024take} proposed directly delivering gradients to shallow layers, though gradient mismatch remains unresolved.
Beyond static functions, adaptive strategies have been explored to improve training further~\citep{li2021differentiable,lian2023learnable,lin2023efficient,ijcai2025p464}.
Dspike~\citep{li2021differentiable} employs finite-difference gradients to align surrogates with true gradients via cosine similarity, though its high computational cost limits scalability.
To reduce this overhead, other studies propose adjusting the effective window width to regulate gradient sparsity during training~\citep{lian2023learnable,lin2023efficient,ijcai2025p464}.
However, most of these approaches have focused on sparsity control to avoid gradient vanishing or explosion, relying on sparsity as an indirect indicator of learning quality.
Moreover, existing efforts have primarily concentrated on symmetric functions that mimic differential operators, while the impact of function shape itself remains underexplored.

\subsection{Flat Minima and Generalization}
Although gradient-based direct learning has significantly improved the performance, generalization remains a critical challenge for deep SNNs.
Numerous studies have reported that models converging to flat minima tend to achieve better generalization~\citep{hochreiter1997flat,keskar2017large,chaudhari2019entropy}.
Flatness is commonly assessed via the Hessian spectrum~\citep{ghorbani2019hessian}, but direct Hessian computations are intractable for large models.
As a practical alternative, the Fisher information matrix (FIM) (Eq.~\ref{eq:FIM}) is frequently used, since its eigenvalues are known to capture the curvature of the loss landscape, with smaller values indicating flatter minima~\citep{liao2018approximate,karakida2019universal,martens2020new,kim2022fishersam}.
In addition to measurement, several training procedures aim to encourage convergence to flat minima.
These include entropy-based biasing toward wide valleys~\citep{chaudhari2019entropy} and curvature-aware updates using FIM-based criteria~\citep{kim2022fishersam}.
Related studies have also linked sharp minima to large-batch training and poor generalization~\citep{keskar2017large}, further motivating flatness-oriented training strategies.
Overall, prior works suggest that guiding optimization toward flat minima is an effective approach for improving generalization.

\section{Sharp Loss Landscape from Surrogate Gradient Learning}
Deep SNNs trained with surrogate gradients tend to converge to sharper regions of the loss landscapes than DNNs. 
The flatness of the loss landscape is characterized by its curvature, quantified by the Hessian $\mathbf{H} = \nabla_{\mathbf{w}}^2 L(\mathbf{w})$, with respect to the parameter vector $\mathbf{w}$.
For clarity, we analyze the second derivative of the loss with respect to a single weight, which corresponds to a diagonal entry of the Hessian.
By the chain rule, the second derivative of the loss with respect to a weight $w$ can be described as 
\begin{equation} \label{eq:second_deriv}
\frac{\partial^2L}{\partial w^2} = \frac{\partial^2L}{\partial \phi^2}(\phi'(u)x)^2 + \frac{\partial L}{\partial \phi}\phi''(u)x^2 \textrm{,}
\end{equation}
where $x$ denotes the pre-synaptic activation and $u=wx$.
For DNNs with an activation function $\phi(u)$, assume the first and second derivatives are bounded, i.e., $|\phi'(u)| \leq c_1$ and $|\phi''(u)| \leq c_2$ for finite constants $c_1$ and $c_2$, which yields
\begin{equation} \label{eq:second_deriv_size_dnn}
\left| \frac{\partial^2 L}{\partial w^2} \right|_{DNN} = \mathcal{O}(x^2) \textrm{.}
\end{equation}

For deep SNNs, a surrogate gradient function $f(u)$ introduces an effective window of width $\beta$ to approximate the non-differentiable spike function. 
As shown in Sec.~\ref{app:symmetric_scaling}, any symmetric surrogate with fixed area 
satisfies
\begin{equation}
\begin{aligned}
\|H'\|_\infty=\|f\|_\infty=\Omega(\beta^{-1}) \quad
\textrm{and} \quad \\
\|H''\|_\infty=\|f'\|_\infty=\Omega(\beta^{-2}).
\end{aligned}
\end{equation}
Substituting them into the chain-rule expansion (Eq.~\ref{eq:second_deriv}) yields
\begin{equation} \label{eq:second_deriv_size_snn}
\left| \frac{\partial^2 L}{\partial w^2} \right|_{SNN} = \Omega(\frac{x^2}{\beta^2}) \textrm{.}
\end{equation}

Most prior works adopt a \emph{narrow effective window} ($\beta<1$) to better approximate the Dirac delta around the threshold, which empirically improves training stability and convergence~\citep{wu2018spatio, neftci2019surrogate}.
However, under this condition, Eq.~\ref{eq:second_deriv_size_snn} shows that the Hessian magnitude is amplified by a factor of $1/\beta^{2}$ in surrogate-trained SNNs, driving optimization toward sharper regions than in DNNs.
Note that the surrogate gradient determines where the model converges on the loss landscape, which is defined by the loss function and the model architecture.
In addition, the binary and temporally sparse nature of spikes concentrates gradients and increases their variation, further biasing convergence toward sharp regions, as shown in Fig.~\ref{fig:limitation_LS}.
Consistent with this analysis, \textit{TRI} converges to a sharper region of the loss landscape than \textit{BOX} because, under area normalization, its steeper slopes imply larger curvature in the gradient updates.
This is derived in Sec.~\ref{app:ls_land_tri_box_comp} and confirmed empirically through Fig.~\ref{fig:limitation_LS}-(b) and (c).

\begin{figure*}[t]
    \centering
    \includegraphics[width=0.95\textwidth]{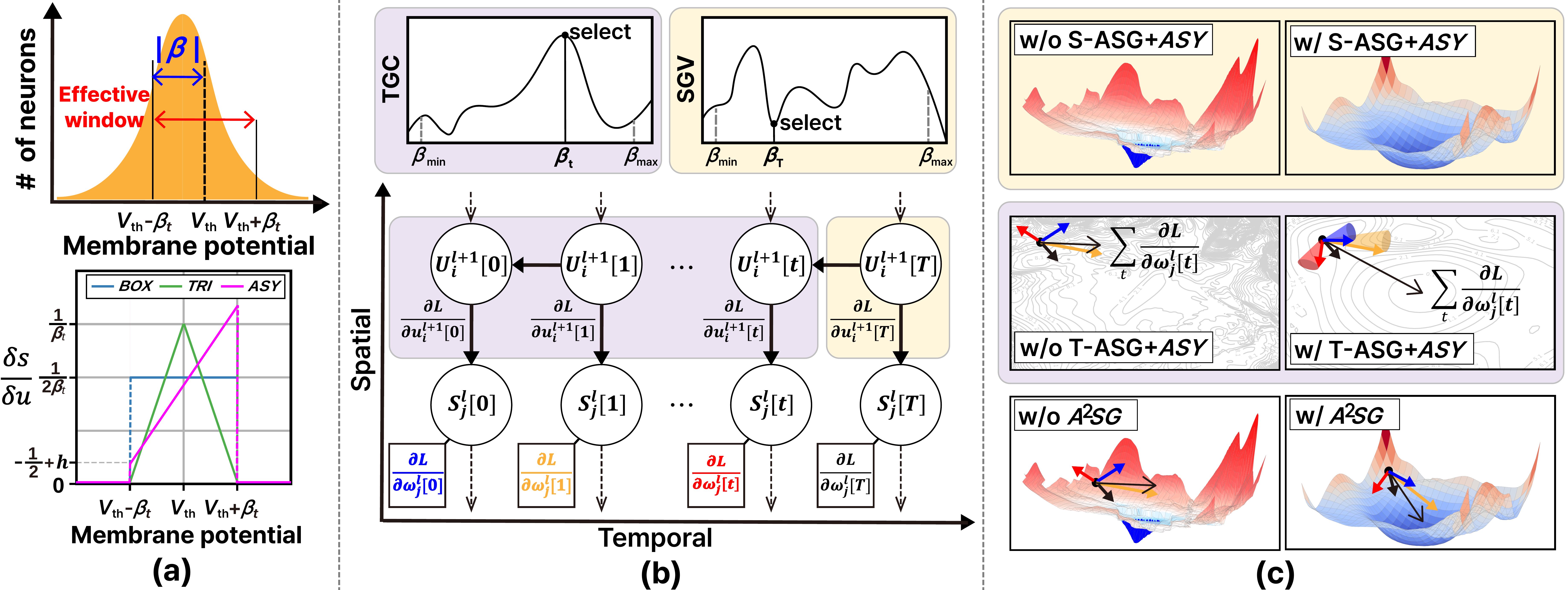}
    \caption{ Overview of the proposed $\textit{$A^2$SG}$ framework. (a) Effective window (red) is modulated by the parameter $\beta$ (blue); three shapes (\textit{BOX}, \textit{TRI}, and \textit{ASY}) are shown.
    (b) Temporal adaptive surrogate gradient (T-ASG) selects $\beta_t$ that maximizes temporal gradient consistency (TGC). While spatial adaptive surrogate gradient (S-ASG) selects $\beta_t$ that minimizes spatial gradient variation (SGV).
    (c) S-ASG+\textit{ASY} promotes flat minima, T-ASG+\textit{ASY} stabilizes gradient directions, and \textit{A$^2$SG} achieves robust convergence by combining both.
    }
    \label{fig:overview}
\end{figure*}

\section{\textit{A$^2$SG}: Adaptive and Asymmetric Surrogate Gradients}

We introduce \textit{A$^2$SG} to address two limitations in surrogate-based SNN training: sharp loss landscapes and temporal gradient confusion.
The adaptive component consists of two policies: spatial and temporal adaptations to address sharp loss landscapes and temporal gradient confusion, respectively.
The spatial adaptation adjusts the surrogate effective window to reduce the variance of the local gradient ($\frac{\partial L}{\partial u}$), encouraging convergence to flatter minima.
The temporal adaptation aligns local gradients per timestep to mitigate temporal inconsistency.
We demonstrate that the dispersion of local gradients affects the curvature of the loss landscape and that temporal gradients are formed by aggregating these local gradients.
Thus, adjusting $\beta$ during training effectively reduces spatial variability and enhances temporal alignment, improving generalization under non-stationary dynamics.

In addition, we analyze how the functional shape of the surrogate affects local gradient variation and introduce an asymmetric surrogate that considers neuronal dynamics.
By allocating larger gradients to neurons with greater accumulated membrane potential, the asymmetric form further suppresses the gradient variance and alleviates sharpness.
When utilized together, the spatio-temporal adaptation and the asymmetric components complement each other, effectively addressing the significant limitations of surrogate-gradient learning.
They stabilize the optimization process and yield flatter solutions with improved generalization.
The overall framework of \textit{A$^2$SG} is illustrated in Fig.~\ref{fig:overview}.

\subsection{Relation between Variation of Local Gradient and Flatness of Loss Landscape}


To analyze how the variability of local gradients affects the flatness of the loss landscape, we focus on the coefficient of variation (CV) of the local gradients. For notational simplicity, we consider a fully connected (FC) layer, but the principle naturally extends to other neural network layers and architectures.
Let $W \in \mathbb{R}^{m \times n}$ denote the weight matrix of the FC layer. The gradient with respect to $W$, vectorized, is given by:
$\textbf{g} = \operatorname{vec}\left( \frac{\partial L}{\partial W} \right) = \mathbf{a_{\mathrm{in}}} \otimes \boldsymbol{\delta} \textrm{,}$
where $\textbf{a}_{\mathrm{in}} \in \mathbb{R}^n$ is the input vector and $\delta \in \mathbb{R}^m$ is the backpropagated error vector.
The FIM is then defined as:
\begin{equation} \label{eq:FIM}
\textbf{F} = \mathbb{E}[\mathbf{g} \mathbf{g}^\top] = \mathbb{E} \left[ (\mathbf{a}_{\mathrm{in}} \otimes \boldsymbol{\delta})(\mathbf{a}_{\mathrm{in}} \otimes \boldsymbol{\delta})^\top \right] \textrm{.}
\end{equation}
For analytical clarity, the $\delta$ can be decomposed into its mean and a zero-mean fluctuation term:
$
\boldsymbol{\delta} = \mu \mathbf{1} + \boldsymbol{\epsilon}
$
, where the $\mu$ is the mean of $\delta$, $\mathbf{1}$ is the all-ones vector and $\epsilon$ denotes a small perturbation. Substituting this into the definition of $\mathbf{F}$, we obtain:
\begin{equation} \label{eq:FIM_delta}
\mathbf{F} = \mu^2 \mathbb{E} \left[ (\mathbf{a}_{\mathrm{in}} \otimes \mathbf{1})(\mathbf{a}_{\mathrm{in}} \otimes \mathbf{1})^\top \right] + \mathbf{R} = \mathbf{F_0} + \mathbf{R}\textrm{,}
\end{equation}
where $\mathbf{F_0}$ is a rank-1 matrix and the perturbation $\mathbf{R}$ is bounded as $||R||_2 \leq c\mu \mathrm{CV}(\delta)$ for some constant $c$.
This shows that as $\mathrm{CV}(\delta)$ becomes smaller, the FIM approaches the rank-1 matrix $F_0$, indicating that the loss landscape has a dominant curvature direction. By matrix perturbation theory ~\citep{greenbaum2020first}, the largest eigenvalue of the FIM is bounded as:
\begin{equation} \label{eq:Maximum_eigenvalue}
\begin{aligned}
\lambda_{\max}(F) \leq \mu^2 \lambda_{\max} \left( \mathbb{E} \left[ (a_{\mathrm{in}} \otimes \mathbf{1})(a_{\mathrm{in}} \otimes \mathbf{1})^\top \right] \right) \\+ c \mu^2 \mathrm{CV}(\delta)\textrm{,}
\end{aligned}
\end{equation}
where $\lambda_{\max}(\cdot)$ denotes the largest eigenvalue operator.  
Therefore, the largest eigenvalue grows linearly with $\mathrm{CV}(\delta)$, and reducing the CV of local gradients directly leads to a flatter loss landscape.

\begin{figure*}[t]
    \centering
    \includegraphics[width=0.95\textwidth]{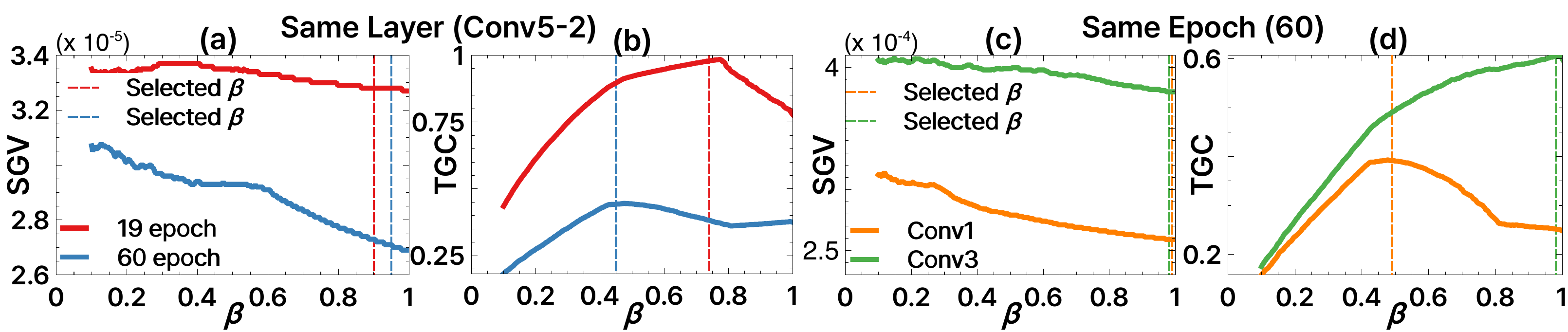}
    \caption{Graphs of SGV and TGC as a function of $\beta$. (a) and (b) represent SGV and TGC, at different epochs for the Conv5-2 layer, while (c) and (d) compare Conv1 and Conv3 at epoch 60. 
    Dashed lines indicate the selected $\beta$ through the proposed adaptive method in each training case.}
    \label{fig:beta_vs_sgv_tgc}
    \vspace{-0.5em}
\end{figure*}

\subsection{Spatio-Temporal Adaptive Surrogate Gradients (ST-ASG)}

As discussed in the previous section, reducing local gradient variability alleviates the sharpness of the loss landscape.
In addition, temporal gradient confusion can be mitigated by promoting alignment of local gradients across timesteps.
To achieve this, we introduce two metrics: spatial gradient variation (SGV) and temporal gradient consistency (TGC), which guide spatial and temporal adaptation, respectively.
SGV is defined as follows:
\begin{equation} \label{eq:SGV}
\operatorname{SGV}^{(l)}[T]: = \frac{\operatorname {Var}(\boldsymbol{\delta}^{l}[T])}{\operatorname {Mean}(|\boldsymbol{\delta}^{l}[T]|)} \textrm{,}
\end{equation}
where $\delta^{(l)}[T]$ denotes the backpropagated error in layer $l$ at the last timestep $T$.
To improve computational efficiency in practice, SGV employs the variance rather than the standard deviation in the denominator, differing from the conventional CV.
On the other hand, TGC at timestep $t$ is defined as the cosine similarity between the local gradients at adjacent timesteps:
\begin{equation} \label{eq:TGC}
\operatorname{TGC}^{(l)}[t]: = \operatorname{cos}(\boldsymbol{\delta}^{(l)}[t], \boldsymbol{\delta}^{(l)}[t+1]), t \in [1, T-1] \textrm{.}
\end{equation}
With these definitions, spatial adaptation is applied using SGV to suppress local gradient variation at the last timestep, where activations and gradients are relatively stable.
In this case, the adaptation objective is to minimize SGV.
Conversely, temporal adaptation is guided by TGC, which promotes alignment of local gradients between adjacent timesteps.
In this context, the goal of adaptation is to maximize TGC.
Since spatial adaptation is applied at the last timestep, it provides a stable reference direction for the temporal gradients.
Temporal adaptation, applied to preceding timesteps ($t<T$), then aligns the local gradients with this reference.
Spatio-temporal adaptation is achieved by anchoring the global gradient trajectory to the stable direction obtained at the last timestep while enforcing temporal consistency across earlier timesteps.

To realize this spatio-temporal adaptation in practice, we propose to adjust the width ($\beta$) of the effective window.
Our method is motivated by the observation that both SGV and TGC can be expressed as functions of $\beta$.
As illustrated in Fig.~\ref{fig:beta_vs_sgv_tgc}, these functions vary unpredictably with the training dynamics: the same layer exhibits different functional shapes across epochs, and even within a single epoch, distinct patterns may emerge across layers.
To robustly identify suitable values of $\beta$ under such variability, we employ a Bayesian search strategy. 
Implementation details of the adaptive method and the Bayesian optimization procedure are provided in Secs.~\ref{app:adaptive} and~\ref{app:beta_search}, respectively.

\subsection{Asymmetric Surrogate Gradients}
Symmetric functions, such as \textit{TRI} and \textit{BOX}, have been widely used as surrogate gradients in STBP, providing gradients based solely on the distance between a neuron’s membrane potential and the threshold.
However, they fail to account for neuronal dynamics such as integration and firing.
Thus, the relative magnitude of the accumulated membrane potential is not effectively reflected in the training process.
To solve this problem, we propose an asymmetric (\textit{ASY}) surrogate, defined as 
\begin{equation} \label{eq:asymmetric_sg_function}
\frac{\partial s}{\partial u} = f(u,\beta) = \frac{1}{2\beta}\cdot (u-V_\textrm{th})+h, 
\quad u\in[V_\textrm{th}-\beta, V_{\textrm{th}}+\beta] \textrm{,}
\end{equation}
where $h$ is a gradient-bias term added across the effective window that controls the overall magnitude of the surrogate gradient.
A smaller $h$ reduces the gradient magnitude, suppressing the gradients of low-potential neurons and yielding sparser gradients, whereas a larger $h$ raises the gradient across the window.
By assigning larger gradients to neurons with more strongly accumulated membrane potential, the asymmetric form reflects each neuron's dynamic behavior in the learning process.

This neuron's behavior-aware design also leads to a reduction in gradient variability.
By concentrating gradient values in regions where membrane potential is high (and spike likelihood is greater), the \textit{ASY} function avoids spreading gradients across irrelevant low-activity regions.
This focused gradient allocation reduces unnecessary variability, resulting in more stable training.

Theoretical results formalize this intuition as follows.
\begin{theorem}[CV-Minimizing Symmetric Function under Area and Boundary Constraints]~\label{th:cv_min_syn_func}
Let $f_{\mathrm{asy}}(u)$ and $f_{\mathrm{sym}}(u)$ be asymmetric and symmetric surrogate gradient functions defined over $[a, b]$, satisfying the boundary condition $f(a)=f(b)=0$, non-negativity $f(u)\geq0$, and area constraint $\int_a^b f(u)\,du = c$, where $a=\theta-\beta$ and $b = \theta + \beta$.
Suppose the membrane potential $u \sim \mathcal{N}(\mu, \sigma^2)$ with $\mu < a$, so that $p(u)$ is decreasing on $[a, b]$.
Then the unique function $f^*$ that minimizes the CV over all such admissible functions is the symmetric triangular function.
\end{theorem}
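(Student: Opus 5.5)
The plan is to make the objective explicit, reduce it to a constrained quadratic problem in $f$, and let the constraints select the triangle. The proof then has two parts. The first shows that within the symmetric admissible class the stationarity conditions, together with the boundary, symmetry and area constraints, force a tent shape. The second shows that tilting this tent toward $b$, as the asymmetric functions do, changes the CV, and computes the sign of that change.

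For an admissible $f$, define the random gradient $G=f(U)$ with $U\sim\mathcal{N}(\mu,\sigma^2)$, and work with its moments on the window $[a,b]$:
\begin{equation*}
m_1(f)=\int_a^b f(u)p(u)\,du,\qquad m_2(f)=\int_a^b f(u)^2p(u)\,du,\qquad \mathrm{CV}(f)^2=\frac{m_2(f)}{m_1(f)^2}-1 .
\end{equation*}
Minimizing the CV is then equivalent to minimizing the ratio $m_2/m_1^2$. This ratio is invariant under rescaling of $f$, so the area constraint $\int_a^b f=c$ only fixes the normalization. Because $\mu<a$ and $\beta$ is small compared with $\sigma$, I will linearize the density on the window as $p(u)\approx p_0\bigl(1-\kappa(u-\theta)\bigr)$ with $\kappa>0$. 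This makes the decreasing shape of $p$ explicit, and every moment becomes a polynomial in the first few moments of $f$ about $\theta$.

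In the symmetric case, write $f(\theta+s)=f(\theta-s)=g(s)$ for $s\in[0,\beta]$. All odd terms in $\kappa$ then cancel, leaving
\begin{equation*}
m_1=2p_0\int_0^\beta g,\qquad m_2=2p_0\int_0^\beta g^2 .
\end{equation*}
I will form the Lagrangian with multipliers for the area constraint and for the boundary condition $g(\beta)=0$, and restrict to the regime forced by these constraints. In that regime the admissible functions are continuous, are pinned to $0$ at $s=\beta$, and must be monotone nonincreasing in $s$. Monotonicity is required because any bump away from $\theta$ can be moved toward $\theta$ (a rearrangement step) without changing the area, while strictly lowering the CV once the higher-order term in $\kappa$ is reinstated.

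Next I would apply a second-order (convexity) argument along perturbations that preserve all the constraints. On the resulting class of nonincreasing, pinned, fixed-area profiles, the stationary profile is affine in $s$: it is the unique solution of the linear Euler--Lagrange equation that vanishes at $s=\beta$. Solving it gives $g(s)=\frac{c}{\beta^2}(\beta-s)$, the symmetric triangle. Uniqueness would follow from strict convexity of $m_2$ restricted to the affine slice determined by the constraints.

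For the asymmetric side, I parametrize $f_{\mathrm{asy}}=f_{\mathrm{tri}}+\eta\,q$, where $q$ is odd about $\theta$, has zero area, and vanishes at $a$ and $b$. I then expand $\mathrm{CV}^2$ to second order in $\eta$, keeping the $\kappa$-terms. The first-order term is proportional to $\kappa\int q(u)(u-\theta)\,du$ and enters through $m_1$. The second-order term is proportional to $\int q^2$. I would then compare the two signs directly to place the triangle relative to its tilted neighbours.

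The main obstacle is the existence and uniqueness step. Over all nonnegative, fixed-area functions that vanish at the endpoints, trapezoids that approach the box push $m_2/m_1^2$ toward its infimum, so the minimum is not attained without more structure. I expect the argument to stand or fall on the claim that monotonicity together with the endpoint pinning confines the stationary point to the affine slice. This is where I would concentrate, first checking that the second variation is indeed strictly positive on that slice.
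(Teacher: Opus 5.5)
\paragraph{The gap.} The step that fails is the Euler--Lagrange step. It cannot be repaired, because the statement itself is false.

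\paragraph{Why the stationary profile is constant, not affine.} Write $m_1=\int_0^\beta wg$ and $m_2=\int_0^\beta wg^2$, with $w(s)=p(\theta+s)+p(\theta-s)$; in your linearized model $w\equiv 2p_0$. The functional $m_2/m_1^2$ contains no derivatives of $g$, so its first variation is pointwise:
\[
\delta\Bigl(\frac{m_2}{m_1^2}\Bigr)=\frac{2}{m_1^2}\int_0^\beta w(s)\Bigl(g(s)-\frac{m_2}{m_1}\Bigr)\,\delta g(s)\,ds .
\]
Testing with $\delta g=g$ shows that the area multiplier vanishes; this is the scale invariance you noted. Stationarity therefore forces $g\equiv m_2/m_1$, a constant rather than an affine profile. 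The pinning $g(\beta)=0$ is a single point and never enters the equation.

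\paragraph{The triangle is not the minimizer.} Cauchy--Schwarz gives $\bigl(\int_a^b fp\bigr)^2\le P\int_a^b f^2p$ with $P:=\int_a^b p$. Hence $\mathrm{CV}(f)^2\ge 1/P-1$, with equality iff $f$ is a.e.\ constant on $[a,b]$. The trapezoids you mention approach this bound, while the triangle stays strictly above it. In your linearized model, $m_2/m_1^2=2/(3p_0\beta)$ for the triangle, against $1/(2p_0\beta)$ in the trapezoid limit.

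These trapezoids are continuous, symmetric, pinned, and nonincreasing in $|u-\theta|$. Your monotonicity restriction therefore does not exclude them: the obstacle you flagged is not a failure of attainment but a counterexample. Strict convexity of $m_2$ on the constraint slice does give uniqueness, but of the box (up to null sets), and among continuous pinned profiles the infimum is not attained at all.

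The direction is in fact reversed. Since the $\kappa$-terms cancel for symmetric $f$, Berwald's inequality $\int_0^\beta g^2\le\frac{4}{3\beta}\bigl(\int_0^\beta g\bigr)^2$ for concave $g\ge 0$ shows that among concave profiles the triangle \emph{maximizes} the CV in your model.

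\paragraph{The tilting step.} Your tilting analysis is not needed for the symmetric-class statement. Over a class that also contains asymmetric functions, a nonzero first-order term $\propto\kappa\int q(u)(u-\theta)\,du$ would show that the triangle is not even stationary there.

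\paragraph{The paper's proof has the same flaw.} Its reduction to $[a,m]$ replaces $p(a+b-u)$ by $p(u)$ in the change of variables. That requires $p$ to be symmetric about $\theta$, which contradicts $\mu<a$; your reduction, with the odd $\kappa$-terms cancelling, is the correct one. It then asserts without argument that $\int_a^m g^2p/\bigl(\int_a^m gp\bigr)^2$ is minimized by $g(u)=\alpha(u-a)$. That is precisely the claim Cauchy--Schwarz refutes, since the minimizer is the constant $g$.

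So neither your plan nor the paper's can reach the stated conclusion. A true version needs an extra constraint that excludes box-like profiles. For instance, a slope bound $|f'|\le 4c/(b-a)^2$ makes the triangle the only admissible function, so the claim becomes trivial.
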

\begin{proof}
Please refer to Thm.~\ref{th:app_cv_min_syn_func}.
\end{proof}
\vspace{-1em}
Thm.~\ref{th:cv_min_syn_func} shows that, under area and boundary constraints, the symmetric function minimizing gradient CV is a triangular function.
This sets a lower bound of CV for symmetric functions under the given constraints.
Based on this fact, we verify that asymmetric has a lower CV than symmetric.
\begin{theorem}[CV Comparison of Asymmetric and Symmetric Surrogates]~\label{th:cv_comp}
Let $f_{\mathrm{asy}}(u)$ and $f_{\mathrm{sym}}(u)$ be asymmetric and symmetric surrogate gradient functions defined over $[a, b]$ under the same constraints as in Thm.~\ref{th:cv_min_syn_func}
Then, under a linear approximation of the Gaussian, where $L=b-a$ and $\kappa=a-\mu$, we have:
\[
\mathrm{CV}_{\mathrm{asy}} < \mathrm{CV}_{\mathrm{sym}} \quad \text{if } L \kappa> \sigma^2.
\]
\end{theorem}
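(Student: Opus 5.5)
By Thm.~\ref{th:cv_min_syn_func}, the symmetric triangle $f_{\mathrm{tri}}$ attains the smallest CV among admissible symmetric surrogates. It therefore suffices to show $\mathrm{CV}_{\mathrm{asy}} < \mathrm{CV}_{\mathrm{tri}}$ when $L\kappa>\sigma^2$. Here the CV of a surrogate $f$ means the coefficient of variation of the random variable $f(u)$ with $u\sim\mathcal{N}(\mu,\sigma^2)$, i.e.\ $\mathrm{CV}^2 = \mathbb{E}[f^2]/\mathbb{E}[f]^2 - 1$, with expectations over $p(u)$ restricted to $[a,b]$.

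I will take $f_{\mathrm{asy}}$ to be the admissible asymmetric shape matching Eq.~\ref{eq:asymmetric_sg_function} under the constraints $f(a)=f(b)=0$, $f\ge 0$, area $c$. With $h$ tuned, this is essentially a right-skewed triangle whose peak sits near $b$ before dropping to zero, so that mass is placed at high potentials. The clean choice is the limiting ramp $f_{\mathrm{asy}}(u)=\tfrac{2c}{L^2}(u-a)$ on $[a,b]$, with the boundary condition at $b$ attained as a limit of peaks $b-\eta$, $\eta\to 0$; the strict inequality survives for small $\eta$ by continuity.

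Next I linearize the density on $[a,b]$. Expanding $p$ around $a$ gives $p(u)\approx p(a)\bigl(1-\tfrac{\kappa}{\sigma^2}(u-a)\bigr)$, since $p'(a)/p(a) = -(a-\mu)/\sigma^2 = -\kappa/\sigma^2$. Substituting $x=(u-a)/L\in[0,1]$ and $r=L\kappa/\sigma^2$, the weight becomes proportional to $1-rx$. Both CVs are scale-invariant, so the constants $p(a)$ and $c$ drop out. It then suffices to compute the moments
\[
M_k(g)=\int_0^1 g(x)^k(1-rx)\,dx,\qquad k=1,2,
\]
for $g_{\mathrm{tri}}(x)=1-|2x-1|$ and $g_{\mathrm{asy}}(x)=x$, and to compare the ratios $M_2/M_1^2$. (If the normalization of $p$ over $[a,b]$ is required, the same factor $\int_0^1(1-rx)\,dx$ multiplies $M_2/M_1^2$ in both cases, so the comparison is unaffected.) These are polynomial integrals in $r$. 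For the triangle, the symmetry $x\mapsto 1-x$ gives $M_1=\tfrac12(1-\tfrac r2)$ and $M_2=\tfrac13(1-\tfrac r2)$. For the ramp, $M_1=\tfrac12-\tfrac r3$ and $M_2=\tfrac13-\tfrac r4$.

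The comparison reduces to the sign of a polynomial in $r$, namely
\[
D(r)=\frac{M_2^{\mathrm{tri}}}{(M_1^{\mathrm{tri}})^2}-\frac{M_2^{\mathrm{asy}}}{(M_1^{\mathrm{asy}})^2},
\]
after clearing the positive denominators; these are positive as long as the linearization keeps the weight positive, roughly $r<1$. The main obstacle is this sign analysis. I need to show that $D(r)>0$ exactly when $r>1$, i.e.\ when $L\kappa>\sigma^2$. I expect this not to fall out cleanly with the pure ramp, because at $r=0$ the ramp has the larger CV. The threshold $r=1$ may therefore depend on the exact form of $f_{\mathrm{asy}}$ (the value of $h$ or the peak location) and on how the linearized weight is normalized.

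My first attempt would be to factor the numerator of $D$ and look for a factor $(r-1)$. If that fails, I would reparametrize the asymmetric peak location $\gamma\in(1/2,1]$ and choose $\gamma$ so that the crossover occurs at $r=1$, or equivalently interpret $f_{\mathrm{asy}}$ as the paper's specific $h$-dependent form. I would then verify positivity of the remaining factor on the relevant range of $r$. Once $D(r)>0$ is established, $\mathrm{CV}_{\mathrm{asy}}<\mathrm{CV}_{\mathrm{tri}}\le \mathrm{CV}_{\mathrm{sym}}$ follows from Thm.~\ref{th:cv_min_syn_func}.
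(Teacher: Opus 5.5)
The gap is the step you flag yourself as the main obstacle, and it cannot be closed. Your moments are correct, and they give
\[
D(r)=\frac{8}{3(2-r)}-\frac{3(4-3r)}{(3-2r)^2}=\frac{r(5r-6)}{3(2-r)(3-2r)^2}.
\]
So there is no factor $(r-1)$. Under the linearized weight, the ramp has the smaller CV only for $r>\tfrac{6}{5}$ (and $r<\tfrac{3}{2}$, beyond which $M_1^{\mathrm{asy}}\le 0$). For $\sigma^2<L\kappa\le\tfrac{6}{5}\sigma^2$ the claimed inequality fails even against the triangle. Also, at $r=0$ the two CVs are equal rather than ordered.

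The structural reason is this. Under the weight $1-rx$, the value $f_{\mathrm{tri}}(u)$ is uniformly distributed on its range, with constant density $1-r/2$. The ramp's value instead has density $1-rv$, which shifts mass toward $0$. That raises the CV for every $r\in(0,1]$, which is exactly the range where $1-rx$ is a genuine density. The inequality therefore only appears for $r>1$, where, as you note, the linearized weight is negative on part of $[a,b]$.

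Your fallback does not help. For a right-skewed triangle with peak at $\gamma\in(1/2,1]$, the value density is $1-r(1-\gamma)-r(2\gamma-1)v$. The crossover then sits at $r=6/(4\gamma+1)\ge\tfrac{6}{5}$, so no admissible peak location moves it to $r=1$. Switching to the paper's offset form $\frac{1}{2\beta}(u-V_{\mathrm{th}})+h$ gives up $f(a)=f(b)=0$, which changes the statement rather than proving it.

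The paper's proof takes the same route as yours: ramp versus symmetric triangle, first-order Taylor expansion of $p$ at $a$, and comparison of squared CVs. It reaches $L\kappa>\sigma^2$ only through computational errors.
\begin{itemize}
\item It integrates both halves of the triangle against $A-Bx$, as if the linearized weight were symmetric about $\theta$. This gives $\mathbb{E}[f_{\mathrm{sym\_tri}}]=c(A-\tfrac{1}{3}BL)$ instead of the correct $c(A-\tfrac{1}{2}BL)$.
\item It also loses a factor of $2$ in the second moment, which should be $\tfrac{4c^2}{3L}(A-\tfrac{1}{2}BL)$.
\end{itemize}
These corrected values are exactly your $M_1,M_2$. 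The paper's final expression $\tfrac{3}{4}\bigl(\tfrac{L\kappa-2\sigma^2}{2L\kappa-3\sigma^2}\bigr)^2$ does not even support its stated condition: it equals $4/3$ at $L\kappa=1.2\sigma^2$ and blows up at $L\kappa=1.5\sigma^2$. So there is no missing trick to find along this route.

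Your reduction $\mathrm{CV}_{\mathrm{tri}}\le\mathrm{CV}_{\mathrm{sym}}$ also inherits Thm.~\ref{th:cv_min_syn_func}, which is itself unsound. By Cauchy--Schwarz, $\mathrm{CV}^2\ge 1/\int_a^b p-1$, with equality only for constant $f$. Symmetric trapezoids with steep sides approach this bound and beat the triangle.
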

\begin{proof}
Please refer to Thm.~\ref{th:app_cv_comp}.
\end{proof}
\vspace{-1em}
Thms.~\ref{th:cv_min_syn_func} and~\ref{th:cv_comp} demonstrate that the proposed asymmetric surrogate gradient, by incorporating membrane potential accumulation, achieves a lower CV than its symmetric counterparts.
These theoretical findings indicate that the asymmetric surrogate gradient not only better reflects neuronal dynamics but also promotes more stable and efficient learning, thereby facilitating convergence to flatter minima.
Experimental validations of Thms.~\ref{th:cv_min_syn_func} and~\ref{th:cv_comp} are provided in Fig.~\ref{fig:conv5-2_SGV_TGC_FIM_beta}-(a), where \textit{ASY} function consistently exhibits lower gradient variance than \textit{TRI} function.
Fig.~\ref{fig:theoretical_validation} presents \textit{L$\kappa$} and $\sigma^2$ of Conv1 and Conv5-2 during training on VGG16 with CIFAR10.
The graphs show that the condition $L\kappa > \sigma^2$ in Thm.~\ref{th:cv_comp} becomes satisfied across layers as training progresses.
This confirms that our assumption is supported by the experimental results.

\section{Experiments}

\setlength{\textfloatsep}{0pt}
\begin{table}[t]
    \centering
    \setlength{\tabcolsep}{1.5pt}
    \renewcommand{\arraystretch}{0.7}
    \caption{Comparison with other adaptive surrogate gradient methods (\textit{ARC}: Arctangent, ST-ASG: Our spatio-temporal adaptive surrogate gradient).}
    \label{tab:results_comp_adaptive}
    {
    \resizebox{\columnwidth}{!}{%
        \begin{tabular}{ccclcc}
            \toprule
             & \multirow{2}{*}{Architectures} & \multirow{2}{*}{Functions} & \multirow{2}{*}{Methods}  & Time & Acc.\\
             &&&&Steps & (\%) \\
            \midrule
            \multirow{6}{*}[-4pt]{\centering\rotatebox[origin=c]{90}{\textbf{CIFAR10}}}
            &ResNet18
             &\textit{ARC} &Dspike~\citep{li2021differentiable}  &4 & 93.66$\pm$0.05 \\\cmidrule{2-6}
             &\multirow{5}{*}{ ResNet19}
             & \textit{TRI} & CPNG~\citep{lin2023efficient}    &6 & 94.10$\pm$0.05 \\
            && \textit{TRI} & ST-ASG & 4 & 96.41$\pm$0.16 \\
            && \textit{BOX} & LSG~\citep{lian2023learnable}   &  4 & 95.17$\pm$0.05 \\
            & & \textit{BOX}& ST-ASG & 4 & 96.01$\pm$0.05 \\
            & &\textit{ASY} & \textit{\textbf{\boldmath{$A^2$}SG}} &  \textbf{4} & \textbf{96.74$\pm$0.05} \\

            \midrule
            \multirow{6}{*}[-4pt]{\centering\rotatebox[origin=c]{90}{\textbf{CIFAR100}}}
            & ResNet18
            & {\textit{ARC}} & Dspike~\citep{li2021differentiable}  & {4} & {73.35$\pm$0.14} \\
            \cmidrule{2-6}
            
            & \multirow{5}{*}{ ResNet19}
            & {\textit{TRI}}& CPNG~\citep{lin2023efficient}   & {6} &{75.37$\pm$0.05} \\
            && \textit{TRI}& ST-ASG  & 4 & 80.46$\pm$0.06 \\
            & & {\textit{BOX}}&LSG~\citep{lian2023learnable}    & {4} & {76.85$\pm$0.10} \\
            && \textit{BOX}& ST-ASG  & 4 & 78.60$\pm$0.16 \\
            && \textit{ASY} &\textit{\textbf{\boldmath{$A^2$}SG}}  & \textbf{4} & \textbf{81.05$\pm$0.05} \\
            
            \bottomrule
        \end{tabular}
    }}
\end{table}

We evaluated the effectiveness of the proposed method on various datasets, including static image datasets such as CIFAR10, CIFAR100~\citep{krizhevsky2009learning}, and ImageNet~\citep{deng2009imagenet} as well as a neuromorphic dataset such as CIFAR10-DVS~\citep{li2017cifar10}. 
We conducted experiments on both CNN and Transformer models. 
To further show the versatility of our method, we also evaluated it on the ADE20K~\citep{zhou2017scene} dataset for semantic segmentation.
For more details about the experimental setup, please refer to Secs.~\ref{app:computing} and ~\ref{app:experimental_setup}.

\begin{figure*}[!t]
    \centering
    \includegraphics[width=0.9\textwidth]{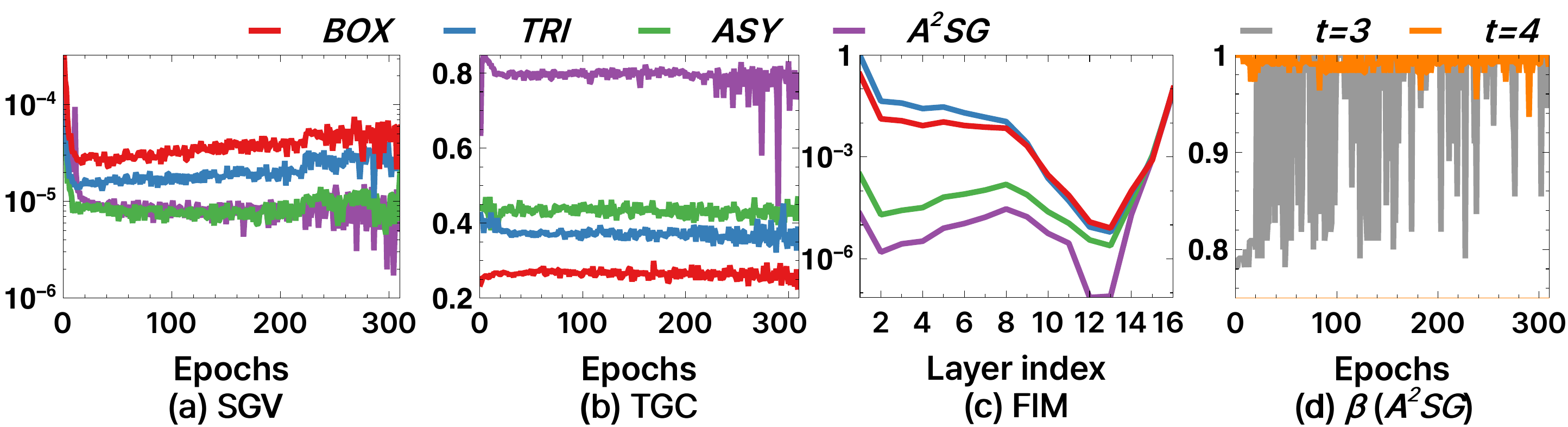}
    \caption{Comparison of SGV, TGC, FIM, and $\beta$ dynamics across different surrogate gradient functions at Conv5-2 layer. (a) SGV over epochs, (b) TGC over epochs, (c) FIM across layer indices, (d) $\beta$ dynamics at $t=3$ and at $t=4$ with $A^2SG$.}
    \label{fig:conv5-2_SGV_TGC_FIM_beta}
\end{figure*}

\subsection{Effect of \textit{A$^2$SG} on Gradient Dynamics and Feature Learning}
We analyze the effect of \textit{A$^2$SG} on gradient variation and temporal consistency.
As shown in Fig.~\ref{fig:conv5-2_SGV_TGC_FIM_beta}-(a) and (b), \textit{A$^2$SG} maintains low SGV and high TGC throughout training.
Furthermore, in Fig.~\ref{fig:conv5-2_SGV_TGC_FIM_beta}-(c), measurement of the maximum eigenvalue of the FIM for each layer reveals that \textit{A$^2$SG} consistently achieves the lowest eigenvalues throughout the network. 
These results corroborate our theoretical findings, demonstrating that lower CV directly leads to convergence toward flatter minima in the loss landscape.
In addition, Fig.~\ref{fig:conv5-2_SGV_TGC_FIM_beta}-(d) illustrates the adaptive selection of $\beta$ across training epochs at $t=3$ and $t=4$, where $\beta$ is chosen to minimize SGV and maximize TGC, respectively.
Additional analysis of the \textit{ASY} function and the results for the Conv1 and Conv3 layers are provided in Secs.~\ref{app:quant_anal} and ~\ref{app:other_layer}, respectively.

\setlength{\textfloatsep}{0pt}
\begin{table}[t]
    \centering
    \setlength{\tabcolsep}{2pt}
    \caption{Comparison with current state-of-the-art approaches on CIFAR10/100.}
    \label{tab:results_comp_direct}
    \renewcommand{\arraystretch}{0.6}
    \resizebox{\columnwidth}{!}{%
        {\normalsize
        \begin{tabular}{cclcc}
            \toprule
             & \multirow{2}{*}{Architectures} & \multirow{2}{*}{Methods} & Time & Acc.\\
             &&&Steps &(\%) \\
            \midrule
            \multirow{14}{*}[-18pt]{\centering\rotatebox[origin=c]{90}{\textbf{CIFAR10}}}
            &\multirow{4}{*}[-3pt]{VGG16}
               & IM~\citep{guo2022loss} & 5 & 93.85 \\
              && RMP~\citep{guo2023rmp} & 4 & 93.33 \\
              && MPBN~\citep{guo2023membrane} & 4 & 94.44 \\
              && \textit{\textbf{\boldmath{$A^2$}SG}} & \textbf{4} & \textbf{95.29$\pm$0.05} \\ \cmidrule{2-5}

            &\multirow{10}{*}[-9pt]{ResNet19}
               & IM~\citep{guo2022loss} & 4 & 95.40 \\
             && RMP~\citep{guo2023rmp} & 4 & 95.51 \\
             && TET~\citep{deng2022temporal} & 4 & 94.44 \\
             && TAB~\citep{jiang2024tab} & 4 & 94.76 \\
             && ShortcutBP~\citep{guo2024take} & 2 & 95.36 \\
             \cmidrule{3-5}
             && \multirow{3}{*} {MPD-AGL~\citep{ijcai2025p464}} & 2 & 96.18 \\
             && & 4 & 96.35 \\ 
             && & 6 & 96.54 \\
             \cmidrule{3-5}
             && \multirow{2}{*}{\textit{\textbf{\boldmath{$A^2$}SG}}} & \textbf{2} & \textbf{96.34$\pm$0.02} \\ 
             &&  & \textbf{4} & \textbf{96.74$\pm$0.05} \\ \midrule

            \multirow{13}{*}[-20pt]{\centering\rotatebox[origin=c]{90}{\textbf{CIFAR100}}}
            &\multirow{4}{*}[-3pt]{VGG16}
               & IM~\citep{guo2022loss} & 5 & 70.18 \\
              && RMP~\citep{guo2023rmp} & 4 &  72.55\\
              && MPBN~\citep{guo2023membrane} & 4 & 74.74 \\
              && \textit{\textbf{\boldmath{$A^2$}SG}} & \textbf{4} & \textbf{75.21$\pm$0.08} \\ \cmidrule{2-5}
            &\multirow{9}{*}[-9pt]{ResNet19}
              & RMP~\citep{guo2023rmp} & 4 & 78.28 \\
             && TET~\citep{deng2022temporal} & 4 & 74.47 \\
             && TAB~\citep{jiang2024tab} & 4 & 76.81 \\
             && ShortcutBP~\citep{guo2024take} & 2 & 77.79\\
             \cmidrule{3-5}
             && \multirow{3}{*} {MPD-AGL~\citep{ijcai2025p464}} & 2 & 78.84 \\
             &&  & 4 & 79.72 \\ 
             &&  & 6 & 80.49 \\
             \cmidrule{3-5}
             && \multirow{2}{*}{\textit{\textbf{\boldmath{$A^2$}SG}}} & \textbf{2} & \textbf{79.18$\pm$0.01} \\ 
             && & \textbf{4} & \textbf{81.05$\pm$0.05} \\
            \bottomrule
        \end{tabular}
        }
    }   
\end{table}

To further examine how improved gradient dynamics translate into feature representations, we perform a t-SNE visualization of the learned feature representations (Fig.~\ref{fig:tSNE_base_ours}).
Models trained with \textit{A$^2$SG} exhibit class-separable features at early timestep $t=1$ and by $t=4$ the features of each class consolidate into compact and well-separated clusters.
These properties indicate that gradients are effectively propagated to deeper layers and maintain coherent directions across timesteps, facilitating convergence to flat minima and improving training ability.
In contrast, \textit{BOX} exhibits significant class overlap and less clear separation.
This demonstrates that \textit{A$^2$SG} enhances both training stability and generalization performance, as visualized by more robust and discriminative feature representations.
\begin{figure}[t]
    \centering
    \includegraphics[width=0.8\linewidth]{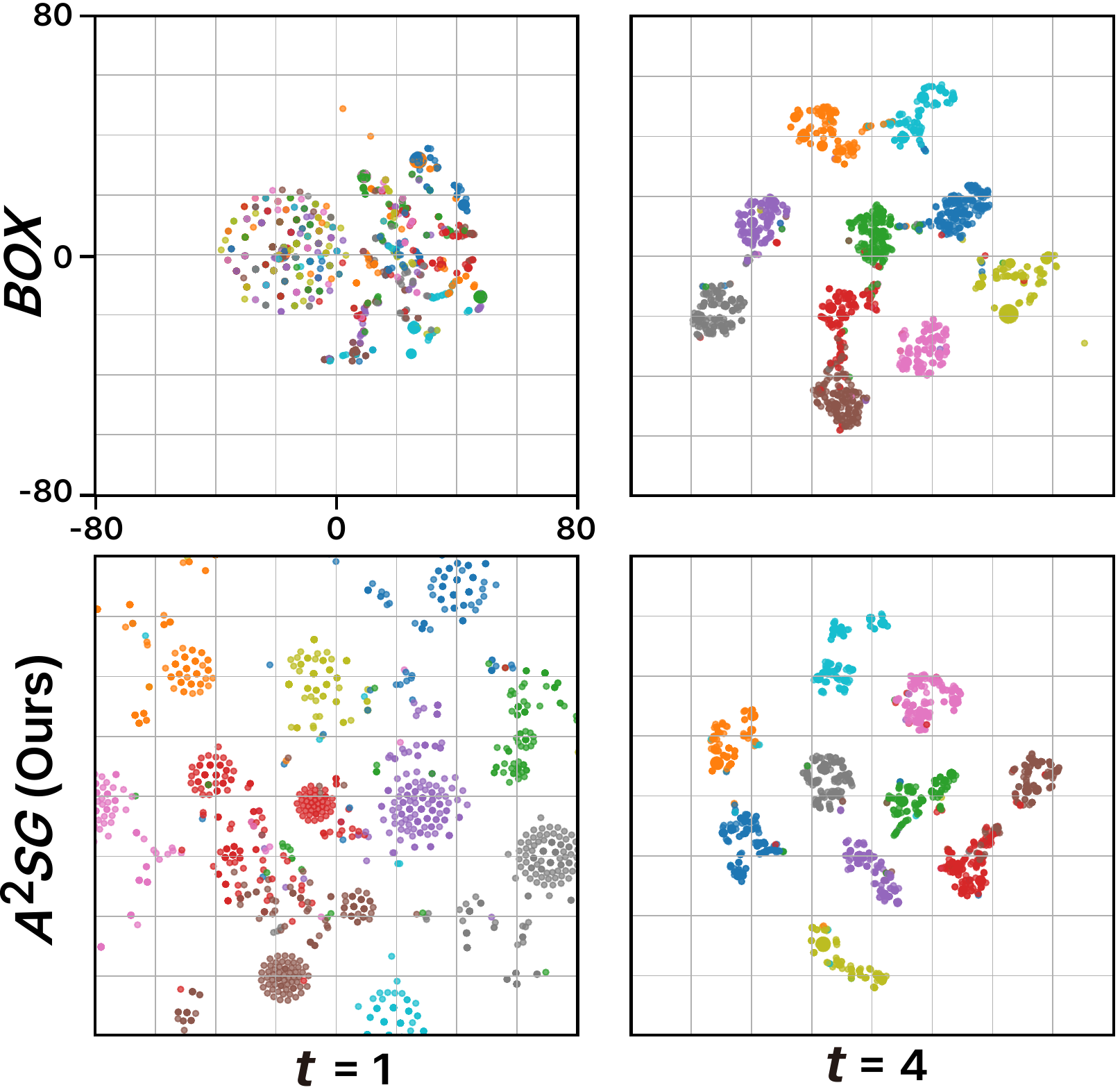}
    \caption{Comparison of t-SNE for VGG16 on CIFAR10 using \textit{BOX} (top) and \textit{A$^2$SG} (bottom) at $t=1$ and $4$.}
    \label{fig:tSNE_base_ours}
\end{figure}
\subsection{Comparison with Other Adaptive Surrogate Gradients}\label{Sec:comp_other_sg}
Tab.~\ref{tab:results_comp_adaptive} compares the accuracy of various adaptive surrogate gradient methods on CIFAR10 and CIFAR100 with ResNet18 and ResNet19.
When applied to \textit{BOX} and \textit{TRI}, ST-ASG consistently outperforms previous adaptive surrogate gradient methods on both datasets.
This superiority stems from its spatio-temporal adaptation: the spatial policy stabilizes local gradients to favor flatter minima, while the temporal policy aligns gradient directions across timesteps for more robust global updates.
Furthermore, \textit{$A^2$SG} achieves the highest accuracy among all methods.
On ResNet19, our approach outperforms LSG by $1.57\%$  on CIFAR10 (96.74\% vs. 95.17\%) and by $4.20\%$ on CIFAR100 (81.05\% vs. 76.85\%).
These results demonstrate the strong generalization capability of our method over existing adaptive surrogate gradient approaches.

\subsection{Comparison with State-of-the-Art Methods} \label{Sec:comp_sota}
As reported in Tabs.~\ref{tab:results_comp_direct} and \ref{tab:Comparison_with_other_Transformer}, our method outperforms prior approaches on CIFAR10 and CIFAR100 in accuracy and achieves both higher accuracy and lower power consumption on ImageNet.
In the case of E-SpikeFormer (Tab.~\ref{tab:Comparison_with_other_Transformer}), it adopted integer LIF neurons with multiple thresholds, which is distinct from the conventional SNNs with LIF neurons.
To efficiently train on large-scale datasets, it uses integer values as a substitute for the temporal spike trains.
Thus, this mechanism implicitly incorporates temporal dynamics, even when $T=1$.
The model can be regarded as operating with an implicit time step equal to the maximum integer spike count $D$, rather than a single time step.
Based on this perspective, we aligned the time step of the conventional LIF neurons with the integer activation value of I-LIF.
For example, when $T\times D$ is $1 \times 4$ for I-LIF, we applied S-ASG to neurons with an activation value of four, corresponding to the last time step.
We then sequentially applied T-ASG to neurons with activation values of three, two, and one.
For the effective window ($\beta$), we set it to be centered on each threshold, as in LIF with a single threshold.
For example, if $\textrm{th}_i$ is a threshold at integer $i$, the effective window of $\textrm{th}_i$ is set to $[\textrm{th}_i-\beta_i, \textrm{th}_i+\beta_i]$. 
From this state, we changed $\beta_i$ through our adaptive method.

\begin{table}[t]
    \centering
    \setlength{\tabcolsep}{2pt}
    \caption{Comparison with Transformer-based SNNs on ImageNet.
    Following E-SpikeFormer~\cite{yao2025scaling}, timesteps are denoted as \( T \times D \), where \( T \) is the number of timesteps and \( D \) indicates the upper bound of integer activations.
    }
    \label{tab:Comparison_with_other_Transformer}
    \renewcommand{\arraystretch}{0.6}
    \resizebox{\columnwidth}{!}{%
        \begin{tabular}{llccc}
            \toprule
            \multirow{2}{*}{Methods} & Param & Power & Time & Acc.
                                  \\ & (M)   & (mJ)  & Steps &(\%)\\ \midrule
            Spikformer~\citep{zhou2023spikformer}& 66.3 & 21.5 &$4 \times 1$ & 74.8 \\
             \midrule
            Meta-SpikeFormer~\citep{yao2024spikedriven}& 31.3 & 32.8 &$4 \times 1$ & 77.2\\
             \midrule
            E-SpikeFormer~\citep{yao2025scaling}& 10.0 &3.0 &$1 \times 4$ & 78.5\\
            E-SpikeFormer~\citep{yao2025scaling}& 173.0 &35.6 &$1 \times 4$ & 84.7\\
             \midrule
            \textbf{E-SpikeFormer + \textit{\boldmath{$A^2$}SG}} & \textbf{10.0} & \textbf{2.78} & $1 \times 4$ & \textbf{78.61$\pm$0.01} \\
            \textbf{E-SpikeFormer + \textit{\boldmath{$A^2$}SG}} & \textbf{173.0} & \textbf{35.64} & $1 \times 4$ & \textbf{85.43} \\
            \bottomrule
        \end{tabular}%
    }
\end{table}
Tab.~\ref{tab:results_comp_dvs} shows that our method maintains higher accuracy on CIFAR10-DVS with only four timesteps.
In Tab.~\ref{tab:result_segmentation}, our approach also attains higher mIoU and reduced power consumption on ADE20K, demonstrating its effectiveness for segmentation as well as its energy efficiency.
Moreover, Fig.~\ref{fig:segmentation_fig} illustrates qualitative improvements in segmentation when applied to E-SpikeFormer.
Notably, \textit{A$^2$SG} converges rapidly, achieving competitive accuracy with only two timesteps, comparable to other state-of-the-art methods that require four timesteps.
Overall, these results demonstrate that by improving the design of surrogate gradients, our method provides a general and principled strategy for enhancing the training performance of deep SNNs across diverse architectures and tasks.

\setlength{\textfloatsep}{0pt}
\begin{table}[t]
    \centering
    \setlength{\tabcolsep}{2pt}
    \caption{Comparisons with other works on CIFAR10-DVS (* denotes our implementation).}
    \label{tab:results_comp_dvs}
    \renewcommand{\arraystretch}{0.6}
    \resizebox{\columnwidth}{!}{%
        \begin{tabular}{clcc}
            \toprule
            \multirow{2}{*}{Architectures} & \multirow{2}{*}{Methods} & Time & Acc.\\
            &&Steps & (\%) \\
            \midrule
             \multirow{4}{*}[-3pt]{VGGSNN}
            &STBP-tdBN~\citep{zheng2021going}* & 4 & 81.30$\pm$1.00 \\
            &HSD~\citep{zhong2024towards} & 5 & 81.10 \\
            &TMC~\citep{yan2025training} & 4 & 81.76 \\
            &\textit{\textbf{\boldmath{$A^2$}SG}} & \textbf{4} & \textbf{82.36$\pm$0.01} \\
            \bottomrule
        \end{tabular}
    }
\end{table}

\setlength{\textfloatsep}{0pt}
\begin{table}[!t]
    \centering
    \caption{Performance of segmentation on ADE20K. These methods use the pre-trained models on ImageNet as the backbone, then add segmentation heads for fine-tuning.}
    \label{tab:result_segmentation}
        \setlength{\tabcolsep}{2pt}
    \renewcommand{\arraystretch}{0.6}
    \resizebox{\columnwidth}{!}{%
        {\scriptsize
        \begin{tabular}{llccc}
            \toprule
            \multirow{2}{*}{Methods} & Param & Power & Time & mIoU.
                                  \\ & (M)   & (mJ)  & Steps & (\%)\\ \midrule
            Meta-SpikeFormer~\citep{yao2024spikedriven}&16.5 & 88.1 &$4 \times 1$ & 33.6\\
            E-SpikeFormer~\citep{yao2025scaling}& 11.0 & 27.2 &$1 \times 4$ & 40.1
              \\ \midrule
            \textbf{E-SpikeFormer + \textit{\boldmath{$A^2$}SG}} & 11.0 & \textbf{25.2} & $1 \times 4$ & \textbf{40.94} \\
            \bottomrule
        \end{tabular}%
        }
    }
\end{table}

\begin{table}[t]
  \centering
  \setlength{\tabcolsep}{10pt}
  \footnotesize
  \caption{Comparison with Sharpness-Aware Minimization (SAM) on CIFAR10 with VGG16 ($T=4$, perturbation radius $\rho=0.05$).}
  \label{tab:results_comp_sam}
      \begin{tabular}{lccc}
          \toprule
          \multirow{2}{*}{Methods} & Acc.  & \# of  & Latency\\
          &(\%)&Spikes (k) &(sec/epoch)\\
          \midrule
          \textit{BOX}                                   & 94.84 & 94.6 &74\\
          \textit{BOX} + SAM  & 94.87 & 92.5 &140\\
          \textit{BOX} + ST-ASG                          & 94.98 & 93.6 &85\\
           \textit{\textbf{\boldmath{$A^2$}SG}}           & \textbf{95.29} & \textbf{84.9} &\textbf{85}\\
          \bottomrule
      \end{tabular}%
\end{table}

\begin{figure}[t]
    \centering
    \includegraphics[width=1\linewidth]{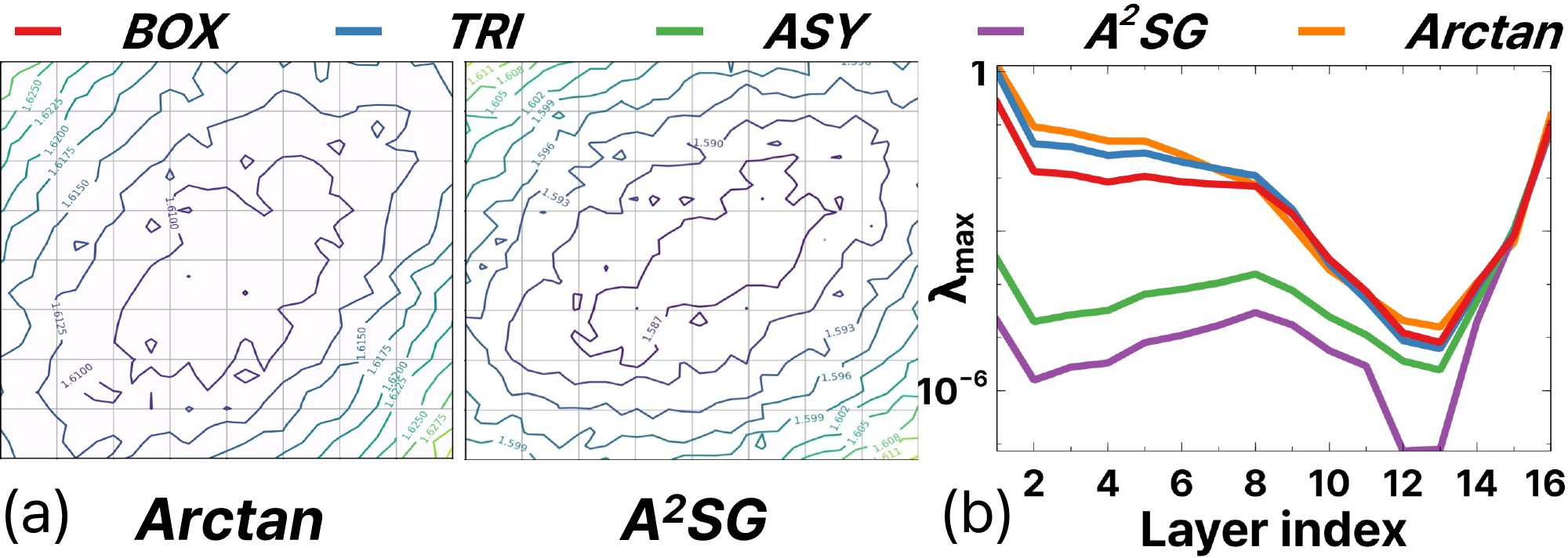}
    \caption{(a) Loss landscape of \textit{Arctan} and $A^2SG$. (b) Layer-wise $\lambda_{\max}$ of the FIM for \textit{BOX}, \textit{TRI}, \textit{ASY}, \textit{Arctan}, and $A^2SG$.}
    \label{fig:smooth_gradient}
\end{figure}

\subsection{Comparison with Flat Minima Approaches} \label{sec:smooth_gradient}

$A^2SG$ promotes convergence to flat minima through both its asymmetric surrogate design and spatio-temporal adaptation.
We compare it against two alternative routes to flat minima: Sharpness-Aware Minimization (SAM)~\citep{foret2021sharpness} and a smooth surrogate (i.e., Arctan~\citep{fang2021incorporating}).
SAM was originally proposed for DNNs and, to the best of our knowledge, has not been applied to deep SNNs; we thus implement it for SNNs to enable this comparison.
As shown in Tab.~\ref{tab:results_comp_sam}, applying SAM to the \textit{BOX} baseline yields only a marginal accuracy gain while producing more spikes and doubling the training cost.
This inefficiency arises because SAM does not consider the temporal gradient inconsistency of SNNs, exploring the loss landscape via an additional gradient computation at every step.
These factors together make SAM difficult to apply effectively to deep SNNs.
In contrast, $A^2SG$ attains higher accuracy and lower spike counts at only $\sim$15\% overhead, reaching flat minima more efficiently than adding a separate parameter-space optimizer such as SAM.

We next compare $A^2SG$ with the smooth \textit{Arctan} surrogate $\sigma'(x) = \frac{1}{1+(\pi x)^2}$~\citep{fang2021incorporating} on VGG16/CIFAR10 to examine whether smoothness directly contributes to flat-minima convergence.
As shown in Fig.~\ref{fig:smooth_gradient}-(a), \textit{Arctan} converges to a less irregular region than BOX and TRI, reflecting the effect of its smoothness.
The curvature of this region, however, remains comparable to that of \textit{BOX} and \textit{TRI} and higher than that of $A^2SG$.
This distinction is quantified in Fig.~\ref{fig:smooth_gradient}-(b): the $\lambda_{\max}$ of the FIM for \textit{Arctan} is comparable to those of the non-smooth symmetric surrogates \textit{BOX} and \textit{TRI}, whereas \textit{ASY} and $A^2SG$ attain substantially lower values.
These results indicate that smoothness reduces the irregularity of the converged region but not its curvature; the asymmetric design, rather than smoothness, is the decisive factor for flat-minima convergence.


%
\subsection{Ablation Studies}\label{sec:ablation}

Tab.~\ref{tab:results_ablation} summarizes ablation results, highlighting the contributions of the adaptive and asymmetric components.
Incorporating the spatial adaptive surrogate gradient (S-ASG) improves accuracy and reduces spike count, while the temporal adaptive surrogate gradient (T-ASG) alone slightly improves accuracy with a slight increase in spikes.
Their combination (ST-ASG) further stabilizes training, and adding the asymmetric surrogate (\textit{$A^2$SG}) yields the highest accuracy (95.29\%) with the lowest spike count, confirming the benefit of integrating all components.
We further estimate the computational overhead of the proposed method by measuring the wall-clock time of one training epoch.
As reported in Tab.~\ref{tab:latency_scaling}, the relative overhead of \textit{$A^2$SG} consistently decreases with $T$, from $14.86\%$ at $T=4$ to $11.11\%$ at $T=12$. 
Since \textit{$A^2$SG} modifies only the backward pass while the forward pass grows linearly with $T$, the Bayesian optimization cost is amortized over a larger forward computation, keeping \textit{$A^2$SG} efficient at longer timesteps.

\begin{table}[t]
\centering
\centering
\caption{Ablation study on CIFAR10/100 with VGG16, comparing spatial (S-ASG), temporal (T-ASG), spatio-temporal adaptation (ST-ASG), and ST-ASG with \textit{ASY} (\textit{A$^2$SG}).}
\label{tab:results_ablation}
    {\scriptsize  
    \renewcommand{\arraystretch}{0.5}
        \setlength{\tabcolsep}{8pt}
        \begin{tabular}{clccc}
            \toprule
             &\multirow{2}{*}{Methods} & Acc. & \# of Spikes & Latency \\ 
             &&  (\%)& ($\times10^3$) & (sec/epoch)\\
             \midrule
            \multirow{5}{*}[-5pt]{\centering\rotatebox[origin=c]{90}{\textbf{CIFAR10}}}&\textit{BOX} (Baseline) & 94.84$\pm$0.05 & 94.6$\pm$1.0 & 74 (+0\%) \\ 
            \cmidrule{2-5}
            &w/ S-ASG & 94.97$\pm$0.04 & 80.0$\pm$0.8 &82 (+11\%)\\
            &w/ T-ASG  & 94.94$\pm$0.05 & 98.7$\pm$1.8 &83 (+12\%)\\
            &w/ ST-ASG & 94.98$\pm$0.03 & 93.6$\pm$2.3 &85 (+15\%)\\
            \cmidrule{2-5}
            &\textbf{\textit{\boldmath{$A^2$}SG} (Ours)} & \textbf{95.29$\pm$0.04} & \textbf{84.9$\pm$1.8} &85 (+15\%)\\
            \midrule
             \multirow{5}{*}[-4pt]{\centering\rotatebox[origin=c]{90}{\textbf{CIFAR100}}}&\textit{BOX} (Baseline) & 74.24$\pm$0.08 & 100.3$\pm$1.2 &  74 (+0\%) \\ 
            \cmidrule{2-5}
            &w/ S-ASG & 74.57$\pm$0.08 & 93.2$\pm$0.7 &  82 (+11\%)\\
            &w/ T-ASG  & 74.54$\pm$0.06 & 104.9$\pm$2.2 & 83 (+12\%)\\
            &w/ ST-ASG & 74.73$\pm$0.07 & 100.2$\pm$0.2 & 85 (+15\%)\\
            \cmidrule{2-5}
            &\textbf{\textit{\boldmath{$A^2$}SG} (Ours)} & \textbf{75.21$\pm$0.08} & \textbf{100.2$\pm$0.4} &85 (+15\%)\\
            \bottomrule
        \end{tabular}%
    }
\end{table}

\begin{table}[!t]
\centering
\setlength{\tabcolsep}{4pt}
\caption{Per-epoch latency on CIFAR10 with VGG16 across different timesteps.}
\label{tab:latency_scaling}
\renewcommand{\arraystretch}{0.5}
\footnotesize
\begin{tabular}{cccc}
\toprule
Timesteps & Baseline (s/epoch) & \textit{A$^2$SG} (s/epoch) & Overhead \\
\midrule
4  & 74  & 85  & +14.86\% \\
6  & 109 & 125 & +14.68\% \\
8  & 131 & 147 & +12.21\% \\
10 & 155 & 173 & +11.61\% \\
12 & 180 & 200 & +11.11\% \\
\bottomrule
\end{tabular}
\end{table}

\subsection{Sensitivity Analysis}
In this section, we conduct a sensitivity analysis of the hyperparameters of $A^{2}SG$, covering two aspects: the Bayesian search and the gradient-bias parameter $h$ of the \textit{ASY} surrogate gradient in Eq.~\ref{eq:asymmetric_sg_function}.
We first analyze the Bayesian search hyperparameters.
The default configuration uses a $\beta$ update frequency of 1 epoch, Bayesian optimization parameters $(n_{\text{obs}}, n_{\text{eval}}) = (100, 150)$, and a search radius $\delta = 0.05$.
As shown in Tab.~\ref{tab:hyperparameter}, varying the update frequency, $(n_{\text{obs}}, n_{\text{eval}})$, and $\delta$ around their defaults yields accuracy differences within 0.25\%, indicating that $A^2SG$ is robust to these hyperparameter variations.

\begin{table}[t]
\centering
\footnotesize
\centering
\caption{Comparison with different $\beta$ update frequencies (epoch) and Bayesian optimization hyperparameter ($n_{obs}$, $n_{eval}$, $\delta$) settings on CIFAR10 with VGG16.}
\label{tab:hyperparameter}
    \renewcommand{\arraystretch}{0.4}
    \setlength{\tabcolsep}{12pt}
        \begin{tabular}{ccccc}
            \toprule
             Epoch &$n_{obs}$ &$n_{eval}$&$\delta$ &Acc. (\%) \\ 
             \midrule
    
            1&100&150&0.05& 95.29$\pm$0.04  \\
            \cmidrule{1-5}
            
            50&\multirow{2}{*}{100}&\multirow{2}{*}{150}&\multirow{2}{*}{0.05}& 95.10$\pm$0.02 \\
            100&&&& 95.06$\pm$0.06  \\
            \cmidrule{1-5}
    
            \multirow{2}{*}{1}&10&15&\multirow{2}{*}{0.05}& 95.10$\pm$0.06  \\
            &300&450&& 95.31$\pm$0.02  \\
            \cmidrule{1-5}
    
            \multirow{2}{*}{1}&\multirow{2}{*}{100}&\multirow{2}{*}{150}& 0.01&95.15$\pm$0.08  \\
            &&&0.10& 95.24$\pm$0.03  \\
            \bottomrule
        \end{tabular}%
\end{table}

\begin{table}[!t]
\centering
\setlength{\tabcolsep}{2pt}
\caption{Sensitivity to $h$ of the \textit{ASY} surrogate gradient on CIFAR10 with VGG16.}
\label{tab:h_sensitivity}
\footnotesize
\begin{tabular}{lcccccccc}
\toprule
$h$ & 0.1 & 0.3 & 0.5 & \textbf{0.6} & 0.7 & 1.0 & 2.0 & 3.0 \\
\midrule
Acc. (\%) & 20.30 & 49.43 & 95.21 & \textbf{95.29} & 94.93 & 94.54 & 93.06 & 88.02 \\
\bottomrule
\end{tabular}
\end{table}

We further analyze the sensitivity of the gradient-bias term $h$ in the ASY surrogate on VGG16/CIFAR10 (Tab.~\ref{tab:h_sensitivity}).
For small $h$ (e.g., $h < 0.5$), the gradient bias is insufficient, making the overall gradient magnitude too small.
In addition, part of the effective window takes negative values, and the resulting positive and negative gradients partially cancel out when aggregated, corrupting the learning signal and causing a sharp drop in accuracy.
For $0.5 \le h \le 1.0$, the gradient remains consistent across the window and accuracy is stable, with the best result at $h = 0.6$.
For large $h$ (e.g., $h > 1.0$), the excessive gradient magnitude gradually degrades accuracy.
These results confirm that $A^2SG$ is robust to $h$ within a moderate range.

\subsection{Compatibility}
To validate the compatibility of our approach, we additionally applied $A^2SG$ to other neuron models and learning methods.
Specifically, we evaluated it with PLIF neurons~\citep{fang2021incorporating} or RMP-loss~\citep{guo2023rmp}, and the corresponding results are reported in Tabs.~\ref{tab:other_neuron} and ~\ref{tab:other_methods}, respectively.
The experimental results show improvements in both accuracy and efficiency, which consistently demonstrate that our method is compatible with various neuron models and training methods.

\subsection{Noise Robustness}
In this section, we analyze the noise robustness of \textit{A$^2$SG}.
We experimented with deletion noise, removing a proportion of spikes from each layer, and report the results in Tab.~\ref{tab:noise_robust}.
Compared to the \textit{BOX}, \textit{A$^2$SG} achieves higher accuracy under the deletion noise, while also exhibiting a smaller reduction in total spike counts.
Fig.~\ref{fig:noise} illustrates the weight distributions, where \textit{A$^2$SG} shows lower variance than \textit{BOX}.
This distribution indicates reduced reliance on specific weights, thereby improving generalization and enhancing robustness to errors~\citep{tsai2021formalizing}.

\section{Conclusion}

In this work, we proposed \textit{A$^2$SG}, a unified framework for training deep SNNs. By integrating spatio-temporal adaptation with a neuron-aware asymmetric design, \textit{A$^2$SG} reduces gradient variability, stabilizes optimization, and encourages convergence to flatter minima.
Our theoretical analysis establishes the link between gradient variation and loss landscape curvature.
Moreover, we prove that the asymmetric surrogate achieves lower variation than its symmetric counterparts.
Extensive experiments across diverse SNN architectures and tasks demonstrate that \textit{A$^2$SG} consistently improves accuracy, robustness, and efficiency, highlighting surrogate gradient design as a key factor for reliable and scalable SNN training.
We note that formal convergence proofs for surrogate gradients, including a rigorous error bound for the \textit{ASY} surrogate as a function of $h$, remain an open problem. 

\newpage
\section*{Acknowledgements}
This work was supported in part by the National Research Foundation of Korea (NRF) (No. NRF-2021R1C1C2010454) and the Institute of Information \& Communications Technology Planning \& Evaluation (IITP) (No. RS-2025-02218733) grants funded by the Korea government (MSIT), and the Korea Institute of Science and Technology (KIST) institutional program (No. 26E0020).

\section*{Impact Statement}
This paper presents work whose goal is to advance the training of spiking neural networks.
There are many potential societal consequences of our work, none of which we feel must be specifically highlighted here.

\bibliography{a2sg_paper}

\begin{thebibliography}{50}
\providecommand{\natexlab}[1]{#1}
\providecommand{\url}[1]{\texttt{#1}}
\expandafter\ifx\csname urlstyle\endcsname\relax
  \providecommand{\doi}[1]{doi: #1}\else
  \providecommand{\doi}{doi: \begingroup \urlstyle{rm}\Url}\fi

\bibitem[Bal \& Sengupta(2024)Bal and Sengupta]{bal2024spikingbert}
Bal, M. and Sengupta, A.
\newblock Spikingbert: Distilling bert to train spiking language models using implicit differentiation.
\newblock In \emph{Proceedings of the AAAI Conference on Artificial Intelligence}, volume~38, pp.\  10998--11006, 2024.

\bibitem[Chaudhari et~al.(2019)Chaudhari, Choromanska, Soatto, LeCun, Baldassi, Borgs, Chayes, Sagun, and Zecchina]{chaudhari2019entropy}
Chaudhari, P., Choromanska, A., Soatto, S., LeCun, Y., Baldassi, C., Borgs, C., Chayes, J., Sagun, L., and Zecchina, R.
\newblock Entropy-sgd: Biasing gradient descent into wide valleys.
\newblock \emph{Journal of Statistical Mechanics: Theory and Experiment}, 2019\penalty0 (12):\penalty0 124018, 2019.

\bibitem[Cubuk et~al.(2020)Cubuk, Zoph, Shlens, and Le]{cubuk2020randaugment}
Cubuk, E.~D., Zoph, B., Shlens, J., and Le, Q.~V.
\newblock Randaugment: Practical automated data augmentation with a reduced search space.
\newblock In \emph{Proceedings of the IEEE/CVF conference on computer vision and pattern recognition workshops}, pp.\  702--703, 2020.

\bibitem[Deng et~al.(2009)Deng, Dong, Socher, Li, Li, and Fei-Fei]{deng2009imagenet}
Deng, J., Dong, W., Socher, R., Li, L.-J., Li, K., and Fei-Fei, L.
\newblock Imagenet: A large-scale hierarchical image database.
\newblock In \emph{2009 IEEE conference on computer vision and pattern recognition}, pp.\  248--255. IEEE, 2009.

\bibitem[Deng et~al.(2022)Deng, Li, Zhang, and Gu]{deng2022temporal}
Deng, S., Li, Y., Zhang, S., and Gu, S.
\newblock Temporal efficient training of spiking neural network via gradient re-weighting.
\newblock In \emph{International Conference on Learning Representations}, 2022.

\bibitem[Fang et~al.(2021{\natexlab{a}})Fang, Yu, Chen, Huang, Masquelier, and Tian]{fang2021deep}
Fang, W., Yu, Z., Chen, Y., Huang, T., Masquelier, T., and Tian, Y.
\newblock Deep residual learning in spiking neural networks.
\newblock \emph{Advances in Neural Information Processing Systems}, 34:\penalty0 21056--21069, 2021{\natexlab{a}}.

\bibitem[Fang et~al.(2021{\natexlab{b}})Fang, Yu, Chen, Masquelier, Huang, and Tian]{fang2021incorporating}
Fang, W., Yu, Z., Chen, Y., Masquelier, T., Huang, T., and Tian, Y.
\newblock Incorporating learnable membrane time constant to enhance learning of spiking neural networks.
\newblock In \emph{Proceedings of the IEEE/CVF international conference on computer vision}, pp.\  2661--2671, 2021{\natexlab{b}}.

\bibitem[Foret et~al.(2021)Foret, Kleiner, Mobahi, and Neyshabur]{foret2021sharpness}
Foret, P., Kleiner, A., Mobahi, H., and Neyshabur, B.
\newblock Sharpness-aware minimization for efficiently improving generalization.
\newblock In \emph{International Conference on Learning Representations}, 2021.

\bibitem[Ghorbani et~al.(2019)Ghorbani, Krishnan, and Xiao]{ghorbani2019hessian}
Ghorbani, B., Krishnan, S., and Xiao, Y.
\newblock An investigation into neural net optimization via hessian eigenvalue density.
\newblock In \emph{International Conference on Machine Learning}, volume~97, pp.\  2232--2241. PMLR, 2019.

\bibitem[Greenbaum et~al.(2020)Greenbaum, Li, and Overton]{greenbaum2020first}
Greenbaum, A., Li, R.-c., and Overton, M.~L.
\newblock First-order perturbation theory for eigenvalues and eigenvectors.
\newblock \emph{SIAM review}, 62\penalty0 (2):\penalty0 463--482, 2020.

\bibitem[Guo et~al.(2022)Guo, Chen, Zhang, Liu, Wang, Huang, and Ma]{guo2022loss}
Guo, Y., Chen, Y., Zhang, L., Liu, X., Wang, Y., Huang, X., and Ma, Z.
\newblock Im-loss: information maximization loss for spiking neural networks.
\newblock \emph{Advances in Neural Information Processing Systems}, 35:\penalty0 156--166, 2022.

\bibitem[Guo et~al.(2023{\natexlab{a}})Guo, Liu, Chen, Zhang, Peng, Zhang, Huang, and Ma]{guo2023rmp}
Guo, Y., Liu, X., Chen, Y., Zhang, L., Peng, W., Zhang, Y., Huang, X., and Ma, Z.
\newblock Rmp-loss: Regularizing membrane potential distribution for spiking neural networks.
\newblock In \emph{Proceedings of the IEEE/CVF International Conference on Computer Vision}, pp.\  17391--17401, 2023{\natexlab{a}}.

\bibitem[Guo et~al.(2023{\natexlab{b}})Guo, Zhang, Chen, Peng, Liu, Zhang, Huang, and Ma]{guo2023membrane}
Guo, Y., Zhang, Y., Chen, Y., Peng, W., Liu, X., Zhang, L., Huang, X., and Ma, Z.
\newblock Membrane potential batch normalization for spiking neural networks.
\newblock In \emph{Proceedings of the IEEE/CVF International Conference on Computer Vision}, pp.\  19420--19430, 2023{\natexlab{b}}.

\bibitem[Guo et~al.(2024)Guo, Chen, Hao, Peng, Jie, Zhang, Liu, and Ma]{guo2024take}
Guo, Y., Chen, Y., Hao, Z., Peng, W., Jie, Z., Zhang, Y., Liu, X., and Ma, Z.
\newblock Take a shortcut back: Mitigating the gradient vanishing for training spiking neural networks.
\newblock \emph{Advances in Neural Information Processing Systems}, 37:\penalty0 24849--24867, 2024.

\bibitem[Hochreiter \& Schmidhuber(1997)Hochreiter and Schmidhuber]{hochreiter1997flat}
Hochreiter, S. and Schmidhuber, J.
\newblock Flat minima.
\newblock \emph{Neural Computation}, 9\penalty0 (1):\penalty0 1--42, 1997.

\bibitem[Hu et~al.(2021)Hu, Tang, and Pan]{hu2021spiking}
Hu, Y., Tang, H., and Pan, G.
\newblock Spiking deep residual networks.
\newblock \emph{IEEE Transactions on Neural Networks and Learning Systems}, 34\penalty0 (8):\penalty0 5200--5205, 2021.

\bibitem[Jiang et~al.(2024)Jiang, Zoonekynd, De~Masi, Gu, and Xiong]{jiang2024tab}
Jiang, H., Zoonekynd, V., De~Masi, G., Gu, B., and Xiong, H.
\newblock Tab: Temporal accumulated batch normalization in spiking neural networks.
\newblock In \emph{The Twelfth International Conference on Learning Representations}, 2024.

\bibitem[Jiang et~al.(2025)Jiang, Wang, Jiang, Fan, and Yan]{ijcai2025p464}
Jiang, J., Wang, L., Jiang, R., Fan, J., and Yan, R.
\newblock Adaptive gradient learning for spiking neural networks by exploiting membrane potential dynamics.
\newblock In \emph{Proceedings of the International Joint Conference on Artificial Intelligence}, pp.\  4164--4172. International Joint Conferences on Artificial Intelligence Organization, 8 2025.

\bibitem[Karakida et~al.(2019)Karakida, Akaho, and Amari]{karakida2019universal}
Karakida, R., Akaho, S., and Amari, S.-i.
\newblock Universal statistics of fisher information in deep neural networks.
\newblock \emph{Neural Computation}, 2019.

\bibitem[Keskar et~al.(2017)Keskar, Mudigere, Nocedal, Smelyanskiy, and Tang]{keskar2017large}
Keskar, N.~S., Mudigere, D., Nocedal, J., Smelyanskiy, M., and Tang, P. T.~P.
\newblock On large-batch training for deep learning: Generalization gap and sharp minima.
\newblock In \emph{International Conference on Learning Representations}, 2017.

\bibitem[Kim et~al.(2022{\natexlab{a}})Kim, Li, Hu, and Hospedales]{kim2022fishersam}
Kim, M., Li, D., Hu, S.~X., and Hospedales, T.
\newblock Fisher sam: Information geometry and sharpness aware minimisation.
\newblock In \emph{International Conference on Machine Learning}, pp.\  11148--11161. PMLR, 2022{\natexlab{a}}.

\bibitem[Kim et~al.(2020{\natexlab{a}})Kim, Park, Na, Kim, and Yoon]{kim2020towards}
Kim, S., Park, S., Na, B., Kim, J., and Yoon, S.
\newblock Towards fast and accurate object detection in bio-inspired spiking neural networks through bayesian optimization.
\newblock \emph{IEEE Access}, 9:\penalty0 2633--2643, 2020{\natexlab{a}}.

\bibitem[Kim et~al.(2020{\natexlab{b}})Kim, Park, Na, and Yoon]{kim2020spiking}
Kim, S., Park, S., Na, B., and Yoon, S.
\newblock Spiking-yolo: spiking neural network for energy-efficient object detection.
\newblock \emph{Proceedings of the AAAI Conference on Artificial Intelligence}, 34\penalty0 (07):\penalty0 11270--11277, 2020{\natexlab{b}}.

\bibitem[Kim et~al.(2022{\natexlab{b}})Kim, Chough, and Panda]{kim2022beyond}
Kim, Y., Chough, J., and Panda, P.
\newblock Beyond classification: Directly training spiking neural networks for semantic segmentation.
\newblock \emph{Neuromorphic Computing and Engineering}, 2\penalty0 (4):\penalty0 044015, 2022{\natexlab{b}}.

\bibitem[Krizhevsky et~al.(2009)Krizhevsky, Hinton, et~al.]{krizhevsky2009learning}
Krizhevsky, A., Hinton, G., et~al.
\newblock Learning multiple layers of features from tiny images.
\newblock 2009.

\bibitem[Lei et~al.(2025)Lei, Yao, Hu, Luo, Lu, Xu, and Li]{lei2025spike2former}
Lei, Z., Yao, M., Hu, J., Luo, X., Lu, Y., Xu, B., and Li, G.
\newblock Spike2former: Efficient spiking transformer for high-performance image segmentation.
\newblock In \emph{Proceedings of the AAAI Conference on Artificial Intelligence}, volume~39, pp.\  1364--1372, 2025.

\bibitem[Li et~al.(2017)Li, Liu, Ji, Li, and Shi]{li2017cifar10}
Li, H., Liu, H., Ji, X., Li, G., and Shi, L.
\newblock Cifar10-dvs: an event-stream dataset for object classification.
\newblock \emph{Frontiers in Neuroscience}, 11:\penalty0 244131, 2017.

\bibitem[Li et~al.(2021)Li, Guo, Zhang, Deng, Hai, and Gu]{li2021differentiable}
Li, Y., Guo, Y., Zhang, S., Deng, S., Hai, Y., and Gu, S.
\newblock Differentiable spike: Rethinking gradient-descent for training spiking neural networks.
\newblock \emph{Advances in Neural Information Processing Systems}, 34:\penalty0 23426--23439, 2021.

\bibitem[Lian et~al.(2023)Lian, Shen, Liu, Wang, Yan, and Tang]{lian2023learnable}
Lian, S., Shen, J., Liu, Q., Wang, Z., Yan, R., and Tang, H.
\newblock Learnable surrogate gradient for direct training spiking neural networks.
\newblock In \emph{Proceedings of the International Joint Conference on Artificial Intelligence}, pp.\  3002--3010, 2023.

\bibitem[Liao et~al.(2018)Liao, Drummond, Reid, and Carneiro]{liao2018approximate}
Liao, Z., Drummond, T., Reid, I., and Carneiro, G.
\newblock Approximate fisher information matrix to characterize the training of deep neural networks.
\newblock \emph{IEEE transactions on pattern analysis and machine intelligence}, 42\penalty0 (1):\penalty0 15--26, 2018.

\bibitem[Lin et~al.(2023)Lin, Deng, and Gu]{lin2023efficient}
Lin, H., Deng, S., and Gu, S.
\newblock Efficient surrogate gradients for training spiking neural networks.
\newblock In \emph{ICML 2023 Workshop on Differentiable Almost Everything: Differentiable Relaxations, Algorithms, Operators, and Simulators}, 2023.

\bibitem[Maass(1997)]{maass1997networks}
Maass, W.
\newblock Networks of spiking neurons: the third generation of neural network models.
\newblock \emph{Neural Networks}, 10\penalty0 (9):\penalty0 1659--1671, 1997.

\bibitem[Martens(2020)]{martens2020new}
Martens, J.
\newblock New insights and perspectives on the natural gradient method.
\newblock \emph{Journal of Machine Learning Research}, 21\penalty0 (146):\penalty0 1--76, 2020.

\bibitem[Neftci et~al.(2019)Neftci, Mostafa, and Zenke]{neftci2019surrogate}
Neftci, E.~O., Mostafa, H., and Zenke, F.
\newblock Surrogate gradient learning in spiking neural networks: Bringing the power of gradient-based optimization to spiking neural networks.
\newblock \emph{IEEE Signal Processing Magazine}, 36\penalty0 (6):\penalty0 51--63, 2019.

\bibitem[Park et~al.(2020)Park, Kim, Na, and Yoon]{park2020t2fsnn}
Park, S., Kim, S., Na, B., and Yoon, S.
\newblock T2fsnn: deep spiking neural networks with time-to-first-spike coding.
\newblock In \emph{2020 57th ACM/IEEE design automation conference (DAC)}, pp.\  1--6. IEEE, 2020.

\bibitem[Su et~al.(2023)Su, Chou, Hu, Li, Mei, Zhang, and Li]{su2023deep}
Su, Q., Chou, Y., Hu, Y., Li, J., Mei, S., Zhang, Z., and Li, G.
\newblock Deep directly-trained spiking neural networks for object detection.
\newblock In \emph{Proceedings of the IEEE/CVF International Conference on Computer Vision}, pp.\  6555--6565, 2023.

\bibitem[Tavanaei et~al.(2019)Tavanaei, Ghodrati, Kheradpisheh, Masquelier, and Maida]{tavanaei2019deep}
Tavanaei, A., Ghodrati, M., Kheradpisheh, S.~R., Masquelier, T., and Maida, A.
\newblock Deep learning in spiking neural networks.
\newblock \emph{Neural Networks}, 111:\penalty0 47--63, 2019.

\bibitem[Tsai et~al.(2021)Tsai, Hsu, Yu, and Chen]{tsai2021formalizing}
Tsai, Y.-L., Hsu, C.-Y., Yu, C.-M., and Chen, P.-Y.
\newblock Formalizing generalization and adversarial robustness of neural networks to weight perturbations.
\newblock \emph{Advances in Neural Information Processing Systems}, 34:\penalty0 19692--19704, 2021.

\bibitem[Wu et~al.(2018)Wu, Deng, Li, and Shi]{wu2018spatio}
Wu, Y., Deng, L., Li, G., and Shi, L.
\newblock Spatio-temporal backpropagation for training high-performance spiking neural networks.
\newblock \emph{Frontiers in Neuroscience}, 12:\penalty0 323875, 2018.

\bibitem[Xing et~al.(2024)Xing, Zhang, Ni, Xiao, Ju, Fan, Wang, Zhang, and Li]{10.5555/3692070.3694321}
Xing, X., Zhang, Z., Ni, Z., Xiao, S., Ju, Y., Fan, S., Wang, Y., Zhang, J., and Li, G.
\newblock Spikelm: towards general spike-driven language modeling via elastic bi-spiking mechanisms.
\newblock In \emph{International Conference on Machine Learning}, ICML'24. JMLR.org, 2024.

\bibitem[Yan et~al.(2025)Yan, Wang, Ma, Tang, Zheng, and Pan]{yan2025training}
Yan, J., Wang, C., Ma, D., Tang, H., Zheng, Q., and Pan, G.
\newblock Training high performance spiking neural network by temporal model calibration.
\newblock In \emph{International Conference on Machine Learning}, 2025.

\bibitem[Yao et~al.(2024)Yao, Hu, Hu, Xu, Zhou, Tian, XU, and Li]{yao2024spikedriven}
Yao, M., Hu, J., Hu, T., Xu, Y., Zhou, Z., Tian, Y., XU, B., and Li, G.
\newblock Spike-driven transformer v2: Meta spiking neural network architecture inspiring the design of next-generation neuromorphic chips.
\newblock In \emph{International Conference on Learning Representations}, 2024.

\bibitem[Yao et~al.(2025)Yao, Qiu, Hu, Hu, Chou, Tian, Liao, Leng, Xu, and Li]{yao2025scaling}
Yao, M., Qiu, X., Hu, T., Hu, J., Chou, Y., Tian, K., Liao, J., Leng, L., Xu, B., and Li, G.
\newblock Scaling spike-driven transformer with efficient spike firing approximation training.
\newblock \emph{IEEE Transactions on Pattern Analysis and Machine Intelligence}, 47\penalty0 (4):\penalty0 2973--2990, 2025.

\bibitem[Yun et~al.(2019)Yun, Han, Oh, Chun, Choe, and Yoo]{yun2019cutmix}
Yun, S., Han, D., Oh, S.~J., Chun, S., Choe, J., and Yoo, Y.
\newblock Cutmix: Regularization strategy to train strong classifiers with localizable features.
\newblock In \emph{Proceedings of the IEEE/CVF International Conference on Computer Vision}, pp.\  6023--6032, 2019.

\bibitem[Zhang et~al.(2018)Zhang, Cisse, Dauphin, and Lopez-Paz]{zhang2018mixup}
Zhang, H., Cisse, M., Dauphin, Y.~N., and Lopez-Paz, D.
\newblock mixup: Beyond empirical risk minimization.
\newblock \emph{International Conference on Learning Representations}, 2018.

\bibitem[Zhao et~al.(2025)Zhao, Shen, Dong, Li, and Zeng]{zhao2025improving}
Zhao, D., Shen, G., Dong, Y., Li, Y., and Zeng, Y.
\newblock Improving stability and performance of spiking neural networks through enhancing temporal consistency.
\newblock \emph{Pattern Recognition}, 159:\penalty0 111094, 2025.

\bibitem[Zheng et~al.(2021)Zheng, Wu, Deng, Hu, and Li]{zheng2021going}
Zheng, H., Wu, Y., Deng, L., Hu, Y., and Li, G.
\newblock Going deeper with directly-trained larger spiking neural networks.
\newblock In \emph{Proceedings of the AAAI Conference on Artificial Intelligence}, volume~35, pp.\  11062--11070, 2021.

\bibitem[Zhong et~al.(2024)Zhong, Hu, Liu, Huang, Ding, Yu, and Huang]{zhong2024towards}
Zhong, X., Hu, S., Liu, W., Huang, W., Ding, J., Yu, Z., and Huang, T.
\newblock Towards low-latency event-based visual recognition with hybrid step-wise distillation spiking neural networks.
\newblock In \emph{Proceedings of the 32nd ACM international conference on multimedia}, pp.\  9828--9836, 2024.

\bibitem[Zhou et~al.(2017)Zhou, Zhao, Puig, Fidler, Barriuso, and Torralba]{zhou2017scene}
Zhou, B., Zhao, H., Puig, X., Fidler, S., Barriuso, A., and Torralba, A.
\newblock Scene parsing through ade20k dataset.
\newblock In \emph{Proceedings of the IEEE conference on computer vision and pattern recognition}, pp.\  633--641, 2017.

\bibitem[Zhou et~al.(2023)Zhou, Zhu, He, Wang, YAN, Tian, and Yuan]{zhou2023spikformer}
Zhou, Z., Zhu, Y., He, C., Wang, Y., YAN, S., Tian, Y., and Yuan, L.
\newblock Spikformer: When spiking neural network meets transformer.
\newblock In \emph{International Conference on Learning Representations}, 2023.

\end{thebibliography}
\bibliographystyle{icml2026}

\newpage
\appendix
\onecolumn
\renewcommand{\thesection}{A}
\renewcommand{\thesubsection}{A.\arabic{subsection}}
\setcounter{section}{1}
\setcounter{subsection}{0}

\renewcommand{\thetable}{A\arabic{table}}
\setcounter{table}{0}
\renewcommand{\theequation}{A\arabic{equation}}
\setcounter{equation}{0}
\renewcommand{\thefigure}{A\arabic{figure}}
\setcounter{figure}{0}
\renewcommand{\thealgorithm}{A\arabic{algorithm}}
\setcounter{algorithm}{0}
\renewcommand{\thetheorem}{A\arabic{theorem}}
\setcounter{theorem}{0}
\renewcommand{\thecorollary}{A\arabic{corollary}}
\subsection{Leaky Integrate-and-fire (LIF) Neuron} \label{app:LIF}
The dynamics of a LIF neuron can be formulated as follows.
First, the membrane potential is updated at each timestep according to:
\begin{equation} \label{eq:vmem_func}
u_{i}^{l}[t] = \frac{1}{\tau} \left( v_{i}^{l}[t-1] + \sum\nolimits_{j} w_{ij}^{l}s_{j}^{l-1}[t] \right) \textrm{,}
\end{equation}
where $u$ denotes the membrane potential, $v$ the intermediate membrane state, $w$ the synaptic weights, and $s$ the input spike.
Indices $i$ and $j$ represent the post- and pre-synaptic neurons, respectively, and $l$ refers to the layer index.
The parameters $\tau$ and $t$ indicate the membrane time constant and discrete timestep.

A spike is emitted when the membrane potential surpasses a predefined threshold:
\begin{equation} \label{eq:spike_func}
s_{i}^{l}[t] = H\left(u_{i}^{l}[t] - V_{\textrm{th}}\right) \textrm{,}
\end{equation}
where H($\cdot$) is the Heaviside step function and $V_{\textrm{th}}$ is the firing threshold.

After spike firing, the membrane potential is reset based on the intermediate state using the following mechanism:
\begin{equation} \label{eq:reset_func}
v_{i}^{l}[t] = \left(u_{i}^{l}[t] - s_{i}^{l}[t]\right)s_{i}^{l}[t] + u_{i}^{l}[t]\left(1 - s_{i}^{l}[t]\right) \textrm{.}
\end{equation}

\subsection{Spatio-Temporal Backpropagation (STBP)} \label{app:stbp}
\begin{wrapfigure}{r}{0.3\textwidth}
\vspace{-1.5em}
\begin{center}
    \includegraphics[width=1\linewidth]{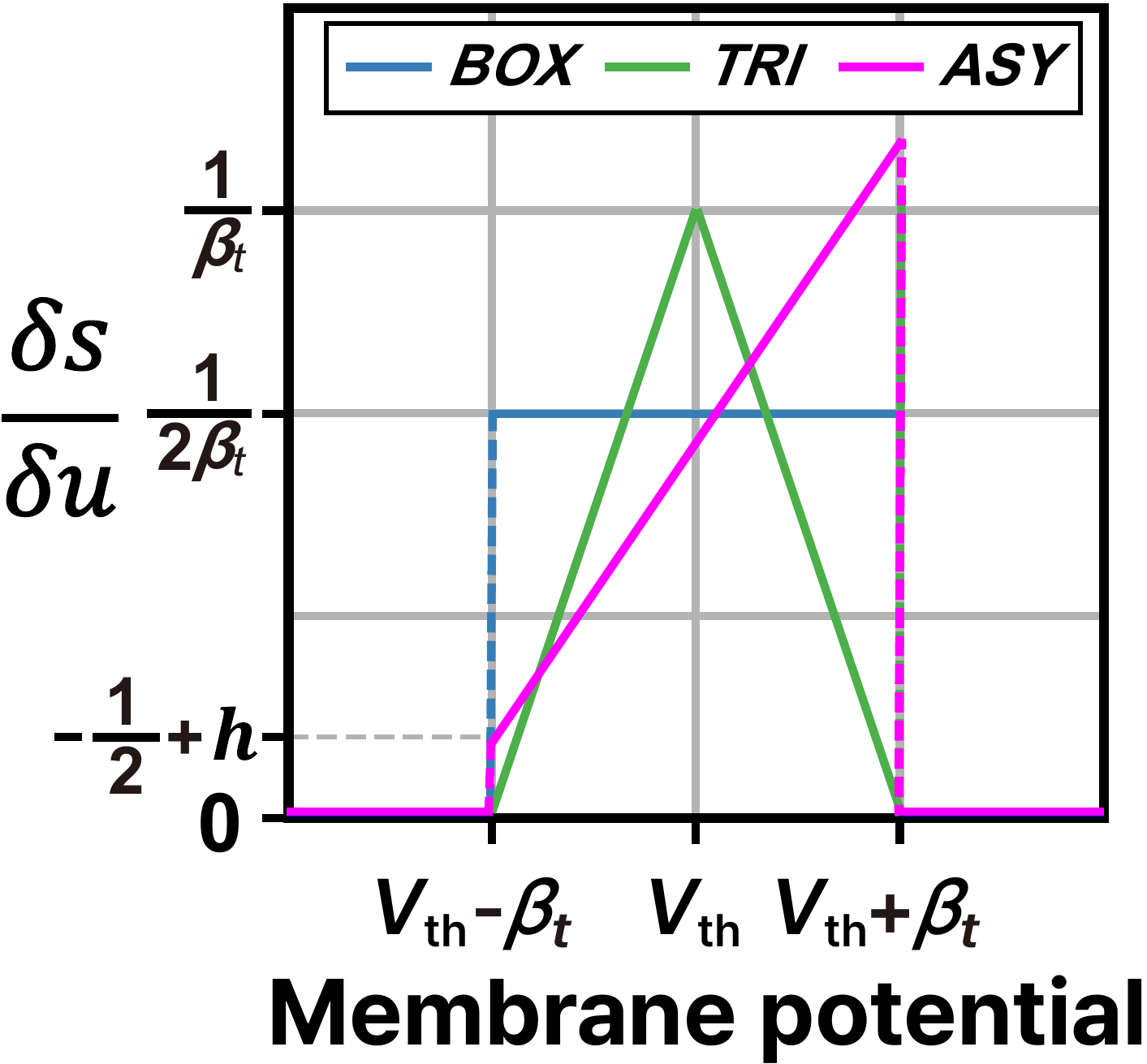}
\end{center}
\vspace{-0.5em}    
\caption{Visualization of surrogate gradient functions (\textit{BOX}, \textit{TRI}, \textit{ASY}).}
\vspace{-1.5em}
\label{fig:SG_function}
\end{wrapfigure}
SNNs require gradient propagation that considers not only spatial but also temporal variations.
STBP is considered a suitable backpropagation method for SNNs as it incorporates both spatial and temporal components.
The gradient of the loss $L$ with respect to the membrane potential $u^i_t[t]$ is defined as follows:
\begin{equation} \label{eq:STBP}
\frac{\partial L}{\partial u^{l}_{i}[t]} = \frac{\partial L}{\partial s^{l}_{i}[t]}\frac{\partial s^{l}_{i}[t]}{\partial u^{l}_{i}[t]} + \frac{\partial L}{\partial u^{l}_{i}[t+1]}\frac{\partial u^{l}_{i}[t+1]}{\partial u^{l}_{i}[t]} \textrm{,}
\end{equation}
here, $u[t]$ is the membrane potential at time $t$, and s[t] is the spike count at time $t$.
The $l$ and $i$ indicate the layer and neuron index, respectively.

In addition, the gradient of the loss for the weight $w^l$ in $l$th layer is given by:

\begin{equation} \label{eq:temporal_grad}
\frac{\partial L}{\partial w^{l}} = \sum_{t} {\frac{\partial L}{\partial w^{l}[t]}} = \sum_{t}{\frac{\partial L}{\partial u^{l}[t]} s^{l-1}[t]} \textrm{.}
\end{equation}

\subsection{Surrogate Gradients in STBP} \label{app:surrogate}

Surrogate gradient replaces the non-differentiable function $\frac {\partial \boldsymbol s}{\partial \boldsymbol u}$ with smooth approximated functions to enable gradient-based training.
Representative surrogate gradient functions include the boxcar function and the triangle function, which are defined as follows:
\begin{equation} \label{ea:gradient_boxcar}
\frac {\partial \boldsymbol s}{\partial \boldsymbol u} = \frac {1}{2\beta}\cdot 1 (|u-V_\textrm{th}|<\beta) \textrm{,}
\end{equation}
\begin{equation} \label{ea:gradient_trinangle}
\frac {\partial \boldsymbol s}{\partial \boldsymbol u} = \frac {1}{\beta^2}\cdot \operatorname{max}(0, \beta-|u-V_\textrm{th}|) \textrm{.}
\end{equation}
The above equations represent the \textit{BOX} function and the \textit{TRI} function, respectively, and $\beta$ is a parameter representing the effective window.
These functions focus on mimicking the Dirac delta function, and their area remains constant regardless of the effective window.
The function shapes are illustrated in Fig.~\ref{fig:SG_function}

\subsection{Fisher Information Matrix (FIM)} \label{app:FIM}

The FIM is defined as:
\begin{equation}
\mathbf{F}(\theta) = \mathbb{E}_{(x,y) \sim p(x,y;\theta)} 
\left[ \nabla_\theta \log p(y \mid x; \theta) \nabla_\theta \log p(y \mid x; \theta)^\top \right] \textrm{,}
\label{eq:FIM}
\end{equation}

where $\theta$ denotes the model parameters, $(x, y)$ is an input-output pair sampled from the data distribution $p(x, y; \theta)$, and $p(y \mid x; \theta)$ is the model’s conditional probability. 
The operator $\nabla_\theta$ represents the gradient with respect to $\theta$.
The FIM measures the variance of the gradient of the log-likelihood, reflecting the sensitivity of the model parameters to changes in the data distribution. 
In this context, the eigenvalues of the FIM quantify the curvature of the loss surface in various parameter directions.

\subsection{Proof of the Derivatives of Symmetric Surrogate Gradients }
\label{app:symmetric_scaling}

Let $f:[\theta-\beta,\theta+\beta]\to\mathbb{R}_{\ge0}$ be a symmetric surrogate function, with a unique maximum at $u=\theta$, and
\[
f(\theta-\beta)=f(\theta+\beta)=m\ (\ge 0), 
\qquad 
\int_{\theta-\beta}^{\theta+\beta} f(u)\,du = c > 0.
\]
Set the \emph{effective area above the floor} by
\[
c_{\mathrm{eff}} \;:=\; \int_{\theta-\beta}^{\theta+\beta} \bigl(f(u)-m\bigr)\,du \;=\; c - 2\beta m .
\]
We assume $c_{\mathrm{eff}}>0$ (otherwise $f\equiv m$ on the window and all derivatives vanish).

Define $g(u):=f(u)-m$. Then $g$ is symmetric, $g(\theta\pm\beta)=0$, and 
$\int_{\theta-\beta}^{\theta+\beta} g(u)\,du=c_{\mathrm{eff}}$. Let $M_f:=f(\theta)$ and $M_g:=g(\theta)=M_f-m$.

\paragraph{Height.}
By symmetry,
\[
c_{\mathrm{eff}} 
= 2\int_{\theta}^{\theta+\beta} g(u)\,du 
\le 2\beta\, M_g
\quad\Rightarrow\quad
M_g \ge \frac{c_{\mathrm{eff}}}{2\beta},
\]
hence
\[
M_f \;=\; m + M_g \;\ge\; m + \frac{c_{\mathrm{eff}}}{2\beta} 
\;=\; \frac{c}{2\beta}.
\]
In particular, $\|f\|_\infty \ge M_f \ge c/(2\beta)$.

\paragraph{First derivative.}
On $[\theta,\theta+\beta]$, $g(\theta)=M_g$ and $g(\theta+\beta)=0$. By the mean value theorem there exists $\xi$ with
\[
|g'(\xi)|=\frac{M_g}{\beta}.
\]
Since $g'=f'$, we obtain
\[
\|f'\|_\infty \;\ge\; \frac{M_g}{\beta} 
\;\ge\; \frac{c_{\mathrm{eff}}}{2\beta^2}
\;=\; \frac{c-2\beta m}{2\beta^2}.
\]

\paragraph{Conclusion.}
With $c_{\mathrm{eff}}=c-2\beta m>0$,
\[
\|f\|_\infty \;\ge\; \frac{c}{2\beta} \;=\; \Omega(\beta^{-1}),
\qquad
\|f'\|_\infty \;\ge\; \frac{c-2\beta m}{2\beta^2} \;=\; \Omega(\beta^{-2}).
\]
Substituting into
\begin{equation} \label{eq:second_deriv_sg}
\frac{\partial^2 L}{\partial w^2}
= \frac{\partial^2 L}{\partial H^2}(H'(u)x)^2
+ \frac{\partial L}{\partial H}\,H''(u)\,x^2
= \frac{\partial^2 L}{\partial H^2}(f(u)x)^2
+ \frac{\partial L}{\partial H}\,f'(u)\,x^2,
\end{equation}
we obtain
\[
\left| \frac{\partial^2 L}{\partial w^2} \right|_{SNN} = \Omega(\frac{x^2}{\beta^2}) \textrm{.}
\]
which shows sharper curvature than the $\mathcal{O}(x^2)$ scaling of smooth DNNs.

\subsection{\textit{TRI} induces larger curvature than \textit{BOX} under area normalization}
\label{app:ls_land_tri_box_comp}

\paragraph{Setup.}
Consider the surrogate gradients supported on $[\,\theta-\beta,\theta+\beta\,]$ with unit area.  
The boxcar (\textit{BOX}) and triangular (\textit{TRI}) surrogates are given by Eqs.~\ref{ea:gradient_boxcar} and~\ref{ea:gradient_trinangle}.
For a weight $w$ with input $x$ and pre-activation $u=wx$, the Hessian contribution is stated in Eq.~\ref{eq:second_deriv_sg}.

\paragraph{\textit{BOX} properties.}
The \textit{BOX} function has support length $2\beta$, constant height $1/(2\beta)$, and area $1$.  
Therefore,
\[
\|f_{\mathrm{BOX}}\|_\infty=\tfrac{1}{2\beta}, 
\qquad \|f'_{\mathrm{BOX}}\|=0 \text{ almost everywhere on }(\theta-\beta,\theta+\beta).
\]

\paragraph{\textit{TRI} properties.}
The \textit{TRI} function is piecewise linear with the maximum at $u=\theta$ given by $f_{\mathrm{TRI}}(\theta)=1/\beta$, twice the peak of \textit{BOX} under the same area constraint.  
The slopes are $\pm 1/\beta^2$, hence
\[
\|f_{\mathrm{TRI}}\|_\infty=\tfrac{1}{\beta},
\qquad \|f'_{\mathrm{TRI}}\|_\infty=\tfrac{1}{\beta^2}.
\]

\paragraph{Comparison.}
Relative to \textit{BOX},
\[
\|f_{\mathrm{TRI}}\|_\infty = 2\,\|f_{\mathrm{BOX}}\|_\infty,
\qquad
\|f'_{\mathrm{TRI}}\|_\infty > \|f'_{\mathrm{BOX}}\|_\infty.
\]
Thus, \textit{TRI} attains both higher peak and steeper slope.

\paragraph{Implication for curvature.}
Substituting into Eq.~\ref{eq:second_deriv_sg}, the first term scales with $f(u)^2$ and is therefore at least four times larger for \textit{TRI} near $u=\theta$ compared to \textit{BOX}.  
The second term involves $f'(u)$, which is strictly larger for \textit{TRI} inside the window since $f'_{\mathrm{BOX}}=0$ almost everywhere.  
As a result, both contributions to the curvature are enhanced under \textit{TRI}, leading to the following conclusion:
\[
\Bigl|\tfrac{\partial^2 L}{\partial w^2}\Bigr|_{\text{TRI}}
\;\ge\; 2 \cdot
\Bigl|\tfrac{\partial^2 L}{\partial w^2}\Bigr|_{\text{BOX}}
\]

\subsection{Adaptive Surrogate Gradients Considering SGV and TGC} \label{app:adaptive}
\begin{algorithm}[h]
\caption{Adaptive Surrogate Gradients Considering SGV and TGC}
\label{alg:adaptive}
\begin{algorithmic}[1]
    \STATE \textbf{Input:} iteration i, $\frac{\partial L}{\partial s[t]}$
    \STATE \textbf{Output:} $\frac{\partial L}{\partial u[t]}$
    
    \FOR{$l = L$ to $1$} 
        \IF{$i=0$} 
            \STATE $\beta^{l}_{[t:0,\dots,T]} \leftarrow \beta_{\textrm{init}}$
        \ENDIF
        \IF{(($i \bmod i_{\textrm{update}}$) = 0) \textrm{and} ($i\neq0$)}
            \FOR{$t = T$ to $1$}
                \IF{$t=T$}
                    \STATE $\beta^{l}_{t} \leftarrow \beta_{search} (SGV, \frac{\partial L}{\partial s[t]}, u^{l}[t])$
                \ELSE
                    \STATE $\beta^{l}_{t} \leftarrow \beta_{search} (TGC, g^{l}_{\textrm{cur}}[t], \frac{\partial L}{\partial s[t]}, u^{l}[t])$
                \ENDIF
            \ENDFOR    
        \ELSE
            \FOR{$t==T$ to $1$}
                \STATE $g^{l}_{\textrm{cur}}[t] \leftarrow f(u^{l}[t],\beta^{l}_{t})$ 
            \ENDFOR
        \ENDIF
        
    \ENDFOR
\end{algorithmic}
\end{algorithm}
Alg.~\ref{alg:adaptive} outlines the adaptive surrogate gradient procedure that simultaneously considers both SGV and TGC.
At the beginning of training, the effective window $\beta$ is initialized. 
During every update step, $\beta$ is adaptively adjusted according to the observed gradient distribution: SGV is used to calibrate the final timestep, while TGC is employed for earlier timesteps to maintain temporal consistency of error signals.
When not in an update step, the algorithm computes the current layer gradient using the stored $\beta$.
In this way, the method dynamically tunes $\beta$ across layers and timesteps.

\subsection{Beta Search via Bayesian Optimization} \label{app:beta_search}
\begin{algorithm}[h!]
\caption{Beta Search via Bayesian Optimization}
\label{alg:beta_search}
\begin{algorithmic}[1]
    \STATE \textbf{Input:} metric $M$, $g_{\mathrm{ref}}$, $\frac{\partial L}{\partial s[t]}$, $u[t]$, $t$, $j$
    \STATE \textbf{Output:} $\beta$

    \STATE $\beta_{\mathrm{min}}  \ \leftarrow \beta_{\mathrm{min,init}}$, $\beta_{\mathrm{max}}\leftarrow \beta_{\mathrm{max,init}}$
    
        \STATE $\boldsymbol{\beta}_{\mathrm{obs}} \gets \mathrm{Random}(\beta_{\mathrm{min}}, \beta_{\mathrm{max}}, n_{\mathrm{obs}})$
        \IF{$t = T$}
            \STATE $\boldsymbol{M} \gets SGV(\boldsymbol{\beta}_{\mathrm{obs}}, u[t], 
            \frac{\partial L}{\partial s[t]})$ 
        \ELSE
            \STATE $\boldsymbol{M} \gets TGC(\boldsymbol{\beta}_{\mathrm{obs}}, g_{ref}, u[t], \frac{\partial L}{\partial s[t]})$ 
        \ENDIF
        \STATE $f_{\mathrm{best}} \gets \max(\boldsymbol{M})$
        \STATE $\beta_{best} \gets \beta_{obs}[\operatorname{argmax(M)}]$

        \STATE $\beta_{min} \gets \operatorname{max}(\beta_{best} - \delta,\beta_\mathrm{min}$)
        \STATE $\beta_{max} \gets \operatorname{min}(\beta_{best} + \delta,\beta_\mathrm{max}$)

        \STATE $\boldsymbol{\beta}_{\mathrm{eval}} \gets \mathrm{Uniform}(\beta_{\mathrm{min}}, \beta_{\mathrm{max}}, n_{\mathrm{eval}})$

        \STATE $\mu_s, \sigma_s \gets \mathrm{GP}(\boldsymbol{\beta}_{\mathrm{obs}}, \boldsymbol{M}, \boldsymbol{\beta}_{\mathrm{eval}})$

        \STATE $\mathrm{EI} \gets \mathrm{ExpectedImprovement}(\mu_s, \sigma_s, f_{\mathrm{best}})$

        \STATE $\beta^* \gets \arg\max_{\beta} \mathrm{EI}$

        \IF{$\beta^* < f_\mathrm{best}$}
            \STATE$\beta^* \gets \beta_{\mathrm{best}}$
        \ENDIF
    
    \STATE $\beta \gets \beta^*$
\end{algorithmic}
\end{algorithm}

Alg.~\ref{alg:beta_search} describes the adaptive search strategy for the effective window $\beta$. 
The method initializes a search range and samples candidates to evaluate the training metric.
Depending on the current timestep \textit{t}, either SGV or TGC is used to compute the evaluation metric. 
A Gaussian Process (GP) surrogate model is then fitted over the observed results, and the expected improvement (EI) criterion guides the selection of the next candidate $\beta$.
The algorithm iteratively narrows down the search interval around the best candidate, ensuring efficient exploration while avoiding unstable regions. 
The final $\beta$ is set to the candidate with the highest improvement score.
In our implementation, we fit a Gaussian Process surrogate with an RBF kernel, and the next candidate is proposed using the Expected Improvement criterion.

\subsection{Theoretical Analysis}

\begin{theorem}[CV-Minimizing Symmetric Function under Area and Boundary Constraints]~\label{th:app_cv_min_syn_func}
Let $f: [a, b] \to \mathbb{R}_{\ge 0}$ be a function satisfying, where $a=\theta-\beta$ and $b = \theta + \beta$:
\begin{itemize}
    \item \textbf{(Symmetry)}: \( f(u) = f(a + b - u) \) for all \( u \in [a,b] \)
    \item \textbf{(Boundary condition)}: \( f(a) = f(b) = 0 \)
    \item \textbf{(Nonnegativity)}: \( f(u) \ge 0 \)
    \item \textbf{(Area constraint)}: \( \int_a^b f(u)\,du = c > 0 \)
\end{itemize}
Let the weight function be given by the Gaussian density
\[
p(u) := \frac{1}{\sqrt{2\pi}\sigma} \exp\left( -\frac{(u - \mu)^2}{2\sigma^2} \right), \quad \text{with } \mu < a,
\]
so that \( p(u) \) is strictly decreasing and convex on \( [a, b] \).

Then the unique function \( f^* \) that minimizes the CV
\[
\mathbf{CV_{sim}}[f] := \frac{ \sqrt{ \int_a^b f(u)^2 p(u)\,du - \left( \int_a^b f(u) p(u)\,du \right)^2 } }{ \int_a^b f(u) p(u)\,du }, 
\]
over all such admissible functions is the symmetric triangular function
\[
f^*(u) := \frac{4c}{(b - a)^2} \cdot \min(u - a,\ b - u).
\]
\end{theorem}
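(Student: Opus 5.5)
The plan is to rewrite the objective in a form where Cauchy--Schwarz and a variational argument apply, and then to check what the constraints actually force. Write $A[f]:=\int_a^b f p\,du$, $B[f]:=\int_a^b f^2 p\,du$ and $m:=\int_a^b p\,du\in(0,1)$. Then
\[
\mathbf{CV_{sim}}[f]^2=\frac{B[f]}{A[f]^2}-1 ,
\]
so minimizing the CV is the same as minimizing the scale-invariant ratio $B/A^2$. The area constraint only fixes the scale and plays no role in the ratio. Next, use the symmetry $f(u)=f(a+b-u)$ to fold the problem onto the half-interval $[a,\theta]$. On that half, $p$ is replaced by $\tilde p(u)=p(u)+p(a+b-u)$. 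Because $p$ is decreasing and convex on $[a,b]$, $\tilde p$ is monotone on $[a,\theta]$, and this is where the Gaussian hypothesis would enter.

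The first substantive step is a first-variation computation. Perturbing $f$ by a symmetric $h$ with $\int h=0$ gives the stationarity condition
\[
\frac{2fp}{A^2}-\frac{2Bp}{A^3}=\nu
\]
for a constant multiplier $\nu$, that is,
\[
f=\frac{B}{A}+\frac{\nu A^2}{2p}.
\]
This stationary point is not piecewise linear, so the triangle cannot come out of an unconstrained Euler--Lagrange argument.

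The step I expect to be the main obstacle is showing that the triangle is optimal at all. In the class as literally stated, Cauchy--Schwarz gives $A^2\le mB$, so $B/A^2\ge 1/m$, with equality only for $f$ constant on $[a,b]$. Symmetric trapezoids with area $c$ and ever-steeper sides satisfy every listed constraint, including $f(a)=f(b)=0$ and nonnegativity, and their CV tends to the box value $\sqrt{1/m-1}$. That value is strictly smaller than the CV of $f^*$, since $f^*$ is non-constant on a set of positive $p$-measure. So the pointwise boundary condition does not separate the triangle from the box, and I would not expect the statement to hold without an additional restriction on the admissible class.

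I would therefore first try to identify the restriction implicitly intended, and prove the theorem under it. The most natural candidate is a slope (Lipschitz) bound $|f'|\le K$ with $K=4c/(b-a)^2$, the slope of $f^*$. With this bound, the boundary conditions give $f(u)\le K\min(u-a,b-u)$ pointwise. Integrating, the area is at most $c$, with equality if and only if $f=f^*$ almost everywhere (and everywhere, by continuity). The admissible set is then a singleton, which gives existence and uniqueness of the minimizer at once.

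A more interesting alternative is to restrict to the two-parameter family of symmetric piecewise-linear tent/trapezoid shapes. There I would compute $B/A^2$ explicitly using the linear approximation of $p$ on $[a,b]$, the same device Thm.~\ref{th:cv_comp} uses, and show monotonicity in the plateau width via the convexity of $p$. The proof would finish by stating the assumption explicitly and noting where the convexity and decrease of $p$ are used.
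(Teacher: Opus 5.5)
Your refutation is correct, and it lands on the step where the paper's own proof fails. The paper folds onto the half-interval $[a,\theta]$. In doing so it replaces $\int_\theta^b g(a+b-u)^2p(u)\,du$ by $\int_a^\theta g^2p\,du$, i.e.\ it treats $p$ as if it were symmetric about $\theta$; your folded weight $\tilde p(u)=p(u)+p(a+b-u)$ is the correct one. It then simply asserts that the quotient $\int g^2p/(\int gp)^2$ is minimized by the linear $g(u)=\alpha(u-a)$. Your Cauchy--Schwarz bound refutes that assertion for either weight. The quotient is at least the reciprocal of the total weight, with equality only for constant $g$, and the pointwise condition $g(a)=0$ is no obstruction.

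You do not even need the limiting trapezoids. Since no continuity is imposed, $f_0(u)=c/(b-a)$ on $(a,b)$ with $f_0(a)=f_0(b)=0$ is itself admissible. It attains $\mathbf{CV_{sim}}[f_0]^2=1/m-1$, with $m=\int_a^b p\,du$ as in your notation. For $c=1$ this is exactly the paper's \textit{BOX} surrogate of Eq.~\ref{ea:gradient_boxcar}, whose indicator uses the strict inequality $|u-V_{\mathrm{th}}|<\beta$. So in the class as written a minimizer exists, it is the box (unique up to null sets), and $f^*$ is strictly worse. If continuity is added, the infimum is not attained at all. The statement is false, and no argument along the paper's lines can establish it.

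The rest of your plan should be dropped rather than pursued.
\begin{itemize}
\item \emph{First variation.} Your computation already settles the matter. Multiplying the stationarity condition by $f$ and integrating gives $0=\nu c$, which is the scale invariance of $B/A^2$. So $\nu=0$, and the only critical point is the constant $f=B/A$, the box again.
\item \emph{Lipschitz restriction.} The bound $|f'|\le 4c/(b-a)^2$ makes $f^*$ the only admissible function, so the resulting statement is vacuous: it uses neither $p$, nor symmetry, nor the CV.
\item \emph{Trapezoid family.} This family cannot yield the theorem because it contains your own counterexample. The box end attains the global minimum $1/m$ for every $p$. Hence if the CV is monotone in the plateau width $w$ at all, it decreases toward the box, and the triangle is the \emph{worst} member. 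For constant $p$ one computes $B/A^2=\frac{4L(L+2w)}{3m(L+w)^2}$, which falls from $\frac{4}{3m}$ at $w=0$ to $\frac{1}{m}$ at $w=L$.
\item \emph{Convexity of $p$.} The hypothesis $\mu<a$ does not make $p$ convex on $[a,b]$; that needs $a-\mu\ge\sigma$. Your monotonicity claim for $\tilde p$ therefore rests on an unjustified hypothesis.
\end{itemize}

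The right write-up is the counterexample. Note that the Cauchy--Schwarz bound holds for every nonnegative $f$, symmetric or not. So the downstream use of this theorem in Thm.~\ref{th:cv_comp}, which extends an asymmetric-versus-triangle comparison to all symmetric surrogates, also fails: the box beats every non-constant surrogate, including the asymmetric one.
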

\begin{proof} 
Let $m = \frac{a + b}{2}$ be the midpoint of the interval.
Due to the symmetry condition, any admissible function $f$ is uniquely determined by its restriction $g$ to $[a, m]$, with the reconstruction:
\[
f(u) =
\begin{cases}
g(u), & u \in [a, m], \\
g(a + b - u), & u \in [m, b].
\end{cases}
\]
Since \( p(u) \) is strictly decreasing and convex on \( [a, b] \), it is also decreasing on \( [a, m] \). Over this domain, define the following Rayleigh-type quotient:
\[
R[g] := \frac{ \int_a^m g(u)^2 p(u)\,du + \int_m^b g(a + b - u)^2 p(u)\,du }{ \left( \int_a^m g(u) p(u)\,du + \int_m^b g(a + b - u) p(u)\,du \right)^2 }.
\]
By change of variable \( u' = a + b - u \), and using symmetry of \( f \), we have:
\[
R[g] = \frac{ 2 \int_a^m g(u)^2 p(u)\,du }{ \left( 2 \int_a^m g(u) p(u)\,du \right)^2 } = \frac{ \int_a^m g(u)^2 p(u)\,du }{ \left( \int_a^m g(u) p(u)\,du \right)^2 }.
\]

This Rayleigh quotient is minimized when \( g(u) \) is proportional to a linear function increasing from \( a \) to \( m \). That is:
\[
g^*(u) = \alpha(u - a), \quad u \in [a, m],
\]
with boundary condition \( g(a) = 0 \). Extending symmetrically gives:
\[
f^*(u) = \alpha \cdot \min(u - a, b - u).
\]
To satisfy the area constraint:
\[
\int_a^b f^*(u)\,du = \int_a^m \alpha(u - a)\,du + \int_m^b \alpha(b - u)\,du = \alpha \cdot \frac{(b - a)^2}{4} = c,
\]
which yields:
\[
\alpha = \frac{4c}{(b - a)^2}.
\]
Thus, the minimizing function is:
\[
f^*(u) = \frac{4c}{(b - a)^2} \cdot \min(u - a, b - u),
\]
which is the symmetric triangular function.
\end{proof}

\begin{theorem}[CV Comparison of Asymmetric and Symmetric Surrogates]~\label{th:app_cv_comp}
Let $f_{\mathrm{asy}}(u)$ and $f_{\mathrm{sym}}(u)$ be asymmetric and symmetric surrogate gradient functions defined over $[a, b]$, satisfying the boundary condition $f(a)=f(b)=0$, nonnegativity $f(u)\geq0$, and area constraint $\int_a^b f(u)\,du = c$, where $a=\theta-\beta$ and $b = \theta + \beta$.
Suppose the membrane potential $u \sim \mathcal{N}(\mu, \sigma^2)$ with $\mu < a$, so that $p(u)$ is strictly decreasing on $[a, b]$.
Then, under a linear approximation of the Gaussian, we have:
\[
\mathrm{CV}_{\mathrm{asy}} < \mathrm{CV}_{\mathrm{sym}} \quad \text{if } L \kappa> \sigma^2.
\]
\end{theorem}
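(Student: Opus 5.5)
The plan is to reduce the comparison to an explicit moment computation. By Thm.~\ref{th:app_cv_min_syn_func}, every admissible symmetric $f_{\mathrm{sym}}$ satisfies $\mathrm{CV}_{\mathrm{sym}} \ge \mathrm{CV}[f^*]$, where $f^*$ is the symmetric triangle. It therefore suffices to show $\mathrm{CV}_{\mathrm{asy}} < \mathrm{CV}[f^*]$ for one fixed asymmetric profile. I will use the asymmetric triangle whose mass leans toward the threshold side, namely $f_{\mathrm{asy}}$ rising linearly from $a$ to a peak $u_p$ near $b$ and then dropping to $0$ at $b$. This respects $f(a)=f(b)=0$ and has area $c$, and it approximates the \textit{ASY} ramp of Eq.~\ref{eq:asymmetric_sg_function}.

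Since $\mathrm{CV}$ is scale invariant, I normalize by writing $s=(u-a)/L\in[0,1]$ and setting $c=1$. I also read $\mathrm{CV}$ with $p$ normalized on $[a,b]$, so that
\[
\mathrm{CV}^2+1=\frac{\int f^2p \,\int p}{\left(\int fp\right)^2}.
\]
For the linear approximation of the Gaussian, I Taylor-expand $\log p$ at $a$. Because $(\log p)'(a)=-(a-\mu)/\sigma^2=-\kappa/\sigma^2$, this gives
\[
p(u)\approx p(a)\,(1-\varepsilon s), \qquad \varepsilon := \frac{L\kappa}{\sigma^2}.
\]
The hypothesis $L\kappa>\sigma^2$ is then exactly $\varepsilon>1$. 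I will keep in mind that positivity of the linearized weight on $[0,1]$ needs $\varepsilon\le 1$, so the argument must either truncate the weight or treat the linearization only formally.

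Next I compute the three moments $\int p$, $\int fp$ and $\int f^2 p$ for both profiles. Each is a polynomial in $\varepsilon$, because only $\int f$, $\int fs$, $\int f^2$ and $\int f^2 s$ are needed. For the symmetric triangle, symmetry gives $\int f s=\tfrac12\int f$ and $\int f^2 s=\tfrac12\int f^2$, so the ratio collapses to the constant $4/3$. The linear factor $1-\varepsilon/2$ cancels, and so $\mathrm{CV}^2_{\mathrm{sym}}=1/3$ for every $\varepsilon$. For the asymmetric profile I get $\mathrm{CV}^2_{\mathrm{asy}}+1=\frac{A(\varepsilon)B(\varepsilon)}{C(\varepsilon)^2}$ with linear $A,B,C$. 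The inequality $\mathrm{CV}_{\mathrm{asy}}<\mathrm{CV}_{\mathrm{sym}}$ then becomes a quadratic inequality in $\varepsilon$. After cancelling the common constant term it reduces to $\alpha\,\varepsilon<\gamma\,\varepsilon^2$, that is $\varepsilon>\alpha/\gamma$, with $\alpha,\gamma$ depending on $u_p$.

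The main obstacle is matching the threshold exactly to $\varepsilon>1$. In the limiting ramp case $u_p=b$, a quick check gives $\varepsilon>6/5$ rather than $1$. So the constant is sensitive to the precise asymmetric shape, including the position of the peak and the bias $h$. First I would treat $u_p$ (equivalently $h$) as a free parameter and show that some admissible choice yields $\alpha/\gamma\le 1$. Failing that, I would state the result as $\varepsilon>\varepsilon_0$ with an explicit $\varepsilon_0$ of order one, and note that $L\kappa>\sigma^2$ matches it up to this constant. Strictness of the inequality then follows from the strict form $\alpha\varepsilon<\gamma\varepsilon^2$, together with the uniqueness statement in Thm.~\ref{th:app_cv_min_syn_func}.
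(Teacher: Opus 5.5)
Your strategy is the paper's own. You use Thm.~\ref{th:app_cv_min_syn_func} to replace $f_{\mathrm{sym}}$ by the symmetric triangle, linearize $p$ at $a$, and compare moments. The paper takes the pure ramp $f_{\mathrm{asy}}=\frac{2c}{L^2}(u-a)$, which is your case $u_p=b$ and violates $f(b)=0$, and it uses the unnormalized CV. Since $\int_a^b p$ enters $\mathrm{CV}^2+1$ as a common factor, your normalization does not change the comparison.

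The gap is your last step, and the repair you propose cannot work. For the triangle with peak at $s=q$ one has $\int fs=\frac{1+q}{3}$, $\int f^2=\frac43$ and $\int f^2s=\frac{1+2q}{3}$, and $\mathrm{CV}_{\mathrm{asy}}<\mathrm{CV}_{\mathrm{sym}}$ becomes
\[
(2q-1)\,\varepsilon\,\bigl[(4q+1)\varepsilon-6\bigr]>0 .
\]
So $\alpha/\gamma=6/(4q+1)\ge\frac65$ for every $q\in(\frac12,1]$, with equality only at the inadmissible ramp. Nor is $h$ a free parameter: $f(a)=0$ forces $h=\frac12$, which is again the ramp. Enlarging the family does not help meaningfully either. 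Asymmetric near-\textit{BOX} trapezoids beat the triangle for every $\varepsilon<2$, so the threshold would then say nothing about \textit{ASY}. What your argument proves is $L\kappa>\frac65\sigma^2$. For $1<\varepsilon\le\frac65$ the stated inequality is actually false under the linearization, for the ramp and for every triangle with $q>\frac12$. Your fallback is the correct outcome, but it is strictly weaker than the statement.

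You should not expect a correct calculation to give $\varepsilon>1$; the paper reaches it only through errors.
\begin{itemize}
\item It computes the symmetric-triangle moments by folding $\int_\theta^b(b-u)p\,du$ onto $\int_0^{L/2}x(A-Bx)\,dx$. This is invalid for a non-even weight: the correct mean is $c(A-\frac12BL)$, not $c(A-\frac13BL)$, in agreement with your $\mathrm{CV}_{\mathrm{sym}}^2=\frac13$.
\item Its displayed CV ratio carries the squared-mean terms with the wrong sign. With the right sign these terms cancel from the comparison, leaving exactly $\varepsilon>\frac65$.
\item Its final expression $\frac34\bigl(\frac{t-2}{2t-3}\bigr)^2$, with $t=L\kappa/\sigma^2$, exceeds $1$ for $t$ roughly in $[1.12,1.65]$, so it contradicts the stated threshold.
\end{itemize}

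Two further problems affect your proposal and the paper alike. First, your positivity worry is justified. For $\varepsilon>1$ the linearized weight is negative near $b$, and for $\varepsilon>\frac43$ the ramp's $\int f^2p$ is negative. The linearized comparison is therefore meaningful only in a thin band above $\frac65$.

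Second, the reduction via Thm.~\ref{th:app_cv_min_syn_func} is unsound. By Cauchy--Schwarz, $(\int fp)^2\le\int f^2p\,\int p$, with equality only for constant $f$. So for the true Gaussian weight, admissible symmetric trapezoids approaching \textit{BOX} have CV strictly below that of the triangle, and below that of the ramp. A correct version of the theorem can only compare against the symmetric triangle, not against every admissible $f_{\mathrm{sym}}$.
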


\begin{proof}
We define \( L := b - a \) and approximate the Gaussian by a first-order Taylor expansion around $u=a$:
\[
A := p(a) = \frac{1}{\sqrt{2\pi} \sigma} \exp\left(-\frac{(a - \mu)^2}{2\sigma^2}\right), \quad
\]

\[
B := -p'(a) = \frac{a - \mu}{\sigma^2} \cdot A.
\]
Let \( \kappa:= a - \mu \), so that:
\[
A = \frac{1}{\sqrt{2\pi} \sigma} e^{-\frac{\kappa^2}{2\sigma^2}}, \quad B = \frac{\kappa}{\sigma^2} A.
\]

Under the area constraint, define the surrogates:
\begin{align*}
f_{\mathrm{asy}}(u) &= \frac{2c}{L} \cdot \frac{u - a}{L}, \\
\text{and} \quad
f_{\mathrm{sym\_tri}}(u) &=
\begin{cases}
\frac{2c}{L} \cdot \frac{u - a}{L/2}, & u \le \theta, \\
\frac{2c}{L} \cdot \frac{b - u}{L/2}, & u > \theta,
\end{cases}
\quad \text{where } \theta = \frac{a + b}{2}.
\end{align*}

\textbf{Asymmetric surrogate:}
\[
f_{\mathrm{asy}}(u) = \frac{2c}{L^2}(u - a)
\]

\textit{Expectation:}
\begin{align*}
\mathbb{E}[f_{\mathrm{asy}}]
&= \int_a^b f_{\mathrm{asy}}(u) \cdot p(u)\,du \\
&= \frac{2c}{L^2} \int_a^b (u - a)(A - B(u - a))\,du \\
&= \frac{2c}{L^2} \int_0^L x(A - Bx)\,dx \\
&= \frac{2c}{L^2} \left( A \cdot \frac{L^2}{2} - B \cdot \frac{L^3}{3} \right) \\
&= c \left( A - \frac{2}{3} B L \right)
\end{align*}

\textit{Second moment:}
\begin{align*}
\mathbb{E}[f_{\mathrm{asy}}^2]
&= \left( \frac{2c}{L^2} \right)^2 \int_0^L x^2 (A - Bx)\,dx \\
&= \frac{4c^2}{L^4} \left( A \cdot \frac{L^3}{3} - B \cdot \frac{L^4}{4} \right) \\
&= \frac{4c^2}{L} \left( \frac{A}{3} - \frac{B L}{4} \right)
\end{align*}

\textit{Variance:}
\begin{align*}
\mathrm{Var}[f_{\mathrm{asy}}]
&= \mathbb{E}[f_{\mathrm{asy}}^2] - \mathbb{E}[f_{\mathrm{asy}}]^2 \\
&= \frac{4c^2}{L} \left( \frac{A}{3} - \frac{B L}{4} \right)
- c^2 \left( A - \frac{2}{3} B L \right)^2
\end{align*}

\vspace{0.5em}
\textbf{Symmetric surrogate (triangle function):}
\[
f_{\mathrm{sym\_tri}}(u) =
\begin{cases}
\frac{4c}{L^2}(u - a), & u \in [a, \theta], \\
\frac{4c}{L^2}(b - u), & u \in [\theta, b]
\end{cases}, \quad \theta = \frac{a + b}{2}
\]

\textit{Expectation:}
\begin{align*}
\mathbb{E}[f_{\mathrm{sym\_tri}}]
&= \frac{4c}{L^2} \left[ \int_a^\theta (u - a)p(u)\,du + \int_\theta^b (b - u)p(u)\,du \right] \\
&= \frac{8c}{L^2} \int_0^{L/2} x(A - Bx)\,dx \\
&= \frac{8c}{L^2} \left( A \cdot \frac{(L/2)^2}{2} - B \cdot \frac{(L/2)^3}{3} \right) \\
&= c \left( A - \frac{1}{3} B L \right)
\end{align*}

\textit{Second moment:}
\begin{align*}
\mathbb{E}[f_{\mathrm{sym\_tri}}^2]
&= \frac{8c^2}{L^4} \int_0^{L/2} x^2(A - Bx)\,dx \cdot 2 \\
&= \frac{16c^2}{L^4} \left( A \cdot \frac{(L/2)^3}{3} - B \cdot \frac{(L/2)^4}{4} \right) \\
&= \frac{2c^2}{L} \left( \frac{A}{3} - \frac{B L}{8} \right)
\end{align*}

\textit{Variance:}
\[
\mathrm{Var}[f_{\mathrm{sym\_tri}}] = \frac{2c^2}{L} \left( \frac{A}{3} - \frac{B L}{8} \right)
- c^2 \left( A - \frac{1}{3} B L \right)^2
\]

\vspace{0.5em}
\textbf{Coefficient Ratio:}
\[
r := \frac{\mathbb{E}[f_{\mathrm{asy}}]}{\mathbb{E}[f_{\mathrm{sym\_tri}}]} = \frac{A - \frac{2}{3} B L}{A - \frac{1}{3} B L}
= \frac{3A - 2B L}{3A - B L}
\]

\[
\eta := \frac{\mathrm{Var}[f_{\mathrm{asy}}]}{\mathrm{Var}[f_{\mathrm{sym\_tri}}]} \\
= \frac{
\frac{4c^2}{L} \left( \frac{A}{3} - \frac{B L}{4} \right) - c^2 \left( A - \frac{2}{3} B L \right)^2
}{
\frac{2c^2}{L} \left( \frac{A}{3} - \frac{B L}{8} \right) - c^2 \left( A - \frac{1}{3} B L \right)^2
}
\]

Then the squared CV ratio becomes:
\begin{align*}
\left( \frac{\mathrm{CV}_{\mathrm{asy}}}{\mathrm{CV}_{\mathrm{sym\_tri}}} \right)^2 &= \frac{\eta}{r^2} \\
& =
\frac{
3(2A - B L)^2 \cdot \left[12A - 9B L + L(3A - 2B L)^2\right]
}{
(3A - 2B L)^2 \cdot \left[16A - 8B L + 3L(2A - B L)^2\right]
} ,
\end{align*}
which simplifies under substitution to:
\[
\frac{\eta}{r^2} \sim \frac{3}{4} \cdot \left( \frac{L\kappa- 2\sigma^2}{2L\kappa- 3\sigma^2} \right)^2
\]
and thus yields the condition \( \mathrm{CV}_{\mathrm{asy}} < \mathrm{CV}_{\mathrm{sym\_tri}} \) if \( L \kappa> \sigma^2 \).
Therefore, according to the Theorem~\ref{th:cv_min_syn_func}, \( \mathrm{CV}_{\mathrm{asy}} < \mathrm{CV}_{\mathrm{sym}} \) if \( L \kappa> \sigma^2 \).

\end{proof}

\subsection{Extension of Theorem~\ref{th:cv_comp} with $n$-segment piecewise-linear approximation}
\label{app:pwla_stability}

In this subsection, we show that the conclusion of Theorem~\ref{th:cv_comp} is stable when the
Gaussian weight function on the effective window is approximated by an
$n$-segment piecewise-linear model instead of a single linear segment.
We use the same notation as in Theorem~\ref{th:cv_comp} and Theorem~\ref{th:app_cv_comp}: the effective window is
$I = [a,b] = [\theta-\beta,\theta+\beta]$ with width $L = b-a$, the membrane
potential satisfies $u \sim \mathcal N(\mu,\sigma^2)$ with $\mu < a$, and
$p(u)$ denotes the Gaussian weight function.
We also define $\kappa := a-\mu > 0$.

\paragraph{One-segment and $n$-segment PWLA models.}
We now specify the weight-function models on $I$.
\begin{itemize}
    \item \textbf{One-segment linear model $\tilde p_1$.}
    Following Theorem~\ref{th:cv_comp} (and Theorem~\ref{th:app_cv_comp}), we approximate $p(u)$ on $I$ by its
    first-order Taylor expansion at the left boundary $u=a$:
    \begin{equation*}
      \tilde p_1(u)
      := p(a) + p'(a)\,(u-a),
      \qquad u\in I.
    \end{equation*}

    \item \textbf{$n$-segment piecewise-linear model $\tilde p_n$.}
    For $n\in\mathbb N$, we divide $I$ into $n$ equal sub-intervals
    of length $\Delta := L/n$ with grid points $u_k := a + k\Delta$
    for $k = 0,\dots,n$. The $n$-segment piecewise-linear approximation
    of $p$ is defined by
    \begin{equation}
      \tilde p_n(u)
      :=
      p(u_k) + p'(u_k)\,(u-u_k)
      \quad \text{for } u \in [u_k,u_{k+1}),\; k = 0,\dots,n-1.
    \end{equation}
\end{itemize}
We denote $R_1^2 := R(\tilde p_1)^2$ and $R_n^2 := R(\tilde p_n)^2$.

We denote by $\mathcal{H}$ the class of weight functions considered in this paper,
namely the Gaussian weight $p(u)$ and its PWLA approximations $\tilde p_1,\tilde p_n$
on the effective window $I=[\theta-\beta,\theta+\beta]$
We refer to elements of $\mathcal{H}$ as admissible weight functions.

\begin{lemma}[Bound on the variation of $R(h)^2$]
\label{lem:R_lipschitz}
Let $I = [a,b]$ be the effective window.
For any nonnegative weight function $h:I\to\mathbb{R}_{\ge 0}$, define
\begin{align*}
M_1(h) &= \int_a^b f_{\mathrm{asy}}(u)\,h(u)\,du,
&
M_2(h) &= \int_a^b f_{\mathrm{asy}}(u)^2\,h(u)\,du, \\
J_1(h) &= \int_a^b f_{\mathrm{sym}}(u)\,h(u)\,du,
&
J_2(h) &= \int_a^b f_{\mathrm{sym}}(u)^2\,h(u)\,du.
\end{align*}
Based on these, we define
\begin{align*}
\mathrm{CV}_{\mathrm{asy}}(h)^2
&:= \frac{M_2(h)-M_1(h)^2}{M_1(h)^2}, \\
\mathrm{CV}_{\mathrm{sym}}(h)^2
&:= \frac{J_2(h)-J_1(h)^2}{J_1(h)^2}, \\
R(h)^2
&:= \left(\frac{\mathrm{CV}_{\mathrm{asy}}(h)}{\mathrm{CV}_{\mathrm{sym}}(h)}\right)^2.
\end{align*}
Assume that there exist constants $m_{\min}>0$ and $v_{\min}>0$ such that, for all admissible
weight functions $h$ in the class considered in this paper,
\begin{equation}
\label{eq:lemma_assumptions}
M_1(h),\,J_1(h) \ge m_{\min},
\qquad
M_2(h)-M_1(h)^2,\; J_2(h)-J_1(h)^2 \ge v_{\min}.
\end{equation}
Then there exists a constant $K>0$ such that, for any admissible $h,\tilde h$,
\begin{equation}
\label{eq:R_lip_bound}
|R(h)^2 - R(\tilde h)^2|
\;\le\; K\,\|h-\tilde h\|_\infty,
\end{equation}
where $\|h-\tilde h\|_\infty := \sup_{u\in I}|h(u)-\tilde h(u)|$.
\end{lemma}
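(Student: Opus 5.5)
The plan is to view $R(h)^2$ as a smooth function of the four linear functionals $M_1,M_2,J_1,J_2$. Each functional is Lipschitz in $h$ with respect to $\|\cdot\|_\infty$. The lower bounds \eqref{eq:lemma_assumptions} keep every denominator away from zero, so the composition is Lipschitz with an explicit constant $K$.

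\textbf{Step 1: the functionals are Lipschitz and bounded.} Since $f_{\mathrm{asy}}$ and $f_{\mathrm{sym}}$ are fixed, bounded, nonnegative functions on $I$ (both have sup-norm $\le 4c/L$ under the area normalization of Thm.~\ref{th:app_cv_comp}), we have for instance
\[
|M_k(h)-M_k(\tilde h)| \le \int_a^b f_{\mathrm{asy}}(u)^k\,|h(u)-\tilde h(u)|\,du \le \|f_{\mathrm{asy}}\|_{L^k}^k\,\|h-\tilde h\|_\infty ,
\]
and the same holds for $J_k$. This gives constants $C_1,C_2$ with $|M_k(h)-M_k(\tilde h)|,|J_k(h)-J_k(\tilde h)|\le C_k\|h-\tilde h\|_\infty$.

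We also need upper bounds on the functionals themselves over the admissible class $\mathcal H$. The Gaussian $p$ is bounded by $1/(\sqrt{2\pi}\sigma)$. Its PWLA approximations $\tilde p_1,\tilde p_n$ are bounded on $I$ by $\sup p + L\sup|p'|$. Hence $\sup_{h\in\mathcal H}\|h\|_\infty\le H_{\max}<\infty$, which gives $M_k(h),J_k(h)\le C_k H_{\max}$. Call the resulting uniform bound $M_{\max}$.

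\textbf{Step 2: Lipschitz property of each squared CV.} Write $V_a:=M_2-M_1^2$, so that $\mathrm{CV}_{\mathrm{asy}}^2=V_a/M_1^2$. The numerator $V_a$ is Lipschitz, with constant $C_2+2M_{\max}C_1$, because $|M_1^2-\tilde M_1^2|\le(M_1+\tilde M_1)|M_1-\tilde M_1|$. For the ratio I will use the identity
\[
\frac{x}{y}-\frac{\tilde x}{\tilde y}=\frac{x-\tilde x}{y}+\tilde x\,\frac{\tilde y-y}{y\tilde y}.
\]
Combined with $y,\tilde y\ge m_{\min}^2$ and $|y-\tilde y|\le 2M_{\max}C_1\|h-\tilde h\|_\infty$, this shows that $\mathrm{CV}_{\mathrm{asy}}^2$ is Lipschitz. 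The same argument applies to $\mathrm{CV}_{\mathrm{sym}}^2$. Along the way I will record the two-sided bounds
\[
v_{\min}/M_{\max}^2 \;\le\; \mathrm{CV}^2 \;\le\; M_{\max}/m_{\min}^2 .
\]

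\textbf{Step 3: the quotient $R^2$.} Since $R^2=\mathrm{CV}_{\mathrm{asy}}^2/\mathrm{CV}_{\mathrm{sym}}^2$, the same ratio identity applies. The denominator is now bounded below by $v_{\min}/M_{\max}^2$, which is exactly where the variance lower bound $v_{\min}$ is used. Collecting the constants yields $K$, and $K$ depends only on $c$, $L$, $H_{\max}$, $m_{\min}$ and $v_{\min}$.

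\textbf{Main obstacle.} The only delicate point is the uniformity: $M_{\max}$ must be finite over the entire class $\mathcal H$, not just for a single $h$. This is why I restrict to the explicitly bounded family of the Gaussian and its PWLA approximations. A secondary concern is that $\tilde p_1$ may become negative when $BL>A$. The lemma assumes $h\ge0$, but the argument in Steps 1--3 never uses nonnegativity, only the stated lower bounds on $M_1,J_1$ and the variances. So I will note that the proof goes through for signed $h$ as long as \eqref{eq:lemma_assumptions} holds.
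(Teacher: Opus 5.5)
Your proof is correct and has the same top-level decomposition as the paper. The four functionals $M_1,M_2,J_1,J_2$ are linear in $h$, hence Lipschitz in $\|\cdot\|_\infty$. The map from these functionals to $R^2$ is Lipschitz wherever the assumed lower bounds keep the denominators away from zero.

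The difference lies in how you get the second half.
\begin{itemize}
\item The paper writes $R(h)^2=\Phi(M_1,M_2,J_1,J_2)$ with $\Phi=\frac{j_1^2}{m_1^2}\cdot\frac{m_2-m_1^2}{j_2-j_1^2}$. It bounds $\|\nabla\Phi\|$ on a set containing all $v(h)$ and integrates along the segment from $v(h)$ to $v(\tilde h)$.
\item You apply the ratio identity twice. First you apply it to each $\mathrm{CV}^2$, whose denominator is at least $m_{\min}^2$. Then you apply it to their quotient, whose denominator is at least $v_{\min}/M_{\max}^2$.
\end{itemize}
The paper's version is shorter and works verbatim for any $C^1$ combination of the moments. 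Yours is more elementary and gives an explicit $K$ in terms of $c,L,H_{\max},m_{\min},v_{\min}$. It also needs the lower bounds only at the two endpoints $h,\tilde h$. The paper's mean-value step tacitly needs the whole segment to stay in the region where the bounds hold. That is true, because $\{m_1\ge m_{\min},\ m_2\ge m_1^2+v_{\min}\}$ and its $j$-analogue are convex, but the paper does not say so. You also make explicit the uniform bound $\sup_{h\in\mathcal H}\|h\|_\infty<\infty$, which the paper asserts in one line.

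Your remark about signed weights is well taken. We have $\tilde p_1(b)=A-BL=A(1-L\kappa/\sigma^2)$, which is negative exactly under the condition $L\kappa>\sigma^2$ of Theorem~\ref{th:cv_comp}. So $\tilde p_1$ is not a nonnegative weight in the very regime where Corollary~\ref{cor:pwla_stability} is meant to apply. Both proofs survive this, because neither actually uses $h\ge 0$.
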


\begin{proof}
We denote $R(h)^2$ as
\[
R(h)^2 = \Phi\bigl(v(h)\bigr),
\]
where
\[
v(h) := \bigl(M_1(h),M_2(h),J_1(h),J_2(h)\bigr)\in\mathbb{R}^4 \textrm{,}
\]
and
\[
\Phi(m_1,m_2,j_1,j_2)
:= \frac{j_1^2}{m_1^2}
   \cdot
   \frac{m_2-m_1^2}{j_2-j_1^2}.
\]

\paragraph{Step 1: Sensitivity of $v(h)$ with respect to $h$.}
Let $L := b-a$ and let $h,\tilde h$ be two admissible weight functions.
Set $\Delta h(u) := h(u)-\tilde h(u)$.
Since $f_{\mathrm{asy}}$ and $f_{\mathrm{sym}}$ are fixed surrogate functions on the effective window $I$, they are bounded:
\[
C_1 := \sup_{u\in I}\bigl(|f_{\mathrm{asy}}(u)|,\,|f_{\mathrm{sym}}(u)|\bigr) < \infty,
\quad
C_2 := \sup_{u\in I}\bigl(f_{\mathrm{asy}}(u)^2,\,f_{\mathrm{sym}}(u)^2\bigr) < \infty.
\]

We first bound the change in $M_1$:
\begin{align*}
M_1(h) - M_1(\tilde h)
&= \int_a^b f_{\mathrm{asy}}(u)\,h(u)\,du
  - \int_a^b f_{\mathrm{asy}}(u)\,\tilde h(u)\,du \\
&= \int_a^b f_{\mathrm{asy}}(u)\,\Delta h(u)\,du.
\end{align*}
Using the triangle inequality and the definition of the sup norm,
\begin{align*}
|M_1(h) - M_1(\tilde h)|
&\le \int_a^b |f_{\mathrm{asy}}(u)|\,|\Delta h(u)|\,du \\
&\le \left(\sup_{u\in I}|f_{\mathrm{asy}}(u)|\right)
      \int_a^b |\Delta h(u)|\,du \\
&\le C_1 \cdot L \cdot \sup_{u\in I}|\Delta h(u)| \\
&= C_1 L\,\|h-\tilde h\|_\infty.
\end{align*}
Similarly, for $M_2$ we have
\begin{align*}
M_2(h) - M_2(\tilde h)
&= \int_a^b f_{\mathrm{asy}}(u)^2\,\Delta h(u)\,du,
\end{align*}
and hence
\[
|M_2(h) - M_2(\tilde h)|
\le C_2 L\,\|h-\tilde h\|_\infty.
\]
The same argument with $f_{\mathrm{sym}}$ and $f_{\mathrm{sym}}^2$ yields
\[
|J_1(h) - J_1(\tilde h)| \le C_1 L\,\|h-\tilde h\|_\infty,
\qquad
|J_2(h) - J_2(\tilde h)| \le C_2 L\,\|h-\tilde h\|_\infty.
\]

Therefore, there exists a constant $C_{v}>0$ ($C_{v} := L\max\{C_1,C_2\}$) such that
\begin{equation}
\label{eq:v_diff_bound}
\|v(h) - v(\tilde h)\|
\;\le\; C_{v}\,\|h-\tilde h\|_\infty,
\end{equation}
where $\|\cdot\|$ denotes the Euclidean norm on $\mathbb{R}^4$.

\paragraph{Step 2: Sensitivity of $\Phi(v)$ with respect to $v$.}

By the assumptions in \eqref{eq:lemma_assumptions}, for admissible
weight functions $h$, we have
\[
M_1(h),J_1(h) \ge m_{\min},
\qquad
M_2(h)-M_1(h)^2,\; J_2(h)-J_1(h)^2 \ge v_{\min}.
\]
Moreover, since the surrogates and weight functions are bounded
on the finite interval $I$, the integrals $M_1(h),M_2(h),J_1(h),J_2(h)$
are bounded.
Hence all vectors $v(h)$ lie in a set
$K \subset \mathbb{R}^4$ on which:
\begin{itemize}
  \item the denominators $m_1^2$ and $j_2 - j_1^2$ in the definition of
        $\Phi(m_1,m_2,j_1,j_2)$ are bounded away from zero by
        $m_{\min}^2$ and $v_{\min}$, and
  \item $\Phi$ is continuously differentiable.
\end{itemize}
In particular, the gradient is bounded on $K$:
\[
  C_\Phi := \sup_{v\in K} \|\nabla\Phi(v)\| < \infty.
\]

Now, for any $v,\tilde v\in K$, consider the line segment
$\gamma(t) := v + t(\tilde v - v)$, $t\in[0,1]$, and define
$\psi(t) := \Phi(\gamma(t))$. By the chain rule,
\[
\psi'(t) = \nabla\Phi(\gamma(t))\cdot(\tilde v - v),
\]
so that
\[
|\psi'(t)| \le \|\nabla\Phi(\gamma(t))\|\,\|\tilde v - v\|
\le C_\Phi\,\|\tilde v - v\|
\quad\text{for all }t\in[0,1].
\]

Integrating from $0$ to $1$, we obtain
\begin{align*}
|\Phi(\tilde v) - \Phi(v)|
&= |\psi(1) - \psi(0)| \\
&= \left|\int_0^1 \psi'(t)\,dt\right| \\
&\le \int_0^1 |\psi'(t)|\,dt \\
&\le \int_0^1 C_\Phi\,\|\tilde v - v\|\,dt \\
&= C_\Phi\,\|\tilde v - v\|.
\end{align*}
Therefore
\begin{equation}
\label{eq:Phi_diff_bound}
|\Phi(\tilde v) - \Phi(v)| \;\le\; C_\Phi\,\|\tilde v - v\|.
\end{equation}

\paragraph{Step 3: Combining the two bounds.}
Finally, for $h,\tilde h$ we have
\[
|R(h)^2 - R(\tilde h)^2|
= |\Phi(v(h)) - \Phi(v(\tilde h))|.
\]
Applying \eqref{eq:Phi_diff_bound} with $v=v(h)$ and $\tilde v=v(\tilde h)$, and then
\eqref{eq:v_diff_bound}, we obtain
\[
|R(h)^2 - R(\tilde h)^2|
\le C_\Phi \,\|v(h)-v(\tilde h)\|
\le C_\Phi C_{v}\,\|h-\tilde h\|_\infty \textrm{,}
\]
where $K = C_\Phi C_{v}$.
\end{proof}

\begin{lemma}[Approximation error of $\tilde p_1$ and $\tilde p_n$]
\label{lem:pwla_error}
Let $I=[a,b]$ be the effective window with length $L=b-a$.
Assume that $p$ is twice continuously differentiable on $I$, and define
\[
C_{p''} := \sup_{u\in I} |p''(u)| < \infty.
\]
Then, for all $u\in I$,
\[
|p(u) - \tilde p_1(u)| \;\le\; \tfrac{1}{2} C_{p''} L^2,
\qquad
|p(u) - \tilde p_n(u)| \;\le\; \tfrac{1}{2} C_{p''} \frac{L^2}{n^2},
\]
and hence
\begin{equation} \label{eq:pwla_error}
\|\tilde p_1 - \tilde p_n\|_\infty
\;\le\;
\tfrac{1}{2} C_{p''} L^2\Bigl(1 + \tfrac{1}{n^2}\Bigr).
\end{equation}
\end{lemma}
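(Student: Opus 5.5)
The plan is to prove each pointwise bound with Taylor's theorem with Lagrange remainder, and then combine them by the triangle inequality. Since $p$ is $C^2$ on $I$, for any $u\in I$ and any expansion point $x\in I$ there is $\xi$ between $x$ and $u$ such that
\[
p(u)=p(x)+p'(x)(u-x)+\tfrac12 p''(\xi)(u-x)^2 .
\]

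For the one-segment model, take the expansion point $x=a$. Then $p(u)-\tilde p_1(u)=\tfrac12 p''(\xi)(u-a)^2$. Since $0\le u-a\le L$ and $|p''(\xi)|\le C_{p''}$, this gives $|p(u)-\tilde p_1(u)|\le \tfrac12 C_{p''}L^2$.

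For the $n$-segment model, fix $u\in[u_k,u_{k+1})$ and apply the same expansion at $x=u_k$. This gives $|p(u)-\tilde p_n(u)|\le \tfrac12 C_{p''}(u-u_k)^2\le \tfrac12 C_{p''}\Delta^2=\tfrac12 C_{p''}L^2/n^2$.

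One point needs care: the definition of $\tilde p_n$ uses half-open intervals $[u_k,u_{k+1})$, so the right endpoint $u=b$ is not covered as written. I would adopt the convention that the last segment is closed, $[u_{n-1},b]$. The same Taylor bound then holds at $b$, since $b-u_{n-1}=\Delta$. Because $\xi$ stays inside $[u_k,u_{k+1}]\subset I$, the bound $C_{p''}$ applies throughout.

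Finally, for every $u\in I$ the triangle inequality gives
\[
|\tilde p_1(u)-\tilde p_n(u)|\le |\tilde p_1(u)-p(u)|+|p(u)-\tilde p_n(u)|\le \tfrac12 C_{p''}L^2\bigl(1+\tfrac1{n^2}\bigr).
\]
Taking the supremum over $u\in I$ yields \eqref{eq:pwla_error}.

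The Gaussian $p$ is smooth, so the hypothesis that $p$ is twice continuously differentiable holds for the intended application, and $C_{p''}$ is finite because $I$ is compact. The argument has no real obstacle; the only care required is the endpoint convention above and checking that every intermediate point $\xi$ lies in $I$.
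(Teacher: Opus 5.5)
Your proposal is correct and matches the paper's proof step for step: Taylor's theorem with Lagrange remainder at $a$ for $\tilde p_1$, at $u_k$ on each subinterval for $\tilde p_n$, then the triangle inequality in the sup norm. Your endpoint convention at $u=b$ is a legitimate tidy-up. The paper's proof silently switches to closed subintervals $[u_k,u_{k+1}]$, and since either choice of value at a grid point satisfies the same Taylor bound, the conclusion is unaffected.
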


\paragraph{Error of the one-segment linear approximation $\tilde p_1$.}
Let $I=[a,b]$ and $L=b-a$.
The one-segment linear model $\tilde p_1$ is defined as the first-order
Taylor polynomial of $p$ at $u=a$:
\[
\tilde p_1(u) := p(a) + p'(a)(u-a), \qquad u\in I.
\]
By Taylor's theorem with Lagrange remainder at $u=a$, for each $u\in I$
there exists a point $\xi_u$ on the segment between $a$ and $u$ such that
\[
p(u)
= p(a) + p'(a)(u-a) + \frac{1}{2}p''(\xi_u)(u-a)^2.
\]
Hence
\[
p(u) - \tilde p_1(u)
= \frac{1}{2}p''(\xi_u)(u-a)^2.
\]
Taking absolute values and using the definition of $C_{p''}$,
\[
|p(u) - \tilde p_1(u)|
= \frac{1}{2} |p''(\xi_u)|\,|u-a|^2
\le \frac{1}{2} C_{p''}\,|u-a|^2.
\]
Since $u\in[a,b]$ implies $|u-a|\le L$, we obtain
\[
|p(u) - \tilde p_1(u)|
\le \frac{1}{2} C_{p''} L^2
\qquad\text{for all }u\in I.
\]

\paragraph{Error of the $n$-segment PWLA approximation $\tilde p_n$.}
Partition $I=[a,b]$ into $n$ equal sub-intervals of length $\Delta := L/n$,
and let $u_k := a + k\Delta$ for $k=0,\dots,n$.
On each sub-interval $[u_k,u_{k+1}]$, the $n$-segment PWLA model is
\[
\tilde p_n(u) := p(u_k) + p'(u_k)(u-u_k),
\qquad u\in [u_k,u_{k+1}].
\]
Applying Taylor's theorem at $u_k$ for $u\in[u_k,u_{k+1}]$, there exists
$\xi_{k,u}\in[u_k,u_{k+1}]$ such that
\[
p(u)
= p(u_k) + p'(u_k)(u-u_k) + \frac{1}{2}p''(\xi_{k,u})(u-u_k)^2.
\]
Therefore
\[
p(u) - \tilde p_n(u)
= \frac{1}{2}p''(\xi_{k,u})(u-u_k)^2,
\]
and thus
\[
|p(u) - \tilde p_n(u)|
= \frac{1}{2}|p''(\xi_{k,u})|\,|u-u_k|^2
\le \frac{1}{2}C_{p''}|u-u_k|^2.
\]
Since $u\in[u_k,u_{k+1}]$ implies $|u-u_k|\le\Delta=L/n$, we obtain
\[
|p(u) - \tilde p_n(u)|
\le \frac{1}{2}C_{p''}\Delta^2
= \frac{1}{2}C_{p''}\frac{L^2}{n^2}
\qquad\text{for all }u\in I.
\]

\paragraph{Sup-norm bound between $\tilde p_1$ and $\tilde p_n$.}
Taking the supremum over $u\in I$ in the pointwise bounds above gives
\[
\|\tilde p_1 - p\|_\infty \le \tfrac{1}{2}C_{p''}L^2,
\qquad
\|p - \tilde p_n\|_\infty \le \tfrac{1}{2}C_{p''}\frac{L^2}{n^2}.
\]
By the triangle inequality in the sup norm,
\[
\|\tilde p_1 - \tilde p_n\|_\infty
\;\le\;
\|\tilde p_1 - p\|_\infty + \|p - \tilde p_n\|_\infty,
\]
then, 
\[
\|\tilde p_1 - \tilde p_n\|_\infty
\;\le\;
\|\tilde p_1 - p\|_\infty + \|p - \tilde p_n\|_\infty,
\;\le\;
\tfrac{1}{2} C_{p''} L^2\Bigl(1 + \tfrac{1}{n^2}\Bigr).
\]

%
\begin{corollary}[Robustness of Theorem~\ref{th:cv_comp} under $n$-segment PWLA]
\label{cor:pwla_stability}
Suppose that, on the parameter range considered in this paper, the one-segment
linear model $\tilde p_1$ used in Theorem~\ref{th:cv_comp} satisfies
\[
R(\tilde p_1)^2 \;\le\; 1 - \delta_0
\]
for some margin $\delta_0>0$ (for example, under the condition
$L\kappa > \sigma^2$ in Theorem~\ref{th:cv_comp}). Let $\tilde p_n$ be the $n$-segment
piecewise-linear model defined above, and assume the conditions of
Lemmas~\ref{lem:R_lipschitz} and \ref{lem:pwla_error} hold.
Then, for all $n\in\mathbb N$,
\[
\bigl|R(\tilde p_n)^2 - R(\tilde p_1)^2\bigr|
\;\le\;
K\,\|\tilde p_n - \tilde p_1\|_\infty
\;\le\;
\frac{1}{2} K C_{p''} L^2\Bigl(1 + \tfrac{1}{n^2}\Bigr),
\]
where $K>0$ is the Lipschitz constant from Lemma~\ref{lem:R_lipschitz}
(measuring the sensitivity of $R(h)^2$ to perturbations of the weight $h$)
and $C_{p''} := \sup_{u\in I}|p''(u)|$ is the curvature bound from
Lemma~\ref{lem:pwla_error} (quantifying how strongly the Gaussian weight
can bend on the effective window).
In particular, if the effective window satisfies
\[
K C_{p''} L^2 \;\le\; \delta_0,
\]
then
\[
R(\tilde p_n)^2
\;\le\; R(\tilde p_1)^2 + \frac{\delta_0}{2}
\;\le\; 1 - \frac{\delta_0}{2}
\;<\; 1.
\]
Hence, the inequality $\mathrm{CV}_{\mathrm{asy}} < \mathrm{CV}_{\mathrm{sym}}$
derived under the one-segment linear model in Theorem~\ref{th:cv_comp} can be valid for any
sufficiently accurate $n$-segment piecewise-linear approximation $\tilde p_n$
of the same Gaussian weight function on the effective window.
\end{corollary}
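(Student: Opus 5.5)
The corollary follows by chaining the two lemmas just established and then choosing the window small enough relative to the margin $\delta_0$.

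\textbf{Step 1 (first inequality).} Apply Lemma~\ref{lem:R_lipschitz} with $h=\tilde p_n$ and $\tilde h=\tilde p_1$. Both belong to the admissible class $\mathcal{H}$, and the nondegeneracy assumptions \eqref{eq:lemma_assumptions} are hypotheses of the corollary. This gives $|R(\tilde p_n)^2-R(\tilde p_1)^2|\le K\|\tilde p_n-\tilde p_1\|_\infty$ with the same constant $K=C_\Phi C_v$.

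\textbf{Step 2 (second inequality).} Insert the sup-norm bound \eqref{eq:pwla_error} from Lemma~\ref{lem:pwla_error}, namely $\|\tilde p_1-\tilde p_n\|_\infty\le \tfrac12 C_{p''}L^2(1+n^{-2})$. This relies on the triangle inequality through the true Gaussian $p$, which is $C^2$ on the compact window, so $C_{p''}<\infty$. Chaining the two steps yields the displayed bound uniformly in $n$.

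\textbf{Step 3 (the particular case).} Under $KC_{p''}L^2\le\delta_0$, the bound becomes $\tfrac12\delta_0(1+n^{-2})$. This is where I expect the main obstacle. For $n\ge2$ the factor $1+n^{-2}\le 5/4$ only gives $\tfrac58\delta_0$, not $\tfrac{\delta_0}{2}$. For $n=1$ the two models coincide, so the difference is exactly $0$.

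To obtain exactly $\tfrac{\delta_0}{2}$, I would first try to sharpen Lemma~\ref{lem:pwla_error} directly for the pair $\tilde p_1,\tilde p_n$ rather than routing through $p$. The key observation is that both models are built from Taylor data of $p$, so on $[u_k,u_{k+1}]$ the difference $\tilde p_n-\tilde p_1$ is controlled by
\[
|p(u_k)-p(a)-p'(a)(u_k-a)|+|p'(u_k)-p'(a)|\,|u-u_k| \le \tfrac12 C_{p''}(u_k-a)^2 + C_{p''}(u_k-a)(u-u_k).
\]
Since $u\le b$, this is at most $\tfrac12 C_{p''}L^2$. Hence $\|\tilde p_1-\tilde p_n\|_\infty\le\tfrac12 C_{p''}L^2$ without the $(1+n^{-2})$ factor, and the perturbation is at most $\tfrac{\delta_0}{2}$.

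If that refinement were unavailable, I would instead strengthen the window hypothesis to $KC_{p''}L^2\le \tfrac45\delta_0$, which absorbs the $5/4$ factor.

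\textbf{Conclusion.} In either case, $R(\tilde p_n)^2\le 1-\delta_0+\tfrac{\delta_0}{2}=1-\tfrac{\delta_0}{2}<1$. Since $R^2$ is defined as $(\mathrm{CV}_{\mathrm{asy}}/\mathrm{CV}_{\mathrm{sym}})^2$ with positive CVs, this is equivalent to $\mathrm{CV}_{\mathrm{asy}}<\mathrm{CV}_{\mathrm{sym}}$ under $\tilde p_n$. This transfers the conclusion of Theorem~\ref{th:cv_comp} to the piecewise-linear model.
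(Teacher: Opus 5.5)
Your proposal is correct. Steps 1 and 2 coincide with the paper: it gives no separate proof of the corollary. The displayed chain is the whole justification, namely Lemma~\ref{lem:R_lipschitz} applied to the pair $\tilde p_n,\tilde p_1$, followed by \eqref{eq:pwla_error}, which comes from the triangle inequality through the true Gaussian $p$. The ``in particular'' clause is simply asserted.

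You are right that this clause does not follow from the paper's own bound, since $\tfrac{\delta_0}{2}(1+n^{-2})>\tfrac{\delta_0}{2}$ for every $n$. Your genuine departure is to bound $\|\tilde p_n-\tilde p_1\|_\infty$ directly from the Taylor data instead of routing through $p$. That estimate is sound. With $s=u_k-a$ and $t=u-u_k$, the step you left implicit is $\tfrac12 s^2+st=\tfrac12(s+t)^2-\tfrac12 t^2\le\tfrac12(u-a)^2$. Taking the supremum over $t<\Delta$ and $k\le n-1$ actually yields $\tfrac12 C_{p''}L^2(1-n^{-2})$, which also recovers the exact coincidence of the two models at $n=1$.

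The trade-off is this. The paper's detour records how far each model is from the real Gaussian, which is what one ultimately cares about, but it pays the factor $1+n^{-2}$. Your comparison discards $p$ and is just sharp enough to make the margin $1-\tfrac{\delta_0}{2}$ valid for all $n$.

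The qualitative conclusion never needed the refinement. The paper's bound already gives $R(\tilde p_n)^2\le 1-\tfrac{\delta_0}{2}(1-n^{-2})<1$ for $n\ge 2$, and $n=1$ is trivial. Only the intermediate inequality $R(\tilde p_n)^2\le R(\tilde p_1)^2+\tfrac{\delta_0}{2}$ depends on your sharper estimate, or on your fallback hypothesis $KC_{p''}L^2\le\tfrac45\delta_0$.

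One caveat is inherited from the paper rather than introduced by you. Lemma~\ref{lem:R_lipschitz} is stated for nonnegative weights, yet $\tilde p_1(b)=p(a)\bigl(1-L\kappa/\sigma^2\bigr)<0$ precisely in the regime $L\kappa>\sigma^2$ that the corollary cites as its example. The lemma's proof uses only boundedness and the nondegeneracy bounds \eqref{eq:lemma_assumptions}, so your Step 1 still goes through. It rests on those assumed bounds, though, not on membership in a nonnegative class.
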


\subsection{Empirical Validation} \label{app:empirical}
\begin{figure}[h]
\vspace{-0.5em}
\begin{center}
    \includegraphics[width=0.5\linewidth]{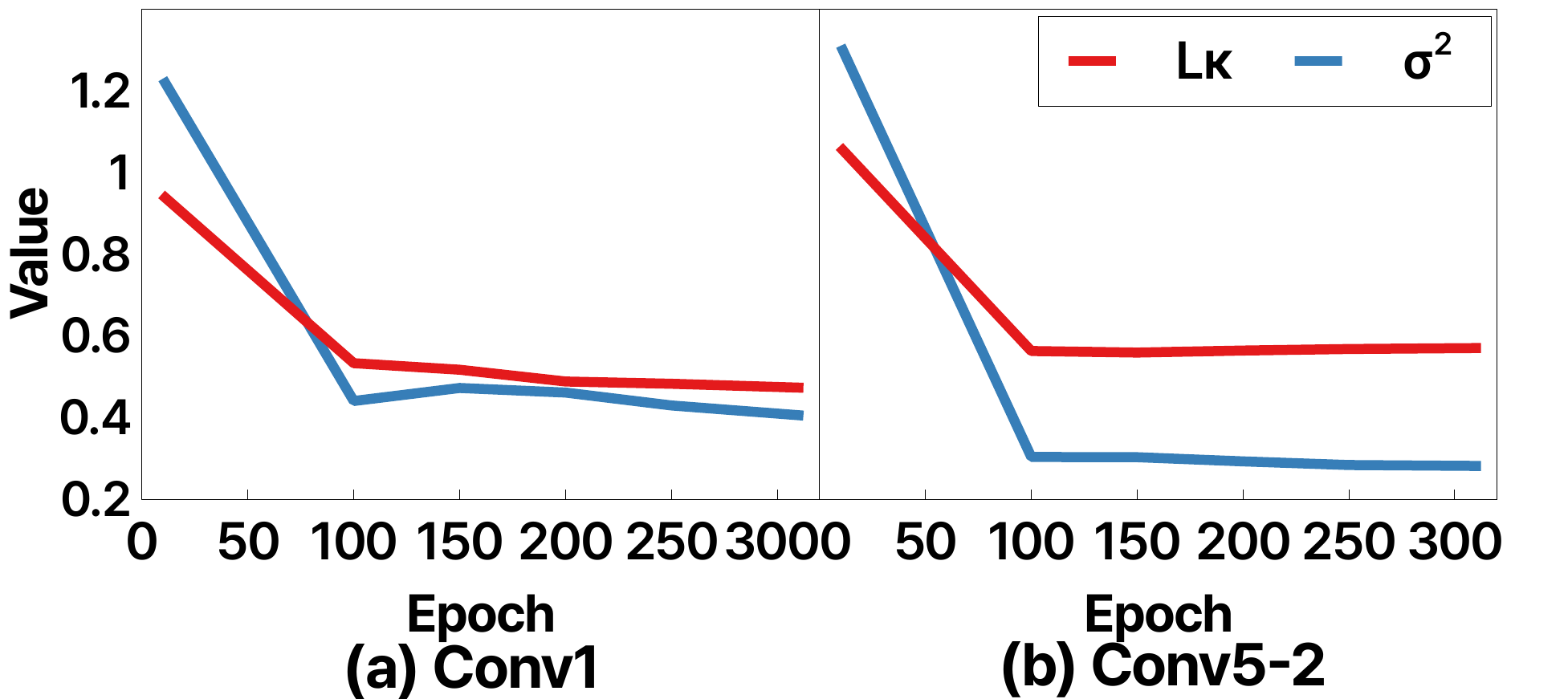}
\end{center}
\caption{\textit{L$\kappa$} and $\sigma^2$ values across epochs. (a) and (b) correspond to Conv1 and Conv5-2, respectively.}
\label{fig:theoretical_validation}
\end{figure}

We derived a sufficient condition for the proposed surrogate to achieve a lower gradient variation than symmetric surrogates, namely $L\kappa > \sigma^2$, where $L=b-a$ and $\kappa=a-\mu$ under the effective window $[a,b]=[\theta-\beta,\theta+\beta]$.
To examine whether this condition is supported in practice, we track $L\kappa$ and $\sigma^2$ throughout training.
Fig.~\ref{fig:theoretical_validation} reports the values of $L\kappa$ and $\sigma^2$ for Conv1 and Conv5-2 of VGG16 on CIFAR10.
\subsection{Computing Infrastructure} \label{app:computing}
All experiments are performed on servers equipped with Intel(R) Xeon(R) Gold 6226R CPUs (2.90GHz, 520GB RAM) and NVIDIA RTX A6000 GPUs (8 units), running Ubuntu 20.04.
Our implementation is based on CUDA 11.7, PyTorch 2.0.1 for ImageNet, TensorFlow/Keras 2.11.0 for CIFAR10, CIFAR100, and CIFAR10-DVS.

\subsection{Experimental setup} \label{app:experimental_setup}
The input size of the model is set to 32x32 for CIFAR10/100, 224x224 for ImageNet, and 48x48 for CIFAR10-DVS.
On CIFAR10/100, we trained each model for 310 epochs with the AdamW optimizer and a cosine decay learning rate scheduler with a 20-epoch warm-up.

\begin{wraptable}{r}{8cm}
\centering
\caption{Distribution of spikes in the Conv1 layer for \textit{BOX}, \textit{TRI}, and \textit{ASY}. Each value represents the percentage of samples with the corresponding spike counts.}
\label{tab:spike_counts}
\renewcommand{\arraystretch}{0.7}
\begin{tabular}{c|ccc}
\toprule
\textbf{Spike Counts} & \textit{\textbf{BOX}} & \textit{\textbf{TRI}} & \textit{\textbf{ASY}} \\
\midrule
0 & 91.73\% & 92.41\% & 93.24\% \\
1 & 6.35\%  & 5.58\%  & 5.15\%  \\
2 & 1.49\%  & 1.53\%  & 1.25\%  \\
3 & 0.29\%  & 0.32\%  & 0.24\%  \\
4 & 0.14\%  & 0.16\%  & 0.12\%  \\\midrule
\textbf{Num of Spikes} & 9.12k & 9.76k & 7.92k\\
\bottomrule
\end{tabular}
\end{wraptable}
We set it to 200 epochs for CIFAR10-DVS.
All models on CIFAR10 and CIFAR10-DVS were trained with an initial learning rate of 1$\times 10^{-5}$, a learning rate of 6$\times10^{-3}$, and a weight decay of 2$\times10^{-2}$.
On CIFAR100, all models were trained with an initial learning rate of 1$\times 10^{-4}$, a learning rate of 5$\times10^{-3}$, and a weight decay of 4$\times10^{-2}$.
Data augmentation was performed using a combination of CutMix~\cite{yun2019cutmix} and RandAugment~\cite{cubuk2020randaugment}. 
RandAugment was configured with one augmentation per image, a magnitude of 1, a magnitude standard deviation of 0.4, and an application rate of 0.5.
The batch size was set to 100 for CIFAR10/100.
For ImageNet experiments, we employed the E-SpikeFormer~\citep{yao2025scaling} model.
A batch size of 360 was used for the 10M model, while a batch size of 100 was used for the 173M model. 
Both models were trained with a base learning rate of $6\times10^{-4}$, a minimum learning rate of  1$\times 10^{-6}$, and 5 warm-up epochs. 
For data augmentation, we applied Mixup~\citep{zhang2018mixup} with a weight of 0.8 and CutMix with a weight of 1.0.
For all experiments, the effective window $\beta$ was initialized to 0.5, and the optimal $\beta$ was searched within $\beta \in [0.1, 1.0]$ at the first iteration of every epoch.
For the gradient-bias term $h$ in Eq.~\ref{eq:asymmetric_sg_function}, we set it to $0.6$ for VGG16 and VGGSNN, and $0.75$ for ResNet19 and E-SpikeFormer.

\subsection{Visualization of Segmentation} \label{app:segmentation_app}
\begin{figure*}[t]
    \centering
    \includegraphics[width=\textwidth]{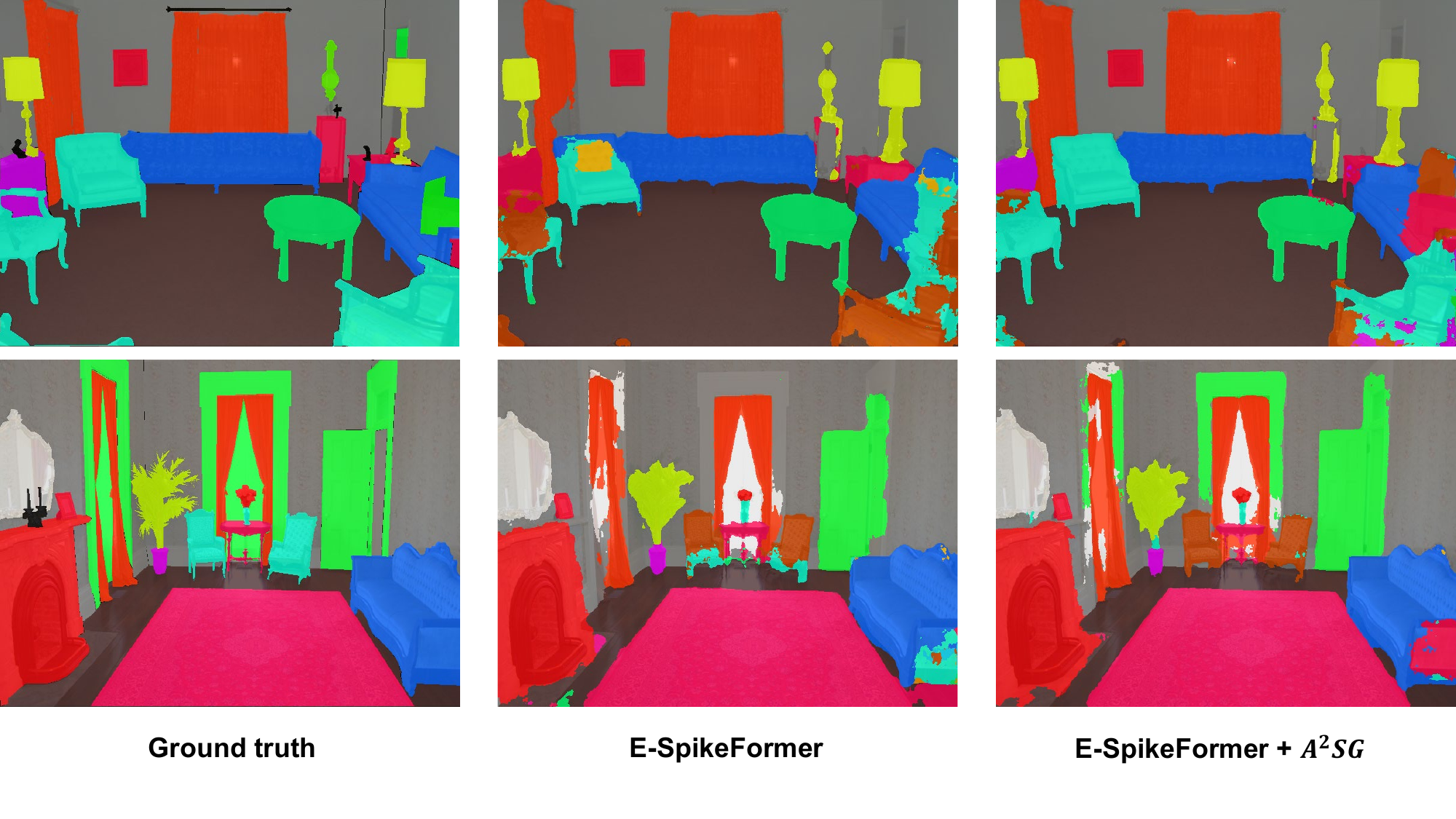}
    \caption{Segmentation results on the ADE20K dataset.}
    \label{fig:segmentation_fig}
\end{figure*}

Fig.~\ref{fig:segmentation_fig} presents segmentation results on the ADE20K dataset.
Compared to the E-SpikeFormer, the proposed $A^2SG$ improves boundary sharpness and object consistency.

\subsection{Quantitative Analysis of the Asymmetric Surrogate Gradient} \label{app:quant_anal}

We provide additional analysis of the effect of \textit{ASY} on VGG16 using CIFAR10. 
As shown in Table~\ref{tab:spike_counts}, in the Conv1 layer, \textit{ASY} substantially increases the proportion of silent neurons and reduces the total spike count (7.92k), outperforming both \textit{BOX} (9.12k) and \textit{TRI} (9.76k), while also yielding higher test accuracy. 
To further interpret this result, we analyze the membrane potential statistics. 
In SNNs, the weight distribution is mapped linearly to each neuron's membrane potential. 
Given an input spike count $M$, the membrane potential $u$ can be expressed as
\begin{equation}
u = \sum_i w_i x_i ,
\end{equation}
where $x_i$ and $w_i$ denote the presynaptic spikes and their corresponding weights, respectively.  
The mean and standard deviation of the membrane potential are
$\mu_u = M\mu_w$ and $\sigma_u=\sqrt{M}\sigma_w$,  
where $\mu_w$ and $\sigma_w$ are the mean and standard deviation of the weight distribution.  
By the central limit theorem, the membrane potential can be approximated by $\mathcal{N}(\mu_u, \sigma_u^2)$.  
To quantify the distance between the mean membrane potential and the firing threshold $V_{\mathrm{th}}$, we define the relative membrane distance (RMD) as
\begin{equation} \label{eq:RMD}
\operatorname{RMD} := \frac{V_{\mathrm{th}}-\mu_u}{\sigma_u}.
\end{equation}
A larger RMD indicates that the membrane potential lies farther below the threshold, thus reducing the probability of spiking.  

\begin{table}[h]
    \centering
    \caption{$\mu_w$, $\sigma_w$, and RMD depending on the surrogate gradient (SG) functions in Conv1.}
    \label{tab:weight_mean_std}
    \resizebox{8cm}{!}{%
        \begin{tabular}{l c c c c}
            \toprule
            SG func. & $\mu_w$ ($\times10^{-3}$) & $\sigma_w$ & RMD & \# of spikes ($\times10^3$)\\
            \midrule
            \textit{BOX} & -8.122 & 0.1408 & 7.159 & 9.12 \\
            \textit{TRI} & -8.205 & 0.1394 & 7.231 & 9.76 \\
            \textit{ASY} & -7.002 & 0.1361 & 7.398 & 7.92 \\
            \bottomrule
        \end{tabular}
    }
    \vspace{0.2em}
\end{table}

As summarized in Table~\ref{tab:weight_mean_std}, \textit{ASY} achieves the highest RMD and the lowest spike count, confirming its effectiveness in suppressing redundant spikes and improving energy efficiency.

\subsection{Comparison of SGV, TGC, and $\beta$ dynamics on Other Layers} \label{app:other_layer}

\begin{figure}[t]
    \centering
    \includegraphics[width=\linewidth]{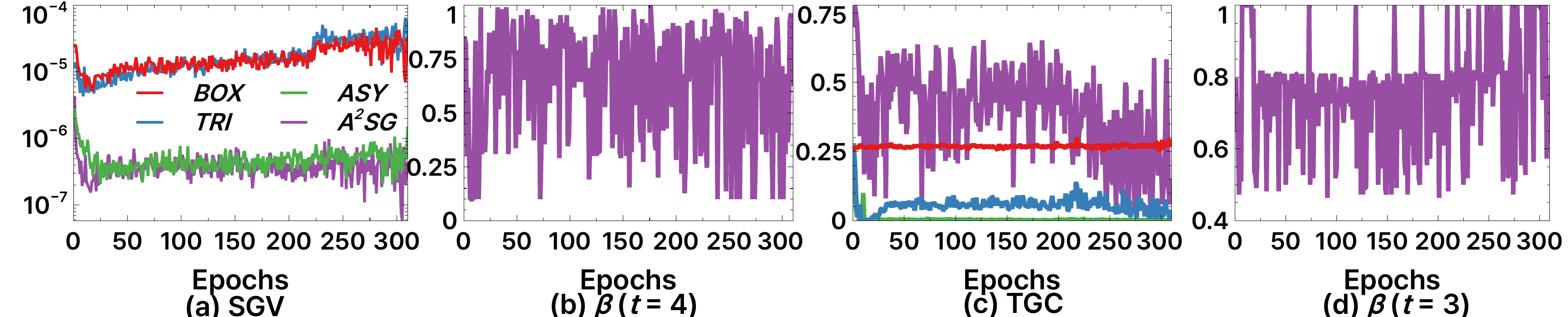}
    \vspace{-2.0em}
    \caption{Comparison of SGV, TGC, and $\beta$ dynamics across different surrogate gradient functions at Conv1 layer. (a) SGV over epochs, (b) $\beta$ dynamics at 
$t=4$, (c) TGC over epochs, (d) $\beta$ dynamics at $t=3$.}
    \label{fig:conv_1_set}
\end{figure}

\begin{figure}[t]
    \centering
    \includegraphics[width=\linewidth]{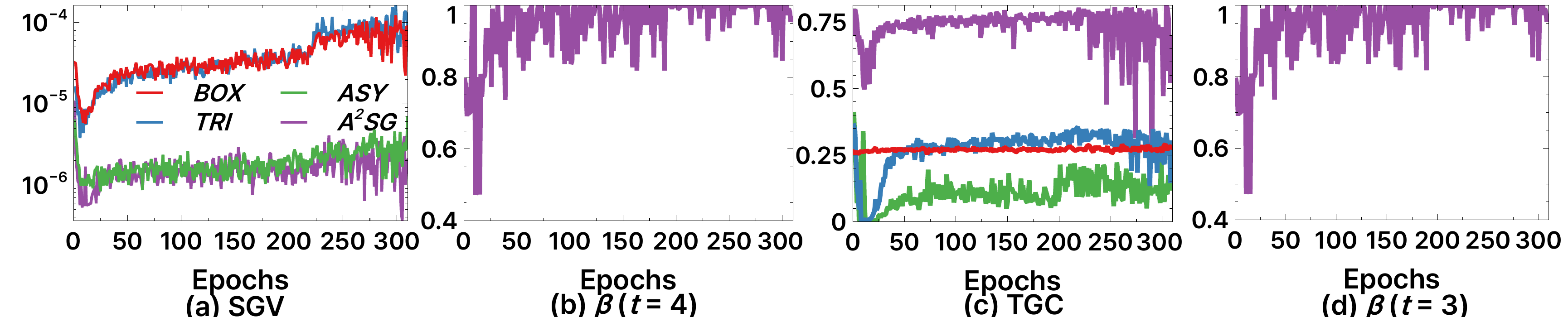}
    \vspace{-2.0em}
    \caption{Comparison of SGV, TGC, and $\beta$ dynamics across different surrogate gradient functions at Conv3 layer. (a) SGV over epochs, (b) $\beta$ dynamics at 
$t=4$, (c) TGC over epochs, (d) $\beta$ dynamics at $t=3$.}
    \label{fig:conv_3_set}
\end{figure}

Figs.~\ref{fig:conv_1_set} and ~\ref{fig:conv_3_set} illustrate SGV, TGC, and $\beta$ for Conv1 and Conv3, respectively. 
Similar to the results in Conv5-2, \textit{A$^{2}$SG} achieves the lowest SGV and the highest TGC. 
Notably, while \textit{ASY} exhibited relatively high TGC in Conv5-2, it shows much lower TGC in the earlier layers.
In contrast, \textit{A$^{2}$SG} consistently maintains high TGC across all layers by applying the adaptive surrogate strategy to \textit{ASY}. 
Furthermore, in Figs.~\ref{fig:conv_1_set} and ~\ref{fig:conv_3_set}-(b), (d), it can be observed that $\beta$ is adjusted in a manner that improves both SGV and TGC.

\subsection{Compatibility}
\begin{table}[t]
    \centering
    \caption{LIF and PLIF comparison on CIFAR10 with VGG16.}
    \label{tab:other_neuron}
    {\scriptsize
    \renewcommand{\arraystretch}{0.9}
    \setlength{\tabcolsep}{2pt}
    \resizebox{0.5\linewidth}{!}{%
    \begin{tabular}{llcc}
        \toprule
        \multirow{2}{*}{Neurons} & \multirow{2}{*}{Methods} & \multirow{2}{*}{Acc. (\%)} & \# of Spikes \\
        && & ($\times 10^3$) \\
        \midrule
        \multirow{2}{*}{LIF} & Baseline & 94.84$\pm$0.05 & 94.6$\pm$1.0 \\
        & \textbf{\textit{\boldmath{$A^2$}SG} (Ours)} & \textbf{95.29$\pm$0.04} & \textbf{84.9$\pm$1.8} \\
        \cmidrule{1-4}
        PLIF & Baseline & 94.99$\pm$0.03 & 91.6$\pm$7.4 \\
        \cite{fang2021incorporating} &
        \textbf{\textit{\boldmath{$A^2$}SG} (Ours)} & \textbf{95.33$\pm$0.01} & \textbf{82.2$\pm$0.5} \\
        \bottomrule
    \end{tabular}
    }}
\end{table}
\begin{table}[t]
    \centering
    \caption{Compatibility of $A^2SG$ with various training algorithms on CIFAR100 with VGG16.}
    \label{tab:other_methods}
    {\scriptsize
    \renewcommand{\arraystretch}{0.9}
    \setlength{\tabcolsep}{10pt}
    \resizebox{0.5\linewidth}{!}{%
    \begin{tabular}{lcc}
        \toprule
        \multirow{2}{*}{Methods} & \multirow{2}{*}{Acc. (\%)} & \# of Spikes \\
        &  & ($\times 10^3$) \\
        \midrule
        tdBN & 74.24$\pm$0.08 & 100.3$\pm$1.2 \\
        RMP~\cite{guo2023rmp}  & 74.39$\pm$0.07 & 102.8$\pm$2.9 \\
        \midrule
        \textbf{tdBN\textit{ + \boldmath{$A^2$}SG}} & \textbf{75.21$\pm$0.08} & \textbf{100.2$\pm$0.4} \\
        \textbf{RMP\textit{ + \boldmath{$A^2$}SG}}  & \textbf{75.25$\pm$0.03} & \textbf{102.3$\pm$1.1} \\
        \bottomrule
    \end{tabular}
    }}
\end{table}
\setlength{\textfloatsep}{0pt}

We evaluate whether $A^2SG$ can be seamlessly combined with diverse training methods and neuron models.
Tab.~\ref{tab:other_neuron} further confirms consistent gains for both LIF and PLIF neurons on CIFAR10 with VGG16, improving accuracy and reducing the number of spikes.
Tab.~\ref{tab:other_methods} shows that integrating $A^2SG$ with tdBN or RMP-loss improves accuracy on CIFAR100 with VGG16 while maintaining comparable spike counts.
Overall, these results indicate that $A^2SG$ is compatible with various neuron models and training methods.

\subsection{Quantification of Temporal Gradient Confusion}  \label{app:tgc_quant}
  We quantify temporal gradient confusion through TGC (Eq.~\ref{eq:TGC}), defined as the cosine similarity between gradients
   at adjacent timesteps.
  In STBP, the weight update aggregates gradient contributions across all timesteps (Eq.~\ref{eq:temporal_grad}); when TGC
  is low, gradients at different timesteps point in conflicting directions, causing partial cancellation in the aggregated
  update.
  Tab.~\ref{tab:tgc_quant} reports the average TGC across architectures and datasets.
  Across all settings, \textit{A$^2$SG} reliably improves TGC over the baseline, confirming that \textit{A$^2$SG}
  effectively mitigates temporal gradient confusion regardless of architecture and dataset.
  \begin{table}[h]
      \centering
      \caption{Average TGC across architectures and datasets.}
      \label{tab:tgc_quant}
      {\small
      \renewcommand{\arraystretch}{0.9}
      \setlength{\tabcolsep}{14pt}
      \begin{tabular}{lcc}
          \toprule
           & Baseline & \textit{A$^2$SG} \\
          \midrule
          VGG16 (CIFAR10)        & 0.272 & \textbf{0.823} \\
          ResNet19 (CIFAR10)     & 0.212 & \textbf{0.713} \\
          VGGSNN (CIFAR10-DVS)   & 0.274 & \textbf{0.814} \\
          \bottomrule
      \end{tabular}}
  \end{table}

\subsection{Noise Robustness}

\begin{table}[t]
    \centering
    \caption{Results of accuracy and spike count under different deletion rates on CIFAR10 with VGG16.\\
    }
    \label{tab:noise_robust}
    {\small  
    \renewcommand{\arraystretch}{0.8} 
    \resizebox{0.5\linewidth}{!}{
        \begin{tabular}{ccll}
            \toprule
            \multirow{2}{*}{Methods} & \multirow{2}{*}{Ratio(\%)}& \multirow{2}{*}{Acc. (\%)} & \# of Spikes\\
            &  &  & ($\times10^3$)\\
            \midrule
     	    \multirow{3}{*}{\begin{tabular}{@{}c@{}}\textit{BOX} \\ (Baseline)\end{tabular}}
            & 0 & 94.84 (-0.00) & 94 (-0.0\%) \\
     	    & 10 & 91.09 (-3.21) & 88 (-6.0\%) \\
     	    & 20 & 69.08 (-25.22) & 83 (-11.7\%) \\ 
            \cmidrule{1-4} 
     	    \multirow{3}{*}{\begin{tabular}{@{}c@{}}\textbf{\textit{\boldmath{$A^2$}SG}} \\ \textbf{(Ours)}\end{tabular}}
            & 0 & 95.29 (-0.00) & 89 (-0.0\%) \\
     	    & 10 & \textbf{92.59 (-2.81)} & \textbf{84 (-5.6\%)} \\
     	    & 20 & \textbf{69.08 (-24.89)} & \textbf{81 (-9.0\%)} \\
            \bottomrule
        \end{tabular}%
    }}
\end{table}

\begin{figure}[t]
    \centering
    \includegraphics[width=0.5\linewidth]{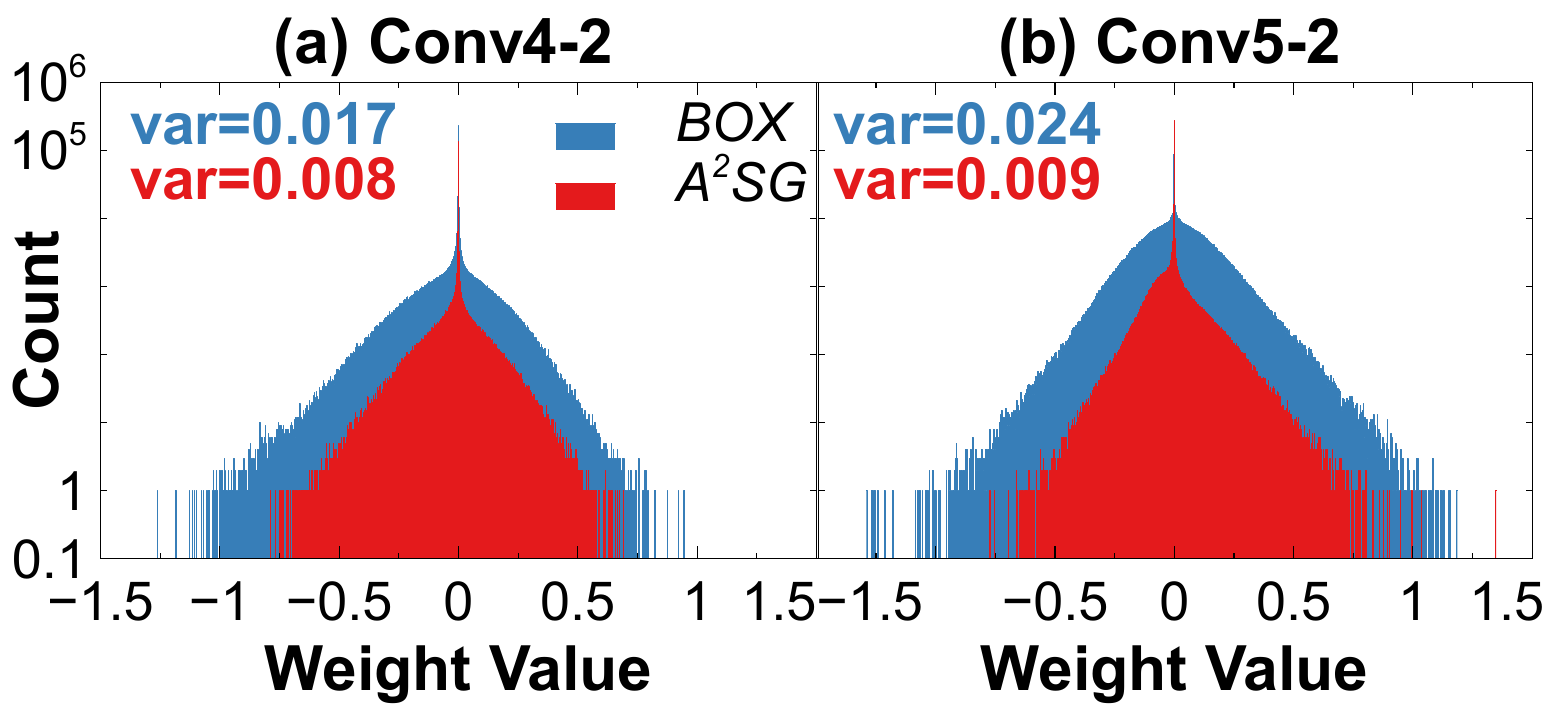}
    \caption{Weight distribution of \textit{BOX} and \textit{A$^2$SG}. (a) and (b) are Conv4-2 and Conv5-2 layers.}
    \label{fig:noise}
\end{figure}
Tab.~\ref{tab:noise_robust} reports accuracy and spike counts under deletion noise, where a fixed fraction of spikes is removed from each layer.
Overall, the results indicate that $A^2SG$ is less sensitive to missing spike events, yielding improved robustness under noisy spike observations.
In addition, Fig.~\ref{fig:noise} illustrates the weight distributions, where $A^2SG$ shows lower variance than $BOX$.
This is consistent with the improved generalization and robustness observed in the experiments.

\end{document}